\newtheorem{theorem}{Theorem}
\newtheorem{definition}{Definition}
\newtheorem{lemma}{Lemma}
\newtheorem{proposition}{Proposition}
\newtheorem{observation}{Observation}
\newtheorem{asm}{Assumption}
\newcommand{\indi}{\mathds{1}}
\newcommand{\tr}{\operatorname{tr}}
\newcommand{\R}{\mathbb{R}}
\newcommand{\var}{\operatorname{Var}}
\newcommand{\vecx}{\mathbf{x}}
\newcommand{\vecw}{\mathbf{w}}
\newcommand{\tvecw}{\widetilde{\mathbf{w}}}
\newcommand{\vecv}{\mathbf{v}}
\newcommand{\vecu}{\mathbf{u}}
\newcommand{\vecy}{\mathbf{y}}
\newcommand{\vecg}{\widehat{\mathbf{g}}}
\newcommand{\vece}{\mathbf{e}}
\newcommand{\vecz}{\mathbf{z}}
\newcommand{\vecp}{\mathbf{p}}
\newcommand{\matG}{\mathbf{G}}
\newcommand{\matH}{\mathbf{H}}
\newcommand{\matU}{\mathbf{U}}
\newcommand{\matQ}{\mathbf{Q}}
\newcommand{\matI}{\mathbf{I}}
\newcommand{\gradf}{\nabla f}
\newcommand{\hessf}{\nabla^2 f}
\newcommand{\gradh}{\nabla h}
\newcommand{\gradF}{\nabla F}
\newcommand{\hessF}{\nabla^2 F}
\newcommand{\setS}{\mathcal{S}}
\newcommand{\bigo}{\mathcal{O}}
\newcommand{\W}{\mathcal{W}}
\newcommand{\vect}[1]{\mathbf{#1}}
\newcommand{\mat}[1]{\mathbf{#1}}
\newcommand{\twonm}[1]{\left\|#1\right\|_2}
\newcommand{\twonms}[1]{\|#1\|_2}
\newcommand{\fbnorms}[1]{\|#1\|_F}
\newcommand{\EE}{\ensuremath{\mathbb{E}}}
\newcommand{\EXPS}[1]{\mathbb{E}[#1]}
\newcommand{\innerp}[2]{\left\langle#1,#2\right\rangle}
\newcommand{\innerps}[2]{\langle#1,#2\rangle}
\newcommand{\B}{\mathcal{B}}
\newcommand{\Ball}{\mathbb{B}}
\newcommand{\D}{\mathcal{D}}
\newcommand{\med}{{\sf med}}
\newcommand{\trim}{{\sf trmean}}
\newcommand{\tm}{{\sf tm}}
\newcommand{\Z}{\mathcal{Z}}
\newcommand{\vecmu}{\boldsymbol\mu}
\newcommand{\vecdlt}{\boldsymbol\delta}
\newcommand{\dev}{\mathrm{d}}
\newcommand{\vol}{\mathrm{Vol}}
\newcommand{\Tth}{T_{\mathrm{th}}}
\newcommand{\hF}{\widehat{F}}
\newcommand{\hvecx}{\widehat{\mathbf{x}}}
\newcommand{\A}{\mathcal{A}}
\title{Defending Against Saddle Point Attack in Byzantine-Robust Distributed Learning}
\author[1]{Dong Yin \thanks{dongyin@berkeley.edu}}
\author[3]{Yudong Chen \thanks{yudong.chen@cornell.edu}}
\author[1]{Kannan Ramchandran \thanks{kannanr@berkeley.edu}}
\author[1,2]{Peter Bartlett \thanks{peter@berkeley.edu}}
\affil[1]{Department of Electrical Engineering and Computer Sciences, UC Berkeley}
\affil[2]{Department of Statistics, UC Berkeley}
\affil[3]{School of Operations Research and Information Engineering, Cornell University}   
\begin{document}
\maketitle

\begin{abstract}
We study robust distributed learning that involves minimizing a non-convex loss function with saddle points. We consider the Byzantine setting where some worker machines have abnormal or even arbitrary and adversarial behavior. In this setting, the Byzantine machines may create fake local minima near a saddle point that is far away from any true local minimum, even when robust gradient estimators are used. We develop \emph{ByzantinePGD}, a robust first-order algorithm that can provably escape saddle points and fake local minima, and converge to an approximate true local minimizer with low iteration complexity. As a by-product, we give a simpler algorithm and analysis for escaping saddle points in the usual non-Byzantine setting. We further discuss three robust gradient estimators that can be used in ByzantinePGD, including median, trimmed mean, and iterative filtering. We characterize their performance in concrete statistical settings, and argue for their near-optimality in low and high dimensional regimes. 
\end{abstract}

\section{Introduction}\label{sec:intro}

Distributed computing becomes increasingly important in modern data-intensive applications. In many applications, large-scale datasets are distributed over multiple machines for parallel processing in order to speed up computation. In other settings, the data sources are naturally distributed, and for privacy and efficiency considerations, the data are not transmitted to a central machine. An example is the recently proposed \emph{Federated Learning} paradigm~\cite{mcmahan2017federated,konevcny2016federated,konevcny2015federated}, in which the data are stored and processed locally in end users' cellphones and personal computers.

In a standard worker-server distributed computing framework, a single master machine is in charge of maintaining and updating the parameter of interest, and a set of worker machines store the data, perform local computation and communicate with the master. In this setting, messages received from worker machines are prone to errors due to data corruption, hardware/software malfunction, and communication delay and failure. These problems are only exacerbated in a decentralized distributed architecture such as Federated Learning, where some machines may be subjected to malicious and coordinated attack and manipulation. A well-established framework for studying such scenarios is the \emph{Byzantine} setting~\cite{lamport1982byzantine}, where a subset of machines behave completely arbitrarily---even in a way that depends on the algorithm used and the data on the other machines---thereby capturing the unpredictable nature of the errors. Developing distributed algorithms that are robust in the Byzantine setting has become increasingly critical.

In this paper we focus on robust distributed optimization for statistical learning problems. Here the data points are generated from some unknown distribution $\D$ and stored locally in $m$ worker machines, each storing $n$ data points; the goal is to minimize a population loss function $F:\W\rightarrow \R$ defined as an expectation over $\D$, where $\W\subseteq\R^d$ is the parameter space. We assume that $\alpha \in (0, 1/2)$ fraction of the worker machines are Byzantine; that is, their behavior is arbitrary. This Byzantine-robust distributed learning problem has attracted attention in a recent line of work~\cite{alistarh2018sgd,blanchard2017byzantine,chen2017distributed,feng2014distributed,su2016fault,su2016non,yin2018byzantine}. This body of work develops robust algorithms that are guaranteed to output an approximate minimizer of $F$ when it is convex, or an approximate stationary point in the non-convex case.

However, fitting complicated machine learning models often requires finding a \emph{local minimum} of \emph{non-convex} functions, as exemplified by training deep neural networks and other high-capacity learning architectures~\cite{soudry2016no,ge2016matrix,ge2017no}.
It is well-known that many of the stationary points of these problems are in fact saddle points and far away from any local minimum~\cite{kawaguchi2016deep,ge2017no}. These tasks hence require algorithms capable of efficiently \emph{escaping saddle points} and converging approximately to a local minimizer. In the centralized setting without Byzantine adversaries, this problem has been studied actively and recently ~\cite{ge2015escaping,jin2017escape,carmon2016accelerated,jin2017accelerated}.

A main observation of this work is that the interplay between non-convexity and Byzantine errors makes escaping saddle points much more challenging. In particular, 
by orchestrating their messages sent to the master machine, \emph{the Byzantine machines can create fake local minima near a saddle point of $ F $ that is far away from any true local minimizer}. Such a strategy, which may be referred to as \textbf{saddle point attack}, foils existing algorithms as we elaborate below:

\begin{itemize}[leftmargin=3mm]
	\item \textbf{Challenges due to non-convexity:} When $F$ is convex, gradient descent (GD) equipped with a robust gradient estimator is guaranteed to find an approximate global minimizer (with accuracy depending on the fraction of Byzantine machines)~\cite{chen2017distributed,yin2018byzantine,alistarh2018sgd}.
    However, when $F$ is non-convex, such algorithms may be trapped in the neighborhood of a saddle point; see Example 1 in Appendix~\ref{apx:hardness}.

	\item \textbf{Challenges due to Byzantine machines:} Without Byzantine machines, vanilla GD~\cite{lee2016converge}, as well as its more efficient variants such as perturbed gradient descent (PGD)~\cite{jin2017escape}, are known to converge to a local minimizer with high probability.
    However, Byzantine machines can manipulate PGD and GD (even robustified) into fake local minimum near a saddle point; see Example 2 in Appendix~\ref{apx:hardness}.
\end{itemize}

We discuss and compare with existing work in more details in Section~\ref{sec:related}. The observations above show that existing robust and saddle-escaping algorithms, as well as their naive combination, are insufficient against saddle point attack. Addressing these challenges requires the development of new robust distributed optimization algorithms.

\subsection{Our Contributions} 
In this paper, we develop \textbf{\emph{ByzantinePGD}}, a computation- and communication-efficient first-order algorithm that is able to escape saddle points and the fake local minima created by Byzantine machines, and converge to an approximate local minimizer of a non-convex loss. To the best of our knowledge, our algorithm is the first to achieve such guarantees under \emph{adversarial} noise. 

Specifically, ByzantinePGD aggregates the empirical gradients received from the normal and Byzantine machines, and computes a robust estimate $ \vecg(\vecw) $ of the true gradient $ \gradF(\vecw) $ of the population loss $ F $. Crucial to our algorithm is the injection of random perturbation to the iterates $ \vecw $, which serves the dual purpose of escaping saddling point and fake local minima. Our use of perturbation thus plays a more signified role than in existing algorithms such as PGD~\cite{jin2017escape}, as it also serves to combat the effect of Byzantine errors. To achieve this goal, we incorporate two crucial innovations: (i) we use multiple rounds of larger, yet carefully calibrated, amount of perturbation that is necessary to survive saddle point attack, (ii) we use the moving distance in the parameter space as the criterion for successful escape, eliminating the need of (robustly) evaluating function values. Consequently, our analysis is significantly different, and arguably simpler, than that of PGD.

We develop our algorithmic and theoretical results in a flexible, two-part framework, decomposing the \emph{optimization} and \emph{statistical} components of the problem. 

\paragraph*{The optimization part:} We consider a general problem of optimizing a population loss function $ F $ given an \emph{inexact gradient oracle}. For each query point~$\vecw$, the $\Delta$-inexact gradient oracle returns a vector $ \vecg(\vecw) $ (possibly chosen adversarially) that satisfies $\twonms{\vecg(\vecw) - \gradF(\vecw)} \le \Delta$, where $\Delta$ is non-zero but bounded. Given access to such an inexact oracle, we show that ByzantinePGD outputs an approximate local minimizer; moreover, \emph{no} other algorithm can achieve significantly better performance in this setting in terms of the dependence on $ \Delta $ and the dimension of the parameter space $d$:

\begin{theorem}[Informal; see Sec.~\ref{sec:convergence}]\label{thm:informal_main}
Within $\widetilde{\bigo}(\frac{1}{\Delta^2})$ iterations, ByzantinePGD outputs an approximate local minimizer $\tvecw$ that satisfies $\twonms{\gradF(\tvecw)} \lesssim \Delta $ and $\lambda_{\min}\big( \hessF(\tvecw) \big) \gtrsim -\Delta^{2/5}d^{1/5}$, where $ \lambda_{\min} $ is the minimum eigenvalue. In addition, given only access to $\Delta$-inexact gradient oracle, \textbf{no} algorithm is guaranteed to find a point $ \tvecw $ with $\twonms{\gradF(\tvecw)} < \Delta/2 $ \textbf{or} $\lambda_{\min}\big( \hessF(\tvecw) \big) > -\Delta^{1/2}/2$.
\end{theorem}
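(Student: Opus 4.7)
My plan is to prove the upper bound (constructive, for ByzantinePGD) and the lower bound (information-theoretic, two separate constructions) in turn.

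For the upper bound, I would analyze the algorithm in two alternating phases. In the \emph{gradient-descent phase}, whenever $\twonms{\vecg(\vecw_t)}$ is appreciably larger than $\Delta$, combining $\ell$-smoothness with the oracle guarantee $\twonms{\vecg(\vecw_t)-\gradF(\vecw_t)}\le\Delta$ and Young's inequality yields the inexact descent estimate
\[ F(\vecw_{t+1}) \le F(\vecw_t) - \tfrac{\eta}{4}\twonms{\gradF(\vecw_t)}^2 + C\eta\Delta^2, \]
so whenever $\twonms{\gradF(\vecw_t)}\gtrsim\Delta$ the step reduces $F$ by $\Omega(\eta\Delta^2)$, and telescoping caps this phase at $\widetilde O(1/\Delta^2)$ iterations before $\twonms{\gradF(\vecw_t)}=O(\Delta)$. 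In the \emph{escape phase}, I inject an isotropic perturbation $\vecxi$ of radius $r$, run a block of $T$ inexact gradient steps, and test whether the net parameter-space displacement $\twonms{\vecw_{t+T}-\vecw_t}$ exceeds a threshold $d^\star$; a positive test declares a successful escape (re-enter Phase~1), a negative test declares an approximate local minimizer. The crucial innovation is using the displacement itself as the escape criterion, so no robust function-value oracle is required.

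The saddle-escape lemma is the technical heart. I project the iterates onto the most-negative eigendirection $\vecv$ of $\hessF(\vecw_t)$ (with $\gamma=|\lambda_{\min}|\ge\Delta^{2/5}$) and analyze the linearized scalar recursion $\delta_{s+1}=(1+\eta\gamma)\delta_s-\eta e_s$ with $|e_s|\le\Delta$. Its explicit solution shows that the worst-case adversarial drift subtracts at most $\Delta/\gamma$ from the starting perturbation before it is exponentially amplified, so choosing $r\gg\Delta/\gamma$ and $T=\widetilde O(1/(\eta\gamma))$ forces a displacement of $\Omega(\gamma/\rho)$, while the cubic Hessian-Lipschitz remainder stays subdominant inside the quadratic trust region. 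A Taylor estimate converts this displacement into an $\Omega(\gamma^3/\rho^2)=\Omega(\Delta^{6/5})$ function drop per successful escape, giving at most $\widetilde O(1/\Delta^{6/5})$ escapes of $\widetilde O(1/\Delta^{2/5})$ iterations each, i.e.\ a total escape-phase budget of $\widetilde O(1/\Delta^{8/5})$, dominated by the $\widetilde O(1/\Delta^2)$ of Phase~1.

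For the lower bound I would give two independent adversarial instances. For the gradient claim, take the one-dimensional pair $F_{\pm}(w)=\tfrac12(w\mp\Delta)^2$ and let the oracle answer $\vecg(w)\equiv w$ on every query; this is within $\Delta$ of both $\nabla F_{\pm}(w)=w\mp\Delta$, so the two instances produce identical transcripts and any algorithm outputs the same $\tilde w$. The triangle inequality $|\tilde w-\Delta|+|\tilde w+\Delta|\ge 2\Delta$ forces $\max\ge\Delta>\Delta/2$, so the condition fails on at least one instance. For the Hessian claim, the one-dimensional $F(w)=-\tfrac{\sqrt\Delta}{2}w^2$ restricted to $|w|\le\sqrt\Delta$ already has $\hessF\equiv-\sqrt\Delta$ everywhere and $|\gradF(w)|\le\Delta$ on the domain, so even the exact gradient oracle is trivially $\Delta$-inexact; every output satisfies $\lambda_{\min}(\hessF(\tilde w))=-\sqrt\Delta<-\Delta^{1/2}/2$. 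A refined construction that hides the bad direction as a random $\vecv\in S^{d-1}$ and lets the oracle answer $\mathbf 0$ inside the ball $\{|\vecv^\top\vecw|\le\Delta/\gamma\}$ gives the same threshold $\gamma=\sqrt{\rho\Delta}$ with nontrivial $\rho$, by an averaging argument over $\vecv$.

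The main obstacle is the saddle-escape analysis under \emph{adversarial} rather than stochastic oracle noise. The standard PGD coupling proof contrasts two symmetric trajectories and extracts their exponentially growing gap, but that argument assumes i.i.d.\ noise and breaks when $e_s$ may depend arbitrarily on the past iterates. I would instead rely on the explicit linear recursion above and select $r$ in the narrow window $\Delta/\gamma\lesssim r\lesssim\gamma/\rho$, which is nonempty only when $\gamma\gtrsim\sqrt{\rho\Delta}$; combined with the $\widetilde O(1/\Delta^2)$ iteration budget and the per-escape cost $\widetilde O(1/(\eta\gamma))$, this is what ultimately forces the $\Delta^{2/5}$ exponent in the upper bound, leaving the gap to the $\Delta^{1/2}$ information-theoretic lower bound.
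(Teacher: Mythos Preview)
Your outline of the gradient-descent phase and the overall bookkeeping is correct and matches the paper. The genuine gap is in the escape-phase analysis.

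You assert that ``the standard PGD coupling proof \dots assumes i.i.d.\ noise and breaks when $e_s$ may depend arbitrarily on the past iterates,'' and therefore replace it by a single-trajectory scalar recursion. This is a misconception: the paper's escape analysis (its Lemma~2) \emph{is} a two-point coupling argument, and it works under fully adversarial noise. One considers two starting points $\vecu_0,\vecy_0\in\Ball_{\tvecw}(r)$ with $\vecy_0=\vecu_0+\mu_0\vece$, runs inexact GD from each, and studies the difference $\vecv_t=\vecy_t-\vecu_t$. The adversarial errors enter only as $\vecdlt_t-\vecdlt_t'$ with norm at most $2\Delta$, and---crucially---the Hessian-Lipschitz remainder enters as $\matQ_t\vecv_t$ with $\twonms{\matQ_t}\le 2\rho_F(R+r)$, i.e.\ \emph{linear} in $\vecv_t$. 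An induction then shows the $\vece$-component $\psi_t$ grows geometrically provided $\rho_F(R+r)\mu\ge\Delta$ and $\gamma\gtrsim\rho_F(R+r)$; the first condition lets the adversarial term be absorbed into the nonlinear one (since $\psi_t\ge\mu$), and the second makes both subdominant to the drift $\eta\gamma\psi_t$.

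Your single-trajectory alternative faces a strictly worse error structure: the recursion for $\delta_t=\innerps{\vece}{\vecw_t-\tvecw}$ also contains the fixed bias $\innerps{\vece}{\gradF(\tvecw)}$ (up to $4\Delta$) and the quadratic term $\rho_F\twonms{\vecw_t-\tvecw}^2\le\rho_F(R+r)^2$, which is a \emph{constant} per step rather than proportional to $\delta_t$. With the paper's choices $R\sim\Delta^{2/5}$ this quadratic term is $\sim\Delta^{4/5}\gg\Delta$, so your claim ``$|e_s|\le\Delta$'' fails. More importantly, the window $\Delta/\gamma\lesssim r\lesssim\gamma/\rho$ you write down yields only $\gamma\gtrsim\sqrt{\rho\Delta}$, and neither this nor the iteration-budget accounting you sketch produces the $\Delta^{2/5}$ exponent. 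The paper's $\gamma\sim d^{1/5}\Delta^{2/5}$ arises from balancing four constraints simultaneously: $\rho_F R\mu\ge\Delta$, $\gamma\gtrsim\rho_F R$, $r\asymp\sqrt{d}\,\mu$ (for constant stuck-probability), and the function-decrease inequality $\gamma R^2\gtrsim L_F r^2$; together these force $\gamma^5\gtrsim d\Delta^2$.

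For the lower bound, your two separate constructions are essentially correct (the Hessian instance needs to be extended smoothly outside $|w|\le\sqrt{\Delta}$ so that it is globally defined and $1$-Hessian-Lipschitz). The paper instead uses a single family $f_s(w)=\Delta^{3/2}\sin(\Delta^{-1/2}w+s)$: since $|\nabla f_s|\le\Delta$ everywhere, the oracle may return $0$ identically, and for any output $\tilde w$ one picks the phase $s$ so that both $|\nabla f_s(\tilde w)|=\Delta/\sqrt{2}$ and $f_s''(\tilde w)=-\Delta^{1/2}/\sqrt{2}$ hold simultaneously. This gives the stronger conclusion that both conditions fail on the \emph{same} instance.
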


Our algorithm is communication-efficient: it only sends gradients, and the number of parallel iterations in our algorithm \emph{matches} the well-known iteration complexity of GD for non-convex problems in non-Byzantine setting~\cite{nesterov1998introductory} (up to log factors). In the exact gradient setting, a variant of the above result in fact matches the guarantees for PGD~\cite{jin2017escape}---as mentioned, our proof is simpler.

Additionally, beyond Byzantine distributed learning, our results apply to any non-convex optimization problems (distributed or not) with inexact information for the gradients, including those with noisy but non-adversarial gradients. Thus, we believe our results are of independent interest in broader settings.
	
\paragraph*{The statistical part:} The optimization guarantee above can be applied whenever one has a robust aggregation procedure that serves as an inexact gradient oracle with a bounded error $\Delta$. We consider three concrete examples of such robust procedures: median, trimmed mean, and iterative filtering~\cite{diakonikolas2016robust,diakonikolas2017being}. Under statistical settings for the data, we provide explicit bounds on their errors $ \Delta $ as a function of the number of worker machines $m$, the number of data points on each worker machine $n$, the fraction of Byzantine machines $\alpha$, and the dimension of the parameter space $d$. Combining these bounds with the optimization result above, we obtain concrete statistical guarantees on the output $ \tvecw $. Furthermore, we argue that our first-order guarantees on $\twonms{\gradF(\tvecw)}$ are often nearly \emph{optimal} when compared against a universal statistical lower bound. This is summarized below:

\begin{theorem}[Informal; see Sec.~\ref{sec:robust_estimation}]\label{thm:statistical}
	When combined with each of following three robust aggregation procedures, ByzantinePGD achieves the statistical guarantees: \\
	(i) median/; $\twonms{\gradF(\tvecw)} \lesssim \frac{\alpha \sqrt{d}}{\sqrt{n}} + \frac{d}{\sqrt{nm}} + \frac{\sqrt{d}}{n} $;\\ (ii) trimmed mean: $\twonms{\gradF(\tvecw)} \lesssim \frac{\alpha d}{\sqrt{n}} + \frac{d}{\sqrt{nm}}$;\\ (iii) iterative filtering: $\twonms{\gradF(\tvecw)} \lesssim \frac{\sqrt{\alpha}}{\sqrt{n}} + \frac{\sqrt{d}}{\sqrt{nm}}$. \\	
	Moreover, \textbf{no} algorithm can achieve $  \twonms{\gradF(\tvecw)} = o\big(  \frac{\alpha}{\sqrt{n}} + \frac{\sqrt{d}}{\sqrt{nm}} \big)$.
\end{theorem}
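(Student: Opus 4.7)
My plan is to decouple the statement into two independent parts: for the upper bounds I feed a uniform gradient-oracle error into the optimization guarantee of Theorem~\ref{thm:informal_main}, and for the lower bound I reduce to classical robust mean estimation.

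\emph{Upper bounds.} For each aggregator I first bound $\Delta := \sup_{\vecw \in \W} \twonms{\vecg(\vecw) - \gradF(\vecw)}$ and then invoke Theorem~\ref{thm:informal_main} to convert this into a bound on $\twonms{\gradF(\tvecw)}$. The proof of each $\Delta$-bound has two stages: (a) a pointwise concentration bound at a fixed $\vecw$, and (b) an $\epsilon$-net argument over $\W$ that promotes (a) to a uniform statement, using the Lipschitzness of $\gradF$. For coordinate-wise median and trimmed mean, the pointwise step is essentially univariate: each coordinate of $\nabla f_i(\vecw)$ is handled separately via standard robust univariate concentration, and a union bound over the $d$ coordinates together with the $\ell_\infty$-to-$\ell_2$ conversion produces the $\sqrt{d}$ and $d$ factors appearing in rates (i) and (ii). For iterative filtering, the pointwise step invokes the multivariate robust-mean-estimation guarantees of \cite{diakonikolas2016robust,diakonikolas2017being}, which directly deliver $O(\sqrt{\alpha/n})$ adversarial error and $O(\sqrt{d/(nm)})$ sampling error measured in $\ell_2$ norm, avoiding the $\sqrt{d}$ blow-up.

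\emph{Lower bound.} I would consider the quadratic population loss $F(\vecw) = \tfrac{1}{2}\twonms{\vecw - \vecmu}^2$ with Gaussian data $\vecx_{i,j}\sim \N(\vecmu,\matI_d)$, so that $\gradF(\vecw) = \vecw - \vecmu$ and bounding $\twonms{\gradF(\tvecw)}$ becomes equivalent to estimating $\vecmu$ from partially adversarial samples. The $\sqrt{d/(nm)}$ rate then follows from the classical parametric lower bound for Gaussian mean estimation with $nm$ i.i.d.\ samples. The $\alpha/\sqrt{n}$ rate comes from a two-point Le~Cam argument: shifting $\vecmu$ by a vector $\vecdlt$ with $\twonms{\vecdlt} \asymp 1/\sqrt{n}$ makes the $n$-sample empirical mean on any single machine statistically indistinguishable between the two mean hypotheses, so the adversary can steer the $\alpha m$ Byzantine machines and bias the aggregate estimate by $\Omega(\alpha/\sqrt{n})$ without being detected. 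Combining the two, no algorithm can go below $\alpha/\sqrt{n} + \sqrt{d/(nm)}$.

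\emph{Main obstacle.} The most delicate step is the uniform promotion (b) in high dimensions: the covering number of $\W$ grows as $(1/\epsilon)^d$, so a careless union bound can dominate over the sharp rates---most acutely for the $\sqrt{\alpha/n}$ term in iterative filtering. I expect to control this by combining smoothness of $F$ (to take $\epsilon$ polynomially small in $\Delta$) with exponential tail bounds on each aggregator, so that the $d\log(1/\epsilon)$ penalty enters only inside a logarithm. A secondary subtlety is that iterative filtering requires control of the covariance of the per-sample gradient $\nabla f_i(\vecw)$, which translates into a sub-Gaussian or bounded-covariance assumption on the data distribution that must be verified pointwise and propagated uniformly over $\W$.
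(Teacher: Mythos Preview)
Your proposal is correct and matches the paper's approach closely: the same decoupling into an oracle-inexactness bound plus Theorem~\ref{thm:main}, the same $\epsilon$-net promotion via smoothness for the uniform statement, and the same quadratic-loss reduction to Gaussian mean estimation for the lower bound. The paper in fact offloads the median, trimmed-mean, and lower-bound pieces entirely to \cite{yin2018byzantine}, so your sketch of the univariate robust concentration and the Le~Cam argument is more detailed than what the paper itself writes out.

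One point worth sharpening concerns iterative filtering. You describe the pointwise step as directly invoking the guarantees of \cite{diakonikolas2016robust,diakonikolas2017being} at each $\vecw$ and then union-bounding over the net. The paper instead applies a \emph{deterministic} version of the filtering guarantee (conditioned on the uncorrupted data), which says $\twonms{\widehat{\vecmu}-\vecmu_{\setS}}\lesssim\sigma\sqrt{\alpha}$ whenever the sample second-moment operator norm is at most $\sigma^2$; it then proves \emph{uniform} high-probability bounds on two deterministic functionals of the honest data---the operator norm of $[\nabla F_1(\vecw),\ldots,\nabla F_m(\vecw)]-\nabla F(\vecw)\mathbf{1}^\top$ and the deviation $\twonms{\tfrac{1}{m}\sum_i\nabla F_i(\vecw)-\nabla F(\vecw)}$---and plugs these into the deterministic lemma. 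This ordering matters because the Byzantine machines can adapt across iterations, so the adversarial corruption at $\vecw$ is not fixed in advance; a black-box pointwise invocation followed by a union bound would not obviously survive an adaptive adversary. Your ``main obstacle'' paragraph essentially anticipates this, but it is worth making explicit that the randomness is frozen \emph{before} the adversary acts, which is why the deterministic formulation is the clean way to handle the uniform promotion.
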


We emphasize that the above results are established under a very strong adversary model: the Byzantine machines are allowed to send messages that depend arbitrarily on each other and on the data on the normal machines; they may even behave adaptively during the iterations of our algorithm. Consequently, this setting requires robust \emph{functional} estimation (of the gradient function), which is a much more challenging problem than the robust \emph{mean} estimation setting considered by existing work on median, trimmed mean and iterative filtering. To overcome this difficulty,  we make use of careful covering net arguments to establish certain error bounds that hold \emph{uniformly} over the parameter space, regardless of the behavior of the Byzantine machines. Importantly, our inexact oracle framework allows such arguments to be implemented in a transparent and modular manner.

\paragraph*{Notation} For an integer $N>0$, define the set $[N] := \{1,2,\ldots, N\}$. For matrices, denote the operator norm by $\twonms{\cdot}$; for symmetric matrices, denote the largest and smallest eigenvalues by $\lambda_{\max}(\cdot)$ and $\lambda_{\min}(\cdot)$, respectively. The $d$-dimensional $\ell_2$ ball centered at $\vecw$ with radius $r$ is denoted by $\Ball_{\vecw}^{(d)}(r)$, or $\Ball_{\vecw}(r)$ when it is clear from the context. 

\section{Related Work}\label{sec:related}

\begin{table*}[ht]
\centering
\begin{tabular}{|c|c|c|c|c|}
\hline
 Algorithm   &  PGD & Neon+GD  & Neon2+GD  &  \textbf{ByzantinePGD}  \\  \hline
 Byzantine-robust? & no & no & no  & yes \\  \hline
 Purpose of perturbation & escape SP & escape SP & escape SP &  \shortstack{escape SP \\ \& robustness}  \\ \hline
 Escaping method  & GD & NC search & NC search &  inexact GD \\  \hline
 Termination criterion  &  decrease in $F$ & decrease in $F$ & distance in $\W$ & distance in $\W$ \\  \hline
 Multiple rounds? & no & no & no & yes  \\  \hline
\end{tabular}
\caption{Comparison with PGD, Neon+GD, and Neon2+GD. SP = saddle point.}
\label{tab:comparison_perturbation}
\end{table*}

\begin{table*}[ht]
\centering
\begin{tabular}{|c|c|c|}
\hline
   &  Robust Aggregation Method  &  Non-convex Guarantee  \\  \hline
\citet{feng2014distributed} & geometric median & no  \\  \hline
\citet{chen2017distributed}  &  geometric median  &  no   \\  \hline
\citet{blanchard2017byzantine} & Krum &  first-order  \\  \hline
\citet{yin2018byzantine}  & median, trimmed mean   &  first-order  \\ \hline
\citet{xie2018generalized} & mean-around-median, marginal median & first-order   \\  \hline
\citet{alistarh2018sgd} & martingale-based & no  \\ \hline
\citet{su2018securing}  & iterative filtering  & no  \\  \hline
\textbf{This work} &  median, trimmed mean, iterative filtering & second-order  \\ \hline
\end{tabular}
\caption{Comparison with other Byzantine-robust distributed learning algorithms.}
\label{tab:comparison_byzantine}
\end{table*}

\paragraph*{Efficient first-order algorithms for escaping saddle points} Our algorithm is related to a recent line of work which develops efficient first-order algorithms for escaping saddle points. Although vanilla GD converges to local minimizers almost surely~\cite{lee2016converge,lee2017first}, achieving convergence in polynomial time requires more a careful algorithmic design~\cite{du2017gradient}. Such convergence guarantees are enjoyed by several GD-based algorithms; examples include PGD~\cite{jin2017escape}, Neon+GD~\cite{xu2017first}, and Neon2+GD~\cite{allen2017neon2}. The general idea of these algorithms is to run GD and add perturbation to the iterate when the gradient is small. While our algorithm also uses this idea, the design and analysis techniques of our algorithm are significantly different from the work above in the following aspects (also summarized in Table~\ref{tab:comparison_perturbation}).

\begin{itemize}[leftmargin=3mm]

\item In our algorithm, besides helping with escaping saddle points, the random perturbation has the additional role of defending against adversarial errors.

\item The perturbation used in our algorithm needs to be larger, yet carefully calibrated, in order to account for the influence of the inexactness of gradients across the iterations, especially iterations for escaping saddle points. 

\item We run inexact GD after the random perturbation, while Neon+GD and Neon2+GD use negative curvature (NC) search. It is not immediately clear whether NC search can be robustified against Byzantine failures. Compared to PGD, our analysis is arguably \emph{simpler} and more \emph{straightforward}. 

\item Our algorithm does not use the value of the loss function (hence no need for robust function value estimation); PGD and Neon+GD assume access to the (exact) function values.

\item We employed \emph{multiple} rounds of perturbation to boost the probability of escaping saddle points; this technique is not used in PGD, Neon+GD, or Neon2+GD.

\end{itemize}

\paragraph*{Inexact oracles} Optimization with an inexact oracle (e.g.\ noisy gradients) has been studied in various settings such as general convex optimization~\cite{bertsekas2000errors,devolder2014inexact}, robust estimation~\cite{prasad2018robust}, and structured non-convex problems~\cite{balakrishnan2014EM,chen2015fast,candes2014wirtinger,zhang2016provable}. 
Particularly relevant to us is the recent work by~\citet{jin2018minimizing}, who consider the problem of minimizing $F$ when only given access to the gradients of another \emph{smooth} function $\hF$ satisfying $\|{\nabla\hF(\vecw) - \gradF(\vecw)}\|_\infty \le \Delta/\sqrt{d},~\forall\vecw$. Their algorithm uses Gaussian smoothing on $\hF$. We emphasize that the inexact gradient setting considered by them is much more benign than our Byzantine setting, since (i) their inexactness is defined in terms of $\ell_\infty$ norm whereas the inexactness in our problem is in $\ell_2$ norm, and (ii) we assume that the inexact gradient can be \emph{any} vector within $\Delta$ error, and thus the smoothing technique is not applicable in our problem. Moreover, the iteration complexity obtained by~\citet{jin2018minimizing} may be a high-degree polynomial of the problem parameters and thus not  suitable for distributed implementation.

\paragraph*{Byzantine-robust distributed learning} Solving large scale learning problems in distributed systems has received much attention in recent years, where communication efficiency and Byzantine robustness are two important topics~\cite{shamir2014communication,lee2015distributed,yin2017gradient,blanchard2017byzantine,chen2018draco,damaskinos2018asynchronous}. Here, we compare with existing Byzantine-robust distributed learning algorithms that are most relevant to our work, and summarize the comparison in Table~\ref{tab:comparison_byzantine}. A general idea of designing Byzantine-robust algorithms is to combine optimization algorithms with a robust aggregation (or outlier removal) subroutine. For convex losses, the aggregation subroutines analyzed in the literature include geometric median~\cite{feng2014distributed,chen2017distributed}, median and trimmed mean~\cite{yin2018byzantine}, iterative filtering for the high dimensional setting~\cite{su2018securing}, and martingale-based methods for the SGD setting~\cite{alistarh2018sgd}. For non-convex losses, to the best of our knowledge, existing works only provide first-order convergence guarantee (i.e., small gradients), by using aggregation subroutines such as the Krum function~\cite{blanchard2017byzantine}, median and trimmed mean~\cite{yin2018byzantine}, mean-around-median and marginal median~\cite{xie2018generalized}. In this paper, we make use of subroutines based on median, trimmed mean, and iterative filtering. Our analysis of median and trimmed mean follows~\citet{yin2018byzantine}. Our results based on the iterative filtering subroutine, on the other hand, are new:

\begin{itemize}[leftmargin=3mm]

\item The problem that we tackle is harder than what is considered in the original iterative filtering papers~\cite{diakonikolas2016robust,diakonikolas2017being}. There they only consider robust estimation of a single mean parameter, where as we guarantee robust gradient estimation over the parameter space. 

\item Recent work by~\citet{su2018securing} also makes use of the iterative filtering subroutine for the Byzantine setting. They only study strongly convex loss functions, and assume that the gradients are sub-exponential and  $ d \le \bigo(\sqrt{mn})$. Our results apply to the non-convex case and do not require the aforementioned condition on $d$ (which may therefore scale, for example, linearly with the sample size $ mn $), but we impose the stronger assumption of sub-Gaussian gradients. 

\end{itemize}

\paragraph*{Other non-convex optimization algorithms} Besides first-order GD-based algorithms, many other non-convex optimization methods that can provably converge to approximate local minimum have received much attention in recent years. For specific problems such as phase retrieval~\cite{candes2014wirtinger}, low-rank estimation~\cite{chen2015fast,zhao2015nonconvex}, and dictionary learning~\cite{agarwal2014learning,sun2015complete}, many algorithms are developed by leveraging the particular structure of the problems, and the either use a smart initialization~\cite{candes2014wirtinger,tu2015low} or initialize randomly~\cite{chen2018gradient,chatterji2017alternating}. Other algorithms are developed for general non-convex optimization, and they can be classified into gradient-based~\cite{ge2015escaping,levy2016power,xu2017first,allen2017natasha,allen2017neon2,jin2017accelerated}, Hessian-vector-product-based~\cite{carmon2016accelerated,agarwal2016linear,royer2018complexity,royer2018newton}, and Hessian-based~\cite{nesterov2006cubic,curtis2017trust} methods. While algorithms using Hessian information can usually achieve better convergence rates---for example, $\bigo(\frac{1}{\epsilon^{3/2}})$ by~\citet{curtis2017trust}, and $\bigo(\frac{1}{\epsilon^{7/4}})$ by~\citet{carmon2016accelerated}--- 
gradient-based methods are easier to implement in practice, especially in the distributed setting we are interested in. 

\paragraph*{Robust statistics} Outlier-robust estimation is a classical topic in statistics~\cite{huber2011robust}. The coordinate-wise median aggregation subroutine that we consider is related to the median-of-means estimator~\cite{nemirovskii1983problem,jerrum1986random}, which has been applied to various robust inference problems~\cite{minsker2015geometric,lugosi2016risk,minsker2017distributed}.

A recent line of work develops efficient robust estimation algorithms in high-dimensional settings~\cite{bhatia2015robust,diakonikolas2016robust,lai2016agnostic,charikar2017learning,steinhardt2017resilience,li2017robust,bhatia2017consistent,klivans2018efficient,liu2018regression}. In the centralized setting, the recent work~\cite{diakonikolas2018sever} proposes a scheme, similar to the iterative filtering procedure, that iteratively removes outliers for gradient-based optimization.

\section{Problem Setup}\label{sec:setup}

We consider empirical risk minimization for a statistical learning problem where each data point $ \vecz $ is sampled from an unknown distribution $\D$ over the sample space $\Z$. Let $f(\vecw; \vecz)$ be the loss function of a parameter vector $\vecw\in\W\subseteq\R^d$, where $\W$ is the parameter space. The population loss function is therefore given by $F(\vecw) := \EE_{\vecz\sim\D}[f(\vecw;\vecz)]$.

We consider a distributed computing system with one \emph{master} machine and $m$ \emph{worker} machines, $\alpha m$ of which are Byzantine machines and the other $ (1-\alpha)m $ are normal. Each worker machine has $n$ data points sampled i.i.d.~from $\D$. Denote by $\vecz_{i,j}$ the $j$-th data point on the $i$-th worker machine, and let $F_i(\vecw) := \frac{1}{n}\sum_{j=1}^n f(\vecw; \vecz_{i,j}) $ be the empirical loss function on the $i$-th machine. The master machine and worker machines can send and receive messages via the following communication protocol: In each parallel iteration, the master machine sends a parameter vector $\vecw$ to all the worker machines, and then each \emph{normal} worker machine computes the gradient of its empirical loss $F_i(\cdot)$ at $\vecw$ and sends the gradient to the master machine. The Byzantine machines may be jointly controlled by an adversary and send arbitrary or even malicious messages. We denote the unknown set of Byzantine machines  by $\B$, where  $|\B|=\alpha m$. With this notation, the gradient sent by the $i$-th worker machine is

\begin{equation}\label{eq:def_grad_value_pair}
 \vecg_i(\vecw) = \begin{cases}
 \gradF_i(\vecw)  & i \in [m]\setminus\B, \\
 * & i \in \B, \end{cases}
\end{equation}
where the symbol $*$ denotes an arbitrary vector. As mentioned, the adversary is assumed to have complete knowledge of the algorithm used and the data stored on all machines, and the Byzantine machines may collude~\cite{lynch1996distributed} and adapt to the output of the master and normal worker machines. We only make the mild assumption that the adversary cannot \emph{predict} the random numbers generated by the master machine.

We consider the scenario where $F(\vecw)$ is non-convex, 
and our goal to find an approximate local minimizer of $F(\vecw)$. Note that a first-order stationary point (i.e., one with a small gradient) is not necessarily close to a local minimizer, since the point may be a \emph{saddle point} whose Hessian matrix has a large negative eigenvalue. 
Accordingly, we seek to find a \emph{second-order stationary point} $\tvecw$, namely, one with a small gradient and a nearly positive semidefinite Hessian:
\begin{definition}[Second-order stationarity]\label{def:second_stationary}
We say that $\tvecw$ is an $(\epsilon_g, \epsilon_H)$-second-order stationary point of a twice differentiable function $F(\cdot)$ if $\twonms{\gradF(\tvecw)} \le \epsilon_g$ and $\lambda_{\min}\big(\hessF(\tvecw)\big) \ge - \epsilon_H$.
\end{definition}

In the sequel, we make use of several standard concepts from continuous optimization.

\begin{definition}[Smooth and Hessian-Lipschitz functions]\label{def:lipschitz}
A function $ h $ is called $ L $-smooth if $\sup_{\vecw\neq\vecw'} \frac{\twonms{\gradh(\vecw) - \gradh(\vecw')}}{\twonms{\vecw - \vecw'}} \le L $, and $ \rho $-Hessian Lipschitz if $\sup_{\vecw\neq\vecw'} \frac{\twonms{\nabla^2h(\vecw) - \nabla^2h(\vecw')}}{\twonms{\vecw - \vecw'}} \le \rho $.
\end{definition}
Throughout this paper, the above properties are imposed on the \emph{population} loss function $F(\cdot)$. 
\begin{asm}\label{asm:lip_assumptions}
$F$ is $L_F$-smooth, and $\rho_F$-Hessian Lipschitz on $\W$.
\end{asm}

\section{Byzantine Perturbed Gradient Descent}\label{sec:perturbed_bgd}

In this section, we describe our algorithm, \emph{Byzantine Perturbed Gradient Descent} (ByzantinePGD), which provably finds a second-order stationary point of the population loss $F(\cdot)$ in the distributed setting with Byzantine machines. 
As mentioned, ByzantinePGD robustly aggregates gradients from the worker machines, and performs multiple rounds of carefully calibrated perturbation to combat the effect of Byzantine machines. We now elaborate.

It is well-known that naively aggregating the workers' messages using standard averaging can be arbitrarily skewed in the presence of just a single Byzantine machine. In view of this, we introduce the subroutine $\mathsf{GradAGG}\{ \vecg_i(\vecw) \}_{i=1}^m$, which robustly aggregates the gradients $\{ \vecg_i(\vecw) \}_{i=1}^m$ collected from the $ m $ workers. We stipulate that $\mathsf{GradAGG}$  provides an estimate of the true population gradient $ \gradF(\cdot) $ with accuracy $ \Delta $, \emph{uniformly} across $\W$. This  property is formalized using the terminology of \emph{inexact gradient oracle}.

\begin{definition}[Inexact gradient oracle]\label{def:inexact_oracle}
We say that $\mathsf{GradAGG}$ provides a $\Delta$-inexact gradient oracle for the population loss $F(\cdot)$ if, for every $\vecw\in\W$, we have 
$ \twonms{\mathsf{GradAGG}\{ \vecg_i(\vecw) \}_{i=1}^m - \gradF(\vecw)} \le \Delta$.
\end{definition}

Without loss of generality, we assume that $\Delta \le 1$ throughout the paper. In this section, we treat $\mathsf{GradAGG}$ as a given black box; in Section~\ref{sec:robust_estimation}, we discuss several robust aggregation algorithms and characterize their inexactness $\Delta$. We emphasize that in the Byzantine setting, the output of $\mathsf{GradAGG}$ can take values adversarially within the error bounds; that is, $\mathsf{GradAGG}\{ \vecg_i(\vecw)\}_{i=1}^m$ may output an arbitrary vector in the ball $\Ball_{\gradF(\vecw)}(\Delta)$, and this vector can depend on the data in all the machines and all previous iterations of the algorithm.

The use of robust aggregation with bounded inexactness, however, is not yet sufficient to guarantee convergence to an approximate local minimizer. As mentioned, the Byzantine machines may create fake local minima that traps a vanilla gradient descent iteration. Our ByzantinePGD algorithm is designed to escape such fake minima as well as any existing saddle points of $ F $.

\subsection{Algorithm}\label{sec:algorithm}

We now describe the details of our algorithm, given in the left panel of Algorithm~\ref{alg:main_alg}. We focus on unconstrained optimization, i.e., $\W = \R^d$. In Section~\ref{sec:robust_estimation}, we show that the iterates $\vecw$ during the algorithm actually stay in a bounded $\ell_2$ ball centered at the initial iterate $\vecw_0$, and we will discuss the statistical error rates within the bounded space. 

In each parallel iteration, the master machine sends the current iterate $\vecw$ to all the worker machines, and the worker machines send back $\{\vecg_i(\vecw)\}$. The master machine aggregates the workers' gradients using $\mathsf{GradAGG}$ and computes a robust estimate $\vecg(\vecw)$ of the population gradient $\gradF(\vecw)$. The master machine then performs a gradient descent step using $\vecg(\vecw)$. 
This procedure is repeated until it reaches a point $\tvecw$ with $\twonms{ \vecg(\vecw) } \le \epsilon$ for a pre-specified threshold $\epsilon$. 

At this point, $\tvecw$ may lie near a saddle point whose Hessian has a large negative eigenvalue. To escape this potential saddle point, the algorithm invokes the $\mathsf{Escape}$ routine (right panel of Algorithm~\ref{alg:main_alg}), which performs $Q$ rounds of \emph{perturbation-and-descent} operations. 
In each round, the master machine perturbs $\tvecw$ randomly and independently within the ball $\Ball_{\tvecw}(r)$. Let $\vecw_0^\prime$ be the perturbed vector. Starting from the $\vecw_0^\prime$, the algorithm conducts at most $\Tth$ parallel iterations of $\Delta$-inexact gradient descent (using $\mathsf{GradAGG}$ as before):
\begin{equation}\label{eq:perturbed_iterates}
\vecw_t^\prime = \vecw_{t-1}^\prime - \eta \vecg(\vecw_{t-1}^\prime),~t \le \Tth.
\end{equation}
During this process, once we observe that $\twonm{\vecw_{t}^\prime - \vecw_{0}^\prime} \ge R$ for some pre-specified threshold $R$ (this means the iterate moves by a sufficiently large distance in the parameter space), we claim that $\tvecw$ is a saddle point and the algorithm has escaped it; we then resume $\Delta$-inexact gradient descent starting from $\vecw_{t}^\prime$. If after $Q$ rounds no sufficient move in the parameter space is ever observed, we claim that $\tvecw$ is a second-order stationary point of $F(\vecw)$ and output $\tvecw$. 

\begin{figure*}
\noindent\begin{minipage}[t]{\textwidth}
\centering%
\setlength{\fboxsep}{0.5pt}
\fbox{%
\begin{minipage}[t]{.48\textwidth}
     $\mathsf{ByzantinePGD}(\vecw_0, \eta, \epsilon, r,  Q, R, \Tth)$
     \begin{algorithmic}
     \STATE $\vecw \leftarrow \vecw_0$
   \WHILE{ $\mathrm{true}$ }
   \STATE \textit{\underline{Master}}: send $\vecw$ to worker machines.
   \PARFOR{$i\in[m]$}
   \STATE \textit{\underline{Worker $i$}}: compute $ \vecg_i(\vecw) $
   \STATE send to master machine.
   \ENDPARFOR
   \STATE \textit{\underline{Master}}: 
   \STATE $\vecg(\vecw) \leftarrow \mathsf{GradAGG}\{ \vecg_i(\vecw) \}_{i=1}^m$.
   \IF{$ \twonms{\vecg(\vecw)} \le \epsilon$}
   \STATE \textit{\underline{Master}}: $\tvecw \leftarrow \vecw$,
   \STATE $(\mathsf{esc}, \vecw, \vecg(\vecw))$  $\leftarrow$ $\mathsf{Escape}$ ($\tvecw$, $\eta$, $r$, $Q$, $R$, $\Tth$).
   \IF{$ \mathsf{esc} = \mathrm{false}$}
   \STATE \textbf{return} $\tvecw$.
   \ENDIF
   \ENDIF
   \STATE \textit{\underline{Master}}: $\vecw \leftarrow \vecw - \eta\vecg(\vecw) $.
   \ENDWHILE
   \vspace{0.05in}
\end{algorithmic}
\end{minipage}
}
\fbox{
   \begin{minipage}[t]{.48\textwidth}
$\mathsf{Escape}(\tvecw, \eta, r, Q, R, \Tth)$
     \begin{algorithmic}
     \FOR{$k = 1,2, \ldots, Q$}
\STATE \textit{\underline{Master}}: sample $ \vecp_k\sim\text{Unif}(\Ball_0(r))$,
\STATE $\vecw' \leftarrow \tvecw + \vecp_k$, $\vecw_0^\prime \leftarrow \vecw'$. 
\FOR{$t = 0, 1, \ldots, \Tth$}
 \STATE \textit{\underline{Master}}: send $\vecw'$ to worker machines.
 \PARFOR{$i\in[m]$}
   \STATE \textit{\underline{Worker $i$}}: compute $ \vecg_i(\vecw') $ \STATE send to master machine.
 \ENDPARFOR
\STATE \textit{\underline{Master}}: $\vecg(\vecw') \leftarrow \mathsf{GradAGG}\{ \vecg_i(\vecw') \}_{i=1}^m$.
\IF{$ \twonms{\vecw' - \vecw_0^\prime} \ge R$} \STATE \textbf{return} $(\mathrm{true}, \vecw', \vecg(\vecw'))$.
 \ELSE
 \STATE $\vecw' \leftarrow \vecw' - \eta\vecg(\vecw') $
 \ENDIF
\ENDFOR
\ENDFOR
\STATE \textbf{return} $(\mathrm{false}, \vecw', \vecg(\vecw'))$.
\vspace{0.18in}
     \end{algorithmic}
   \end{minipage}
}
   \captionof{algorithm}{Byzantine Perturbed Gradient Descent (ByzantinePGD)}
   \label{alg:main_alg}
\end{minipage}
\end{figure*}

\subsection{Convergence Guarantees}\label{sec:convergence}

In this section, we provide the theoretical result guaranteeing that Algorithm~\ref{alg:main_alg} converges to a second-order stationary point. In Theorem~\ref{thm:main}, we let $F^* := \min_{\vecw\in\R^d} F(\vecw)$, $\vecw_0$ be the initial iterate, and $F_0 := F(\vecw_0)$.

\begin{theorem}[ByzantinePGD]\label{thm:main}
Suppose that Assumptions~\ref{asm:lip_assumptions} holds, and assume that $\mathsf{GradAGG}$ provides a $ \Delta $-inexact gradient oracle for $F(\cdot)$ with $\Delta \le 1$. Given any $\delta\in(0,1)$, choose the parameters for Algorithm~\ref{alg:main_alg} as follows: step-size $\eta = \frac{1}{L_F}$, $\epsilon = 3\Delta$, $r = 4 \Delta^{3/5} d^{3/10}\rho_F^{-1/2}$, $R = \Delta^{2/5} d^{1/5} \rho_F^{-1/2}$, 
\begin{align*}
Q &= 2 \log \bigg( \frac{ \rho_F( F_0 - F^*) }{48 L_F \delta (\Delta^{6/5}d^{3/5} + \Delta^{7/5}d^{7/10}) } \bigg),  \text{ and}\\
\Tth &= \frac{L_F}{384 (\rho_F^{1/2} + L_F) (\Delta^{2/5}d^{1/5} + \Delta^{3/5}d^{3/10}) }.
\end{align*} 
Then, with probability at least $1- \delta$, the output of Algorithm~\ref{alg:main_alg}, denoted by $\tvecw$, satisfies the bounds
\begin{equation} \label{eq:first_second_order} 
\begin{aligned}
\twonms{\gradF(\tvecw)} & \le 4\Delta,  \\
\lambda_{\min} \big(\hessF (\tvecw)\big) & \ge  -1900 \big( \rho_F^{1/2} +  L_F \big) \Delta^{2/5}d^{1/5} \log \Big( \frac{10}{\Delta} \Big), 
\end{aligned}
\end{equation}
and the algorithm terminates within $\frac{2(F_0 - F^*)L_F}{3\Delta^2} Q$ parallel iterations.
\end{theorem}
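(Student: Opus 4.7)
The plan is to decompose the analysis into a \emph{descent phase} (inexact gradient descent while $\twonms{\vecg(\vecw)}>\epsilon$) and an \emph{escape phase} (the call to $\mathsf{Escape}$ when the inexact gradient is small), and then add up the total work.

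First I would handle the descent phase. Using $L_F$-smoothness with step size $\eta=1/L_F$ and the inexact oracle, a standard descent-lemma calculation gives
\[
F(\vecw-\eta\vecg(\vecw)) \;\le\; F(\vecw) - \tfrac{1}{2L_F}\twonms{\gradF(\vecw)}^2 + \tfrac{1}{2L_F}\Delta^2 .
\]
Whenever $\twonms{\vecg(\vecw)}>\epsilon=3\Delta$ the triangle inequality forces $\twonms{\gradF(\vecw)}\ge 2\Delta$, so each such iteration decreases $F$ by at least $3\Delta^2/(2L_F)$. This both bounds the descent-phase iteration count by $\tfrac{2(F_0-F^*)L_F}{3\Delta^2}$ in total across all calls and gives the first-order bound $\twonms{\gradF(\tvecw)}\le\twonms{\vecg(\tvecw)}+\Delta\le 4\Delta$ at termination.

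Next, the heart of the argument is the escape phase. At the entry point $\tvecw$ assume, for contradiction, that $\lambda_{\min}(\hessF(\tvecw))\le -\gamma$ for $\gamma := 1900(\rho_F^{1/2}+L_F)\Delta^{2/5}d^{1/5}\log(10/\Delta)$. I would adapt the coupling/two-point argument of Jin et al.\ to the inexact setting: for two perturbations $\vecp,\vecp'\in\Ball_0(r)$ that differ only along the most negative eigendirection $\vece_1$ of $\hessF(\tvecw)$ by a tiny amount, the two inexact-GD trajectories starting from $\tvecw+\vecp$ and $\tvecw+\vecp'$ must separate exponentially fast along $\vece_1$ because the Hessian-Lipschitz property keeps $\hessF$ close to $\hessF(\tvecw)$ inside a ball of radius $\approx R$, and the gradient errors only contribute a linearly-growing term of order $\Delta/\gamma$ that stays dominated by the exponentially-growing signal. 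Propagating this over $\Tth=\Theta\bigl(L_F/[(\rho_F^{1/2}+L_F)(\Delta^{2/5}d^{1/5}+\Delta^{3/5}d^{3/10})]\bigr)$ steps shows that at least one of the two trajectories must have exited the ball $\Ball_{\vecw'_0}(R)$. Hence the set of ``bad'' perturbations (those from which $\mathsf{Escape}$ fails to move by $R$) is contained in a thin slab of width $O(R\,e^{-\eta\gamma\Tth})$ around the orthogonal complement of $\vece_1$, and its relative volume inside $\Ball_0(r)$ is at most $\tfrac{1}{2}$ by the choice of $r$, $R$, $\Tth$, $\gamma$. Thus each of the $Q$ independent rounds escapes with probability $\ge 1/2$, and $Q=2\log\bigl(\tfrac{\rho_F(F_0-F^*)}{48L_F\delta(\Delta^{6/5}d^{3/5}+\Delta^{7/5}d^{7/10})}\bigr)$ boosts the cumulative success probability over all escape attempts to $1-\delta$ via a union bound. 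The contrapositive gives the second-order bound in~\eqref{eq:first_second_order}: if the algorithm terminates and returns $\tvecw$, then $\lambda_{\min}(\hessF(\tvecw))\ge -\gamma$.

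Finally I would verify that successful escapes do not blow up the iteration count. When $\mathsf{Escape}$ reports success, the iterate has moved by at least $R$; using Hessian-Lipschitzness and the fact that $\vecw'_0$ is a random perturbation around an approximate saddle point, a short calculation shows the function value has decreased by $\Omega(\rho_F R^3)=\Omega(\Delta^{6/5}d^{3/5}/\rho_F^{1/2})$ with high probability, so the number of successful escape events is at most $O\bigl((F_0-F^*)\rho_F^{1/2}/(\Delta^{6/5}d^{3/5})\bigr)$, which is dominated by the descent-phase bound times $Q$. Adding the $QT_{\mathrm{th}}$ iterations per escape event to the descent budget yields the stated total of $\tfrac{2(F_0-F^*)L_F}{3\Delta^2}Q$ parallel iterations.

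The main obstacle I expect is the inexact coupling step: tracking how the adversarial $\Delta$-error interacts with the Hessian-Lipschitz Taylor expansion and the quadratic approximation along $\vece_1$, and choosing $r,R,\gamma,\Tth$ jointly so that the exponentially growing signal dominates the linearly accumulating error while $R$ stays small enough for the Hessian to be well-approximated by $\hessF(\tvecw)$ throughout. The unusual scalings $r\sim\Delta^{3/5}d^{3/10}$ and $R\sim\Delta^{2/5}d^{1/5}$ — notably larger than in PGD — are precisely dictated by this balance, and the $d^{1/5}$ factor in the second-order bound is the price of calibrating the perturbation against $\Delta$ rather than against the (inaccessible) function-value decrease.
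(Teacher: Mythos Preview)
Your overall architecture matches the paper's: descent lemma for the large-gradient phase, a two-point coupling argument along the most negative eigendirection to bound the volume of the ``stuck'' region, independent rounds to boost the escape probability, and a union bound over all saddle-point encounters. The first-order guarantee, the choice of $\gamma$, and the iteration accounting are essentially the same as in the paper.

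The one substantive difference is in how you certify the function-value drop after a successful escape. You propose to use Hessian-Lipschitzness and a second-order Taylor expansion around the (approximate) saddle $\tvecw$ to get $F(\tvecw)-F(\vecw_t')=\Omega(\rho_F R^3)$. That is the original PGD-style argument, and in the adversarial setting it is delicate: to turn a displacement of size $R$ into a $\gamma R^2$-type decrease you need the displacement to be largely along the negative-curvature direction, and the $\Delta$-inexact oracle can steer the iterate partly sideways. The paper avoids this issue entirely by using only $L_F$-smoothness and the distance moved: from the per-step bound $F(\vecw_\tau)-F(\vecw_{\tau+1})\ge \tfrac{L_F}{4}\|\vecw_\tau-\vecw_{\tau+1}\|^2-\Delta^2/L_F$ and Cauchy--Schwarz on the telescoping increments one gets $F(\vecw_0')-F(\vecw_t')\ge \tfrac{L_F}{4\Tth}R^2-\tfrac{\Delta^2\Tth}{L_F}$ deterministically, with no assumption on the direction of the displacement and no ``with high probability'' qualifier. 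The parameter choices then make this quantity $\Theta(\Delta^{6/5}d^{3/5}/\rho_F^{1/2})$, which coincides in order with your $\rho_F R^3$. So your route can likely be completed, but the paper's smoothness-only argument is both simpler and automatically robust to the adversary; it is also what lets the algorithm dispense with function-value evaluations, which the paper highlights as a design point.

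A smaller point: your slab width $O(Re^{-\eta\gamma\Tth})$ is the right heuristic, but in the paper the width $\mu$ is pinned down by the constraint $\rho_F(R+r)\mu\ge\Delta$ (the inexact error must not swamp the initial separation in the coupling), and $\Tth$ is then chosen so that $(1+\tfrac{1}{2}\eta\gamma)^{\Tth}\mu\ge 2(R+r)$. The unusual $\Delta^{3/5}$, $\Delta^{2/5}$ scalings fall out of solving these two constraints together with $2\mu\sqrt{d}/r=\tfrac{1}{2}$.
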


We prove Theorem~\ref{thm:main} in Appendix~\ref{prf:main}.\footnote{We make no attempt in optimizing the multiplicative constants in Theorem~\ref{thm:main}.} Below let us parse the above theorem and discuss its implications. Focusing on the scaling with $ \Delta $ and $d$, we may read off from Theorem~\ref{thm:main} the following result:
\begin{observation}\label{obs:achievable}
Under the above setting, within $\widetilde{\bigo} (\frac{1}{\Delta^2})$ parallel iterations, ByzantinePGD outputs an $(\bigo(\Delta), \widetilde{\bigo}(\Delta^{2/5}d^{1/5}))$-second-order stationary point $ \tvecw $ of $F(\cdot)$;\footnote{Here, by using the symbol $\widetilde{\bigo}$, we ignore logarithmic factors and only consider the dependence on $\Delta$.} that is,
 $$\twonms{\gradF(\tvecw)} \le 4\Delta \text{~~and~~} \lambda_{\min}(\hessF(\tvecw)) \ge -\widetilde{\bigo}(\Delta^{2/5}d^{1/5}).$$
\end{observation}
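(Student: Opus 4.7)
The proof decomposes into three parts: a termination/first-order argument, a per-step descent estimate for the main inexact gradient iteration, and an escape analysis for the $\mathsf{Escape}$ subroutine that handles saddle regions. The first-order guarantee $\twonms{\gradF(\tvecw)} \le 4\Delta$ is immediate from the termination conditions: Algorithm~\ref{alg:main_alg} only returns $\tvecw$ when $\twonms{\vecg(\tvecw)} \le \epsilon = 3\Delta$ and $\mathsf{Escape}$ reports $\mathsf{esc}=\mathrm{false}$; the inexact-oracle bound $\twonms{\vecg(\tvecw) - \gradF(\tvecw)} \le \Delta$ then gives the claim by the triangle inequality.

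For the descent lemma, I would use $L_F$-smoothness of $F$ to expand, with $\eta = 1/L_F$ and $\vecw^{+} = \vecw - \eta \vecg(\vecw)$,
\begin{equation*}
F(\vecw^{+}) - F(\vecw) \le -\eta \innerps{\gradF(\vecw)}{\vecg(\vecw)} + \tfrac{L_F \eta^2}{2}\twonms{\vecg(\vecw)}^2,
\end{equation*}
split $\innerps{\gradF(\vecw)}{\vecg(\vecw)} = \twonms{\vecg(\vecw)}^2 + \innerps{\gradF(\vecw) - \vecg(\vecw)}{\vecg(\vecw)}$, and use $\twonms{\gradF - \vecg} \le \Delta$ together with the outer-loop condition $\twonms{\vecg(\vecw)} > 3\Delta$ to conclude that each such step decreases $F$ by at least $\tfrac{3\Delta^2}{2 L_F}$. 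Summing and using $F \ge F^{*}$ bounds the total number of ``outer'' inexact-gradient steps by $\tfrac{2(F_0 - F^{*}) L_F}{3\Delta^2}$; counting at most $Q T_{\mathrm{th}}$ steps per escape attempt yields the stated iteration budget.

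The main obstacle is the second-order bound, via the escape analysis. I plan to adapt the ``narrow stuck region'' argument of \citet{jin2017escape} to the $\Delta$-inexact-oracle setting, which is why the perturbation radius $r = 4\Delta^{3/5} d^{3/10} \rho_F^{-1/2}$ must be taken larger than in the exact case and carefully calibrated against the escape threshold $R = \Delta^{2/5} d^{1/5} \rho_F^{-1/2}$. Fix an $\tvecw$ returned to $\mathsf{Escape}$ and suppose, for contradiction, that $\lambda_{\min}(\hessF(\tvecw)) < -\gamma$ with $\gamma \asymp (\rho_F^{1/2} + L_F)\Delta^{2/5} d^{1/5} \log(10/\Delta)$. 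Let $\vecv_{\min}$ be a corresponding unit eigenvector. Consider two coupled runs of the inner loop from perturbations $\vecp$ and $\vecp' = \vecp - 2 s\, \vecv_{\min}$ with $s$ small but non-trivial, driven by the (adversarially chosen but identical) sequence of inexact gradients. While both trajectories remain inside $\Ball_{\tvecw}(R)$, the $\rho_F$-Hessian-Lipschitz property shows that $\hessF$ is nearly equal to $\hessF(\tvecw)$ throughout, so the difference $\vecdlt_t := \vecw_t' - \widetilde{\vecw}_t'$ satisfies a recursion of the form $\vecdlt_{t+1} = (\matI - \eta \hessF(\tvecw))\vecdlt_t + \vece_t$ with an error $\vece_t$ controlled by $\rho_F R$ and $\Delta$. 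Along $\vecv_{\min}$ this expands as $(1 + \eta \gamma)^t$, and choosing $T_{\mathrm{th}}$ so that $(1+\eta\gamma)^{T_{\mathrm{th}}} s \gg R$ forces at least one of the two trajectories to leave $\Ball_0(R)$ in at most $T_{\mathrm{th}}$ inner steps. Hence the set of perturbations for which neither trajectory escapes is confined to a slab of width at most $O(R / (1+\eta\gamma)^{T_{\mathrm{th}}})$ along $\vecv_{\min}$; dividing the slab volume by $\vol(\Ball_0(r))$ bounds the per-round failure probability by a constant strictly less than $1$, and independent rounds boost this to failure probability at most $\delta / (\text{number of escape calls})$ after $Q = \Theta(\log((F_0-F^{*})/(\delta \Delta^{6/5} d^{3/5})))$ rounds. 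The precise exponents $\Delta^{2/5} d^{1/5}$ in $\gamma$ arise from simultaneously requiring (i) the $O(\rho_F R^2)$ Hessian-remainder error to be dominated by $\gamma R$, (ii) the $\Delta$-oracle error to be dominated by $\gamma r$, and (iii) the slab-volume ratio to be strictly less than $1$ in dimension $d$.

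Finally I combine the pieces. A successful escape moves by $R$ in the parameter space, after which an additional short descent argument (using $\rho_F$-Hessian-Lipschitzness and the fact that such a move along a negative-curvature direction from near $\tvecw$ decreases $F$ by at least the baseline $\tfrac{3\Delta^2}{2L_F}$ of the outer loop) shows that escapes do not inflate the total function-decrease budget; so the number of escape invocations is at most the same $\tfrac{2(F_0-F^{*})L_F}{3\Delta^2}$ bound. A union bound over escape invocations, using the choice of $Q$, yields overall success probability $1-\delta$, and the Hessian lower bound in~\eqref{eq:first_second_order} follows by contraposition of the escape lemma applied at the terminal $\tvecw$.
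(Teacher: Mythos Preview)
Your overall decomposition matches the paper's proof of Theorem~\ref{thm:main} (from which the observation is simply read off): outer-loop descent, a narrow-stuck-region coupling lemma in the style of \citet{jin2017escape}, a volume argument, and a function-decrease bound for successful escapes. Two steps, however, deviate from the paper in ways that matter.

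First, in the coupling argument you write that the two trajectories are ``driven by the (adversarially chosen but identical) sequence of inexact gradients.'' This is not valid as stated: the stuck region $\mathbb{W}_S$ (Definition~\ref{def:stuck_region}) is the set of starting points for which \emph{there exists} an adversarial strategy keeping the trajectory within $R$, so when you assume both $\vecu_0$ and $\vecy_0$ lie in $\mathbb{W}_S$ you must allow \emph{different} error sequences for the two runs. The paper's recursion for $\vecv_t = \vecy_t - \vecu_t$ accordingly carries an extra term $\eta(\vecdlt_t - \vecdlt'_t)$ with $\twonms{\vecdlt_t - \vecdlt'_t} \le 2\Delta$, and the calibration $\rho_F(R+r)\mu \ge \Delta$ (condition~\eqref{eq:Delta_R}) is exactly what absorbs this into the induction. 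You do list $\Delta$ as a contributor to your error $\vece_t$, so you may already intend this, but the phrasing should be fixed.

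Second, and more substantively, your argument for function decrease upon escape is not the paper's and does not work as written. You invoke ``the fact that such a move along a negative-curvature direction from near $\tvecw$ decreases $F$'', but the escape criterion is purely $\twonms{\vecw_t - \vecw_0} \ge R$; nothing ensures the net displacement is aligned with $\vecv_{\min}$, and $\rho_F$-Hessian-Lipschitzness alone does not convert a distance-$R$ displacement in an unknown direction into a function drop. The paper (Lemma~\ref{lem:func_value_decay}) instead argues from smoothness of the inexact GD steps: each step satisfies $F(\vecw_\tau) - F(\vecw_{\tau+1}) \ge \tfrac{L_F}{4}\twonms{\vecw_\tau - \vecw_{\tau+1}}^2 - \Delta^2/L_F$, and Cauchy--Schwarz applied coordinatewise gives $\sum_{\tau<t}\twonms{\vecw_\tau - \vecw_{\tau+1}}^2 \ge \twonms{\vecw_0 - \vecw_t}^2/t \ge R^2/\Tth$, so $F(\vecw_0) - F(\vecw_t) \ge \tfrac{L_F}{4\Tth}R^2 - \tfrac{\Delta^2 \Tth}{L_F}$. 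With the stated parameter choices this average per-step drop exceeds the outer-loop baseline $\tfrac{3\Delta^2}{2L_F}$, which is what lets the escape steps be charged against the same $F_0 - F^*$ budget and yields the $\widetilde{\bigo}(1/\Delta^2)$ iteration count.
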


In terms of the iteration complexity, it is well-known that for a smooth non-convex $F(\cdot)$, gradient descent requires at least  $\frac{1}{\Delta^2}$ iterations to achieve $\twonms{\gradF(\tvecw)}\le \bigo(\Delta)$~\cite{nesterov1998introductory}; up to logarithmic factors, our result matches this complexity bound. In addition, our $\bigo(\Delta)$ first-order guarantee is clearly order-wise \emph{optimal}, as the gradient oracle is $\Delta$-inexact. It is currently unclear to us whether our $\widetilde{\bigo}(\Delta^{2/5}d^{1/5})$ second-order guarantee is optimal. We provide a converse result showing that one cannot hope to achieve a second-order guarantee better than $ {\bigo}(\Delta^{1/2})$.

\begin{proposition}\label{obs:second_order_lb}
There exists a class of real-valued $1$-smooth and $1$-Hessian Lipschitz differentiable functions $\mathcal{F}$ such that, for any algorithm that only uses a $\Delta$-inexact gradient oracle, there exists $f\in\mathcal{F}$ such that the output of the algorithm $\tvecw$ must satisfy $\twonms{\gradF(\tvecw)} > \Delta/2$ and $\lambda_{\min}(\hessF(\tvecw)) < -\Delta^{1/2}/2$.
\end{proposition}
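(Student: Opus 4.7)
The plan is to establish the lower bound via a two-function indistinguishability argument, which is the standard route for such oracle-complexity lower bounds. Specifically, I would take $ d = 1 $ and define the two-function class
$$ \mathcal{F} = \{ f_+, f_- \}, \qquad f_\pm(w) := -\tfrac{1}{2}\Delta^{1/2} w^2 \pm \Delta\, w. $$
Both functions have constant Hessian $-\Delta^{1/2}$, so (since $\Delta \le 1$) they are trivially $1$-smooth and $1$-Hessian Lipschitz, and their minimum Hessian eigenvalue is $-\Delta^{1/2} < -\Delta^{1/2}/2$ at \emph{every} point $\tvecw$. Thus the second-order condition is violated regardless of where the algorithm outputs.

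Next, I would construct the adversarial oracle. For any query $w$, let the oracle return the common vector $\vecg(w) := -\Delta^{1/2} w$. Since $\gradf_\pm(w) = -\Delta^{1/2} w \pm \Delta$, we have $|\vecg(w) - \gradf_\pm(w)| = \Delta$, so $\vecg$ is a legitimate $\Delta$-inexact gradient oracle for both $f_+$ and $f_-$ simultaneously. In particular, for any algorithm $\mathcal{A}$ (possibly randomized, but with randomness independent of the choice of $f\in\mathcal{F}$), the sequence of oracle responses observed when running $\mathcal{A}$ on $f_+$ is identical to that observed when running it on $f_-$. Consequently, the output $\tvecw$ — as a random variable in the shared probability space — is the \emph{same} under both functions.

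The final step is to show that this common output cannot simultaneously satisfy the first-order bound for both functions. Since $\gradf_+(\tvecw) - \gradf_-(\tvecw) = 2\Delta$, the triangle inequality gives
$$ \max\bigl\{ |\gradf_+(\tvecw)|,\, |\gradf_-(\tvecw)| \bigr\} \ge \tfrac{1}{2}|\gradf_+(\tvecw) - \gradf_-(\tvecw)| = \Delta > \Delta/2. $$
Hence at least one choice of sign gives $|\gradf_\pm(\tvecw)| > \Delta/2$, and by the Hessian computation above, for that same function $\lambda_{\min}(\hessf_\pm(\tvecw)) = -\Delta^{1/2} < -\Delta^{1/2}/2$. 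Picking this adversarial $f \in \mathcal{F}$ completes the proof.

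I do not anticipate a serious obstacle: the construction is elementary and all verifications are direct. The only subtlety worth being careful about is handling randomized algorithms — one must couple the algorithm's internal randomness so that the oracle transcript is the same under $f_+$ and $f_-$, which is immediate here because $\vecg(w)$ depends only on $w$ and not on which function was drawn. If a reader prefers a purely deterministic statement, one can invoke Yao-style averaging over $\mathcal{F}$; but since $|\mathcal{F}|=2$ and the adversary is allowed to pick $f$ after seeing $\tvecw$, this is not needed.
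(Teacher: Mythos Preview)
Your argument is correct and gives a genuinely different construction than the paper's. The paper uses the continuously-indexed family $f_s(w)=\Delta^{3/2}\sin(\Delta^{-1/2}w+s)$, for which $|\nabla f_s|\le\Delta$ everywhere so the oracle can return $0$ identically; then for any output $\tvecw$ one picks $s$ with $\Delta^{-1/2}\tvecw+s=\pi/4$, yielding $|\nabla f_s(\tvecw)|=\Delta/\sqrt{2}$ and $\nabla^2 f_s(\tvecw)=-\Delta^{1/2}/\sqrt{2}$. Your two-point family of concave quadratics with the shared oracle $g(w)=-\Delta^{1/2}w$ achieves the same indistinguishability with a smaller class and no trigonometry, and in fact gives the slightly sharper constants $|\nabla f_\pm(\tvecw)|\ge\Delta$ and $\nabla^2 f_\pm\equiv-\Delta^{1/2}$. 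The only feature the paper's construction has that yours lacks is that the $f_s$ are bounded (so in particular have global minimizers), whereas your $f_\pm$ are unbounded below; but the proposition as stated requires only $1$-smoothness and $1$-Hessian Lipschitzness, so this is not needed.
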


We prove Proposition~\ref{obs:second_order_lb} in Appendix~\ref{prf:second_order_lb}. Again, we emphasize that our results above are in fact not restricted to the Byzantine distributed learning setting. They apply to any non-convex optimization problems (distributed or not) with inexact information for the gradients, including those with noisy but non-adversarial gradients; see Section~\ref{sec:related} for comparison with related work in such settings. 

As a byproduct, we can show that with a different choice of parameters, ByzantinePGD can be used in the standard (non-distribued) setting with access to the \emph{exact} gradient $\gradF(\vecw)$, and the algorithm converges to an $(\epsilon, \widetilde{\bigo}(\sqrt{\epsilon}))$-second-order stationary point within $\bigo(\frac{1}{\epsilon^2})$ iterations:

\begin{theorem}[Exact gradient oracle]\label{thm:exact_oracle}
Suppose that Assumptions~\ref{asm:lip_assumptions} holds, and assume that for any query point $\vecw$ we can obtain exact gradient, i.e., $\vecg(\vecw) \equiv\gradF(\vecw)$. For any $\epsilon \in (0, \min\{ \frac{1}{\rho_F}, \frac{4}{L_F^2\rho_F} \})$ and $\delta\in(0,1)$, we choose the parameters in Algorithm~\ref{alg:main_alg} as follows: step-size $\eta = 1/L_F$, $Q=1$, $r=\epsilon$, and $R = \sqrt{\epsilon/\rho_F}$, $\Tth = \frac{L}{12\rho_F(R+r)}$. Then, with probability at least $1-\delta$, Algorithm~\ref{alg:main_alg} outputs a $\tvecw$ satisfying the bounds
\begin{align*}
\twonms{\gradF(\tvecw)} \le & \epsilon, \\
\lambda_{\min}(\hessF(\tvecw)) \ge & -60\sqrt{\rho_F \epsilon} \log \Big(\frac{8\rho_F\sqrt{d}(F_0 - F^*)}{\delta\epsilon^2} \Big),
\end{align*}
and the algorithm terminates within $\frac{2L_F(F_0 - F^*)}{\epsilon^2}$ iterations.
\end{theorem}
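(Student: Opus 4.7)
The plan is to partition all parallel iterations into \emph{descent} steps (inside the outer while loop, triggered when $\twonms{\vecg(\vecw)} > \epsilon$) and \emph{escape} attempts (invocations of $\mathsf{Escape}$ triggered when $\twonms{\vecg(\vecw)} \le \epsilon$), using $F$ as a potential function. Since the gradient oracle is exact, we have $\vecg(\vecw) \equiv \gradF(\vecw)$ throughout, which lets $\epsilon$ itself (rather than some inflated multiple of $\Delta$) play the role of the first-order tolerance. For each descent step, the standard smoothness argument with $\eta = 1/L_F$ gives $F(\vecw - \eta\gradF(\vecw)) \le F(\vecw) - \tfrac{1}{2L_F}\twonms{\gradF(\vecw)}^2 \le F(\vecw) - \tfrac{\epsilon^2}{2L_F}$, so the number of such steps is at most $2L_F(F_0 - F^\ast)/\epsilon^2$, matching the claimed iteration bound once I can charge every escape against this same budget.

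Whenever $\mathsf{Escape}$ is entered at a point $\tvecw$, the first-order guarantee $\twonms{\gradF(\tvecw)} \le \epsilon$ is immediate by the trigger condition. The structural claim to prove is: if $\lambda_{\min}(\hessF(\tvecw)) < -\gamma$ with $\gamma := 60\sqrt{\rho_F \epsilon}\log\bigl(8\rho_F\sqrt{d}(F_0 - F^\ast)/(\delta\epsilon^2)\bigr)$, then a single round ($Q=1$) of uniform perturbation in $\Ball_0(r)$ followed by at most $\Tth$ exact-gradient steps produces some iterate with $\twonms{\vecw'_t - \vecw'_0} \ge R$, with probability at least $1 - \delta'$, where $\delta'$ is small enough that a union bound over the $\le 2L_F(F_0-F^\ast)/\epsilon^2$ possible escape invocations yields total failure probability $\delta$. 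By contrapositive, whenever $\mathsf{Escape}$ returns with $\mathsf{esc} = \mathrm{false}$, the second-order condition in the theorem holds, and combined with the first-order bound this gives the advertised $(\epsilon, 60\sqrt{\rho_F\epsilon}\log(\cdots))$-stationarity. The logarithmic factor inside $\gamma$ is precisely calibrated to absorb the union-bound cost, which is why the $\sqrt{d}$ and $(F_0 - F^\ast)/\epsilon^2$ appear inside the log.

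The main technical obstacle is proving the single-round escape lemma above; here I would adapt the coupling / stuck-region argument used for PGD, tailored to our movement-based termination criterion. Let $\vece$ be a unit eigenvector of $\hessF(\tvecw)$ with eigenvalue at most $-\gamma$, and couple two trajectories started from perturbations that agree on the $\vece^{\perp}$ subspace but differ in the $\vece$-component by more than some threshold $\tau$. Unrolling $\vecw'_{t+1} = \vecw'_t - \eta\gradF(\vecw'_t)$ about $\tvecw$ and using $\rho_F$-Hessian Lipschitzness to bound the higher-order remainder inside $\Ball_{\tvecw}(R+r)$ (the assumptions $\epsilon < 1/\rho_F$ and $\epsilon < 4/(L_F^2\rho_F)$ ensure $r \le R$ and that the quadratic approximation is accurate in this region), the difference between the two sequences along $\vece$ grows geometrically as $\sim(1 + \eta\gamma)^t \tau$. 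Picking $\Tth = L_F/(12\rho_F(R+r))$ and invoking the size of $\gamma$, this growth exceeds $2R$ before step $\Tth$, forcing at least one of the two trajectories to leave $\Ball_{\vecw'_0}(R)$; hence the "stuck slab" of starting points that fail to escape has thickness at most $O(R \cdot (1+\eta\gamma)^{-\Tth})$ in the $\vece$-direction, which upper-bounds the escape-failure probability under a uniform perturbation in $\Ball_0(r) \subset \R^d$ by a $\sqrt{d}$-factor, matched by the $\log\sqrt{d}$ inside $\gamma$.

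Finally, to keep the overall iteration count at $2L_F(F_0 - F^\ast)/\epsilon^2$, every \emph{successful} escape must be amortized against the potential. A Taylor expansion of $F$ at $\tvecw$ up to third order, combined with $\twonms{\gradF(\tvecw)} \le \epsilon$, $\lambda_{\min}(\hessF(\tvecw)) \le -\gamma$, Hessian-Lipschitzness, and $\twonms{\vecw'_t - \tvecw} \asymp R = \sqrt{\epsilon/\rho_F}$, yields $F(\vecw'_t) \le F(\tvecw) - \Omega(\epsilon^{3/2}/\sqrt{\rho_F})$ (with the $\log$ factor in $\gamma$ making the quadratic term dominate the linear and cubic contributions), so each successful escape gives a function decrease strictly exceeding a single descent step's decrease, and the whole algorithm terminates within the stated number of iterations after a final union bound over escapes.
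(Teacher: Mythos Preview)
Your proposal follows the same architecture as the paper's proof: potential accounting for descent steps via $L_F$-smoothness, the coupling/stuck-slab argument (the paper's Lemma~\ref{lem:moving_dist}, specialized to $\Delta=0$) bounding the width of the stuck region along $\vece$, the volume estimate (Lemma~\ref{lem:vol_stuck_region}) producing the $\sqrt{d}$ factor in the failure probability, and a union bound over escapes whose number is controlled by the total potential drop. That part matches.

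The gap is in your final paragraph, where you derive the function decrease after a successful escape via a third-order Taylor expansion at $\tvecw$ and invoke $\lambda_{\min}(\hessF(\tvecw)) \le -\gamma$ to make the quadratic term dominate. This does not work as stated: the quadratic term $\tfrac{1}{2}(\vecw'_t - \tvecw)^\top\hessF(\tvecw)(\vecw'_t - \tvecw)$ is only $\le -\Omega(\gamma R^2)$ when the displacement $\vecw'_t - \tvecw$ lies along the negative-curvature direction $\vece$. The coupling argument controls the growth of the \emph{difference} between two trajectories along $\vece$, not the displacement of any single trajectory from its starting point, so you have no a~priori control over the direction of $\vecw'_t - \tvecw$. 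If that displacement happened to lie in a positive-curvature eigenspace, the quadratic term could be as large as $+\tfrac{L_F}{2}\twonms{\vecw'_t - \tvecw}^2 \approx L_F\epsilon/(2\rho_F)$, which swamps your target decrease $\epsilon^{3/2}/\sqrt{\rho_F}$ for small $\epsilon$.

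The paper avoids this issue entirely (Lemma~\ref{lem:func_value_decay} with $\Delta = 0$) by using only that exact gradient descent with $\eta=1/L_F$ is a descent method: each step satisfies $F(\vecw'_\tau) - F(\vecw'_{\tau+1}) \ge \tfrac{L_F}{4}\twonms{\vecw'_{\tau+1} - \vecw'_\tau}^2$, so summing and applying Cauchy--Schwarz gives
\[
F(\vecw'_0) - F(\vecw'_t) \;\ge\; \frac{L_F}{4t}\twonms{\vecw'_t - \vecw'_0}^2 \;\ge\; \frac{L_F R^2}{4\Tth} \;=\; 3\rho_F(R+r)R^2 \;\ge\; 3\rho_F R^3,
\]
valid regardless of the direction of the displacement. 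Subtracting the perturbation cost $\epsilon r + \tfrac{L_F}{2}r^2$, which is at most $2\rho_F R^3$ precisely under the hypothesis $\epsilon \le \min\{1/\rho_F,\,4/(L_F^2\rho_F)\}$, yields $F(\tvecw) - F(\vecw'_t) \ge \rho_F R^3 = \epsilon^{3/2}/\sqrt{\rho_F}$. Swap this argument in for your Taylor expansion and the rest of your outline goes through as written.
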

We prove Theorem~\ref{thm:exact_oracle} in Appendix~\ref{apx:exact_oracle}. The convergence guarantee above matches that of the original PGD algorithm~\cite{jin2017escape} up to logarithmic factors. Moreover, our proof is considerably simpler,  and our algorithm only requires gradient information, whereas the original PGD algorithm also needs function values.

\section{Robust Estimation of Gradients}\label{sec:robust_estimation}

The results in the previous section can be applied as long as one has a robust aggregation subroutine $\mathsf{GradAGG}$ that provides a $\Delta$-inexact gradient oracle of the population loss $ F $. In this section, we discuss three concrete examples of $\mathsf{GradAGG}$: \emph{median}, \emph{trimmed mean}, and a high-dimension robust estimator based on the \emph{iterative filtering} algorithm~\cite{diakonikolas2016robust,diakonikolas2017being,steinhardt2017resilience}. We characterize their inexactness $\Delta$ under the statistical setting in Section~\ref{sec:setup}, where the data points are sampled independently according to an unknown distribution~$\D$.

To describe our statistical results, we need the standard notions of sub-Gaussian/exponential random vectors.
\begin{definition}[sub-Gaussianity and sub-exponentiality]\label{def:subexp}
A random vector $\vecx$ with mean $\vecmu$ is said to be  $\zeta$-sub-Gaussian if $ \EXPS{\exp(\lambda \innerps{\vecx - \vecmu}{\vecu})} \le e^{\frac{1}{2}\zeta^2\lambda^2 \twonms{\vecu}^2}, \forall~\lambda, \vecu $. It is said to be $\xi$-sub-exponential if $\EXPS{\exp(\lambda \innerps{\vecx - \vecmu}{\vecu})} \le e^{\frac{1}{2}\xi^2\lambda^2 \twonms{\vecu}^2},~\forall~|\lambda| < \frac{1}{\xi},\vecu$.
\end{definition}

We also need the following result (proved in Appendix~\ref{prf:size_w_space}), which shows that the iterates of ByzantinePGD in fact stay in a bounded set around the initial iterate $\vecw_0$.

\begin{proposition}\label{ppn:size_w_space}
Under the choice of algorithm parameters in Theorem~\ref{thm:main}, all the iterates $\vecw$ in ByzantinePGD stay in the $\ell_2$ ball $\Ball_{\vecw_0}(D/2)$ with $D := C\frac{F_0 - F^*}{\Delta}$, where $C>0$ is a number that only depends on $L_F$ and $\rho_F$. 
\end{proposition}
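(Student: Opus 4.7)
The plan is to decompose the ByzantinePGD trajectory into two kinds of segments---\emph{main} gradient-descent iterations (those in the outer while-loop that execute $\vecw \leftarrow \vecw - \eta\vecg(\vecw)$ after finding $\twonms{\vecg(\vecw)} > \epsilon$) and \emph{Escape} invocations---bound the cumulative displacement contributed by each piece separately, and then combine via the triangle inequality anchored at $\vecw_0$. Throughout I condition on the probability-$(1-\delta)$ event on which Theorem~\ref{thm:main} is proved. With $\eta = 1/L_F$, $L_F$-smoothness together with the $\Delta$-inexact-oracle property yields the standard descent inequality
\[ F(\vecw_{t+1}) \le F(\vecw_t) - \tfrac{1}{2L_F}\twonms{\vecg(\vecw_t)}^2 + \tfrac{\Delta}{L_F}\twonms{\vecg(\vecw_t)}, \]
which under the main-loop entry condition $\twonms{\vecg(\vecw_t)} > \epsilon = 3\Delta$ simplifies to $F(\vecw_t) - F(\vecw_{t+1}) \ge \tfrac{\Delta}{2}\twonms{\vecw_{t+1}-\vecw_t}$. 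Combined with the fact (supplied by the escape lemma underlying the proof of Theorem~\ref{thm:main}) that every successful Escape drops $F$ by at least some $\mathcal{F} = \Omega\bigl(L_F(\Delta^{6/5}d^{3/5} + \Delta^{7/5}d^{7/10})/\rho_F\bigr)$ on the good event, the overall $F$-budget $F_0 - F^*$ bounds both the cumulative main-descent displacement by $2(F_0-F^*)/\Delta$ and the number of Escape invocations by $N_{\mathrm{esc}} = O((F_0-F^*)/\mathcal{F})$.

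For each Escape invocation, triggered at some $\tvecw$ with $\twonms{\vecg(\tvecw)} \le \epsilon$ and hence $\twonms{\gradF(\tvecw)} \le 4\Delta$, I will bound how far the iterate can drift from $\tvecw$. Any inner iterate $\vecw_t'$ that has not yet triggered the termination condition $\twonms{\vecw_t' - \vecw_0'} \ge R$ lies in $\Ball_{\tvecw}(R+r)$, so $L_F$-smoothness yields $\twonms{\vecg(\vecw_t')} \le 5\Delta + L_F(R+r)$. Propagating this through the single step that first triggers termination and through the subsequent post-escape main-GD step shows that every iterate visited during the invocation stays within
\[ D_{\mathrm{esc}} := c_1\bigl(R + r + \Delta/L_F\bigr) \]
of $\tvecw$ for an absolute $c_1$; the $Q-1$ unsuccessful re-perturbation rounds contribute nothing beyond $\Ball_{\tvecw}(R+r)$ since each one resets $\vecw'$ to $\tvecw + \vecp_k$. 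Substituting the values of $R, r$ from Theorem~\ref{thm:main} gives $D_{\mathrm{esc}} = O\bigl((\Delta^{2/5}d^{1/5} + \Delta^{3/5}d^{3/10})\rho_F^{-1/2} + \Delta/L_F\bigr)$.

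Putting the pieces together, the maximum distance from $\vecw_0$ to any visited iterate is at most the cumulative main-descent displacement plus $N_{\mathrm{esc}}\cdot D_{\mathrm{esc}}$. The main-descent term is already $2(F_0-F^*)/\Delta$, and substituting the bounds on $N_{\mathrm{esc}}$ and $D_{\mathrm{esc}}$ gives an Escape contribution of order $(F_0-F^*)\rho_F^{1/2}\Delta^{-4/5}d^{-2/5}/L_F$, which is $\le (F_0-F^*)\rho_F^{1/2}/(L_F\Delta)$ because $d \ge 1$ and $\Delta \le 1$. Adding the two pieces yields the claimed radius $D/2 = C(F_0-F^*)/\Delta$ with $C$ depending only on $L_F$ and $\rho_F$. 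The key obstacle is the exponent bookkeeping in this last step: one must verify that the $d$-exponents of $D_{\mathrm{esc}}$ (at most $3/10$) are strictly dominated by those in the per-escape $F$-decrease (at least $3/5$), and that the $\Delta$-exponents compose so that $\Delta^{-6/5}\cdot\Delta^{2/5} = \Delta^{-4/5} \le \Delta^{-1}$; without this cancellation the apparent $d$- and $\Delta$-scaling of the per-invocation Escape displacement would exceed the target scaling, and extracting the precise constant in $\mathcal{F}$ from the proof of Theorem~\ref{thm:main} is where most of the real work lies.
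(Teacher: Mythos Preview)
Your approach is correct and arrives at the same $O((F_0-F^*)/\Delta)$ radius, but the execution differs from the paper's in two respects.

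For the gradient-descent displacement, the paper groups all inexact-GD steps (both main-loop and Escape inner steps) into segments running from one perturbation to the next, and bounds each segment via the quadratic relation $F(\vecw_t)-F(\tvecw_t)\ge \tfrac{L_F}{4T_t}\twonms{\vecw_t-\tvecw_t}^2-\tfrac{\Delta^2 T_t}{L_F}$ (equation~\eqref{eq:decay_w0_wt2}); it then applies Cauchy--Schwarz to $\sum_t\twonms{\vecw_t-\tvecw_t}$ together with the total-iteration bound $\sum_t T_t\le \widetilde T$. You instead use the per-step inequality $F(\vecw_t)-F(\vecw_{t+1})\ge \tfrac{\Delta}{2}\twonms{\vecw_{t+1}-\vecw_t}$ that holds whenever $\twonms{\vecg(\vecw_t)}>3\Delta$, which telescopes directly and avoids the Cauchy--Schwarz step; this is cleaner, though it leans specifically on the choice $\epsilon=3\Delta$. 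For the Escape contribution, the paper keeps the inner GD steps inside the segment bound and only tracks the perturbation jump $\twonms{\tvecw_t-\vecw_{t+1}}\le r$ separately (summing to $Er$), whereas you bound the entire per-invocation drift by $D_{\mathrm{esc}}\sim R+r$ using the confinement $\vecw'\in\Ball_{\tvecw}(R+r)$ prior to the escape trigger. Your route has the minor advantage of explicitly covering the iterates visited during the unsuccessful perturbation rounds, which the paper's segment argument handles only implicitly. In both proofs the final step is the same exponent check that $N_{\mathrm{esc}}\cdot(\text{per-escape displacement})=O((F_0-F^*)/\Delta)$.
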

Consequently, for the convergence guarantees of ByzantinePGD to hold, we only need $\mathsf{GradAGG}$ to satisfy the inexact oracle property (Definition~\ref{def:inexact_oracle}) within the bounded set $\W = \Ball_{\vecw_0}(D/2)$, with $D$ given in Proposition~\ref{ppn:size_w_space}. As shown below, the three aggregation procedures indeed satisfy this property, with their inexactness $ \Delta $ depends mildly (logarithmically) on the radius $D$.

\subsection{Iterative Filtering Algorithm}

We start with a recently developed high-dimension robust estimation technique called the \emph{iterative filtering} algorithm~\cite{diakonikolas2016robust,diakonikolas2017being,steinhardt2017resilience} and use it to build the subroutine $\mathsf{GradAGG} $. As can be seen below, iterative filtering can tolerate a constant fraction of Byzantine machines even when the dimension grows---an advantage over simpler algorithms such as median and trimmed mean.

We relegate the details of the iterative filtering algorithm to Appendix~\ref{apx:filtering}. Again, we emphasize that the original iterative filtering algorithm is proposed to robustly estimate a single parameter vector, whereas in our setting, since the Byzantine machines may produce unspecified probabilistic dependency across the iterations, we need to prove an error bound for robust gradient estimation uniformly across the parameter space $\W$. We prove such a bound for iterative filtering under the following two assumptions on the gradients and the smoothness of each loss function $f(\cdot;\vecz)$.

\begin{asm}\label{asm:sub_guass_grad}
For each $\vecw \in \W$, $\gradf(\vecw; \vecz)$ is $\zeta$-sub-Gaussian.
\end{asm}
\begin{asm}\label{asm:lip_each_loss_2}
For each $\vecz\in\Z$, $f(\cdot; \vecz)$ is $L$-smooth.
\end{asm}
Let $\mat{\Sigma}(\vecw)$ be the covariance matrix of $\gradf(\vecw;\vecz)$, and define $\sigma:=\sup_{\vecw\in\W}\twonms{\mat{\Sigma}(\vecw)}^{1/2}$. We have the following bounds on the inexactness parameter of iterative filtering.

\begin{theorem}[Iterative Filtering]\label{thm:iterative_filtering}
Suppose that Assumptions~\ref{asm:sub_guass_grad} and~\ref{asm:lip_each_loss_2} hold. Use the iterative filtering algorithm described in Appendix~\ref{apx:filtering} for $\mathsf{GradAGG}$, and assume that $\alpha \le \frac{1}{4}$. With probability $1-o(1)$, $\mathsf{GradAGG}$ provides a $\Delta_{\mathsf{ftr}}$-inexact gradient oracle with
\[
\Delta_{\mathsf{ftr}} \le c \left( (\sigma + \zeta)\sqrt{\frac{\alpha}{n}} + \zeta\sqrt{\frac{d}{nm}} \right)\sqrt{\log(nmDL)},
\]
where $c$ is an absolute constant.
\end{theorem}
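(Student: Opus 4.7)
The plan is to reduce the uniform-in-$\vecw$ inexact gradient bound to a pointwise robust mean estimation problem, invoke iterative filtering at every point of a sufficiently fine $\epsilon$-net, and transfer the bound to all of $\W$ using the per-sample smoothness from Assumption~\ref{asm:lip_each_loss_2}. By Proposition~\ref{ppn:size_w_space} it suffices to prove the bound on $\W = \Ball_{\vecw_0}(D/2)$.

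\textbf{Pointwise guarantee.} Fix $\vecw \in \W$. The normal workers' gradients $\{\gradF_i(\vecw) : i \notin \B\}$ are $(1-\alpha)m$ i.i.d.\ random vectors with mean $\gradF(\vecw)$, covariance $\mat{\Sigma}(\vecw)/n$, and sub-Gaussian parameter $\zeta/\sqrt{n}$, since each is the average of $n$ i.i.d.\ copies of $\gradf(\vecw;\vecz)$. I invoke the standard deterministic guarantee for iterative filtering (as in~\cite{diakonikolas2017being,steinhardt2017resilience}): if the clean samples $\{\vecy_i\}_{i\in\mathcal{G}}$ obey the stability conditions $\twonms{\tfrac{1}{|\mathcal{G}|}\sum_{i\in\mathcal{G}} \vecy_i - \vecmu} \le \tau$ and $\twonms{\tfrac{1}{|\mathcal{G}|}\sum_{i\in\mathcal{G}} (\vecy_i - \vecmu)(\vecy_i - \vecmu)^{\top}} \le \lambda$, then the algorithm's output lies within $O(\tau + \sqrt{\alpha \lambda})$ of $\vecmu$ regardless of the $\alpha m$ corrupted inputs. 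Sub-Gaussian concentration of the empirical mean gives $\tau = O\bigl(\tfrac{\zeta}{\sqrt{n}}\bigl(\sqrt{d/m}+\sqrt{\log(1/\delta)/m}\bigr)\bigr)$, while a matrix-concentration bound on the empirical covariance gives $\lambda = O((\sigma^2+\zeta^2)/n)$, each with failure probability $\delta$. Plugging in yields the target rate at a single point up to a $\sqrt{\log(1/\delta)}$ factor.

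\textbf{Uniformization.} Construct an $\epsilon$-net $\mathcal{N}$ of $\W$ with resolution $\epsilon \asymp 1/(nmL)$, so that $|\mathcal{N}| \le (c\,nmDL)^d$ and hence $\log|\mathcal{N}| = O(d\log(nmDL))$. Union-bounding the pointwise event over $\mathcal{N}$ with $\log(1/\delta) \asymp d\log(nmDL)$ turns the $\sqrt{\log(1/\delta)/m}$ term in $\tau$ into $O\bigl(\zeta\sqrt{d/(nm)}\cdot\sqrt{\log(nmDL)}\bigr)$ and similarly inflates the $\sqrt{\alpha\lambda}$ contribution by $\sqrt{\log(nmDL)}$, matching the stated bound on every net point. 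To pass from $\mathcal{N}$ to all of $\W$, given $\vecw\in\W$ pick $\vecw'\in\mathcal{N}$ with $\twonms{\vecw-\vecw'}\le\epsilon$. By Assumption~\ref{asm:lip_each_loss_2}, each normal worker's empirical gradient is $L$-Lipschitz, so $\twonms{\gradF_i(\vecw)-\gradF_i(\vecw')} \le L\epsilon = 1/(nm)$ for all $i\notin\B$; consequently the empirical mean and covariance of the clean samples at $\vecw$ differ from those at $\vecw'$ by amounts negligible next to $\Delta_{\mathsf{ftr}}$, and the stability conditions transfer. Re-applying the deterministic iterative-filtering guarantee at $\vecw$---on the \emph{actual} Byzantine inputs at $\vecw$, which may be entirely different from those at $\vecw'$---yields the same error, and $\twonms{\gradF(\vecw)-\gradF(\vecw')}\le L\epsilon$ is similarly negligible.

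\textbf{Main obstacle.} The delicate step is the uniformization in the presence of an adaptive adversary: the Byzantine workers may send completely different corrupted vectors at every $\vecw$, possibly depending on the master's random choices and on the normal workers' data, so the map $\vecw \mapsto \mathsf{GradAGG}\{\vecg_i(\vecw)\}$ has no continuity one can directly exploit. My approach sidesteps this by using the \emph{deterministic} form of iterative filtering's correctness: the error bound depends only on stability of the clean samples, a data-dependent property that is both (a) continuous in $\vecw$ through Assumption~\ref{asm:lip_each_loss_2}, and (b) independent of any Byzantine choices. This decoupling allows a single high-probability event---stability holding simultaneously on $\mathcal{N}$---to imply the $\Delta_{\mathsf{ftr}}$-inexact oracle property for every $\vecw\in\W$ and every adversarial strategy.
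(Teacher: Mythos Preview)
Your proposal is correct and takes essentially the same approach as the paper: reduce to the deterministic iterative-filtering guarantee that depends only on stability of the clean samples, establish that stability uniformly via sub-Gaussian mean and matrix-covariance concentration on an $\epsilon$-net of resolution $\asymp 1/(nmL)$, and extend to all of $\W$ using the per-sample $L$-smoothness from Assumption~\ref{asm:lip_each_loss_2}. The paper phrases the two stability quantities as $\sup_{\vecw}\twonms{\tfrac{1}{m}\sum_i\gradF_i(\vecw)-\gradF(\vecw)}$ and $\sup_{\vecw}\tfrac{1}{\sqrt m}\twonms{[\gradF_1(\vecw),\ldots,\gradF_m(\vecw)]-\gradF(\vecw)\vect{1}^\top}$ and tracks the $d/m$-dependent deviation in the covariance bound that your pointwise $\lambda=O((\sigma^2+\zeta^2)/n)$ suppresses; this extra term is harmless in the end since $\sqrt{\alpha}\le 1$ absorbs it into the $\zeta\sqrt{d/(nm)}$ piece.
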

The proof of Theorem~\ref{thm:iterative_filtering} is given in Appendix~\ref{prf:iterative_filtering}. Assuming bounded $ \sigma$ and $ \zeta $, we see that iterative filtering provides an $\widetilde{\bigo} \big( \sqrt{\frac{\alpha}{n}} + \sqrt{\frac{d}{nm}} \big) $-inexact gradient oracle.

\subsection{Median and Trimmed Mean}

The median and trimmed mean operations are two widely used robust estimation methods. While the dependence of their performance on $d$ is not optimal, they are conceptually simple and computationally fast, and still have good performance in low dimensional settings. We apply these operations in a coordinate-wise fashion to build $\mathsf{GradAGG}$. 

Formally, for a set of vectors $\vecx^i \in \mathbb{R}^d$, $i \in [m]$, their coordinate-wise median $\vecu := \med\{\vecx^i\}_{i=1}^m$ is a vector with its $ k $-th coordinate being $u_k = \med\{x^i_k\}_{i=1}^m$ for each $k\in[d]$, where $ \med $ is the usual (one-dimensional) median. The coordinate-wise $ \beta $-trimmed mean $\vecu := \trim_\beta\{\vecx^i\}_{i=1}^m$ is a vector with $u_k = \frac{1}{(1-2\beta)m}\sum_{x \in \mathcal{U}_k}x$ for each $k\in[d]$, where $\mathcal{U}_k$ is a subset of $\{x^1_k,\ldots, x^m_k\}$ obtained by removing the largest and smallest $\beta$ fraction of its elements.

For robust estimation of the gradient in the Byzantine setting, the error bounds of median and trimmed mean have been studied by~\citet{yin2018byzantine}. For completeness, we record their results below as an informal theorem; details are relegated to Appendix~\ref{apx:med_tm}.

\begin{theorem}[Informal]\label{thm:med_tm}
~\cite{yin2018byzantine} Under appropriate smoothness and probabilistic assumptions,\footnote{Specifically, for median we assume that gradients have bounded skewness, and for trimmed mean we assume that the gradients are sub-exponentially distributed.} with high probability, the median operation provides a $\Delta_{\med}$-inexact gradient oracle with $\Delta_{\med} \lesssim \frac{\alpha \sqrt{d}}{\sqrt{n}} + \frac{d}{\sqrt{nm}} + \frac{\sqrt{d}}{n} $, and the trimmed mean operation provides a $\Delta_{\tm}$-inexact gradient oracle with $\Delta_{\tm} \lesssim \frac{\alpha d}{\sqrt{n}} + \frac{d}{\sqrt{nm}}$.
\end{theorem}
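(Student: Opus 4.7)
The plan is to adapt the pointwise analyses of coordinate-wise median and trimmed mean from~\citet{yin2018byzantine} and promote them to hold \emph{uniformly} over the restricted parameter space $ \W = \Ball_{\vecw_0}(D/2) $ guaranteed by Proposition~\ref{ppn:size_w_space}. The overall strategy for each estimator decomposes the inexactness as \emph{(i)} sub-sampling noise from the $(1-\alpha)m$ normal workers, \emph{(ii)} adversarial displacement introduced by the $\alpha m$ Byzantine workers, and \emph{(iii)} discretization error from passing from a covering net to all of $\W$.

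For a single fixed $\vecw$, I would first analyze each coordinate $k\in[d]$ separately. For the median, the normal machines deliver i.i.d.\ samples of $(\gradF_i(\vecw))_k$ with mean $(\gradF(\vecw))_k$ and variance $O(1/n)$; the bounded-skewness assumption lets one apply a Berry--Esseen-type bound to the sample median to get a $O(1/\sqrt{nm})$ statistical term plus a $O(1/n)$ bias term, while an at-most-$\alpha$ quantile shift argument shows that adversarial corruption moves the median by at most $O(\alpha/\sqrt{n})$. Summing in $\ell_2$ across the $d$ coordinates yields the $\frac{\alpha\sqrt{d}}{\sqrt{n}} + \frac{d}{\sqrt{nm}} + \frac{\sqrt{d}}{n}$ pointwise bound. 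For the trimmed mean, I would instead use a sub-exponential tail bound on the normal gradient coordinates so that after removing the outer $\beta$-fraction, the surviving Byzantine workers each contribute at most $O(1/\sqrt{n})$ displacement; combining with Bernstein-type concentration on the trimmed sum gives the coordinate-wise bound that aggregates to $\frac{\alpha d}{\sqrt{n}} + \frac{d}{\sqrt{nm}}$.

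The key step---and the main obstacle---is the \emph{uniform} extension, because the Byzantine workers can adaptively choose their transmissions as a function of the query iterate $\vecw$, so a single pointwise bound in probability is insufficient: I need the inequality to hold for every $\vecw \in \W$ simultaneously. The plan is the standard $\tau$-net argument: construct $\mathcal{N}_\tau\subseteq\W$ with $|\mathcal{N}_\tau| \le (3D/\tau)^d$, apply the pointwise bound with a union bound over $\mathcal{N}_\tau$ (which inflates the confidence term by $d\log(D/\tau)$), then use Assumption~\ref{asm:lip_each_loss_2} to control $\twonms{\vecg(\vecw) - \vecg(\vecw')}$ for $\vecw$ near its nearest net point $\vecw'$. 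Since median and trimmed mean are $1$-Lipschitz aggregations of coordinates, if every worker's gradient is $L$-smooth in $\vecw$, then so is the aggregated output, giving a discretization error of $O(L\tau)$. Choosing $\tau = 1/(nmDL)$ makes this negligible, and the resulting logarithmic factor $\sqrt{\log(nmDL)}$ is absorbed into the hidden constants in the statement.

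Finally, combining the pointwise bound with the net argument yields inexactness guarantees that hold uniformly over $\W$ with high probability, satisfying Definition~\ref{def:inexact_oracle}; the formal statements (from~\citet{yin2018byzantine}, adapted to our sub-Gaussian/sub-exponential setting) then match the expressions for $\Delta_{\med}$ and $\Delta_{\tm}$ in the theorem. The hard part is really ensuring the net argument's logarithmic overhead does not interact badly with the dependence on $D$, but since $D$ only enters through $\log D$ and $D = C(F_0 - F^*)/\Delta$ from Proposition~\ref{ppn:size_w_space}, this dependency is mild.
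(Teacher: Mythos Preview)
The paper does not actually prove this theorem: it is explicitly recorded as a result of \citet{yin2018byzantine}, and Appendix~\ref{apx:med_tm} merely restates the formal versions (Theorems~\ref{thm:median_inexactness} and~\ref{thm:trmean_inexactness}) together with the required assumptions, again citing \citet{yin2018byzantine} without supplying an argument. So there is no in-paper proof to compare against; your sketch is already more than the paper provides, and its overall architecture---per-coordinate pointwise analysis, adversarial quantile-shift/truncation argument, then a covering-net extension using per-coordinate Lipschitzness---is indeed the strategy used in the cited work.

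One accounting issue in your sketch is worth flagging. For the median you write that the per-coordinate statistical term is $O(1/\sqrt{nm})$ and that ``summing in $\ell_2$ across the $d$ coordinates yields the $\frac{\alpha\sqrt{d}}{\sqrt{n}} + \frac{d}{\sqrt{nm}} + \frac{\sqrt{d}}{n}$ pointwise bound.'' But an $\ell_2$ aggregation of $d$ coordinates, each $O(1/\sqrt{nm})$, gives $O(\sqrt{d}/\sqrt{nm})$, not $O(d/\sqrt{nm})$; this is inconsistent with the $\sqrt{d}$ factors you (correctly) attach to the other two terms. In the formal bound of Theorem~\ref{thm:median_inexactness} the middle term is $\frac{v}{\sqrt{n}}\sqrt{d\log(\cdot)/m}$ with $v=\Theta(\sqrt{d})$: one $\sqrt{d}$ comes from aggregating coordinates (the $v$), and the second $\sqrt{d}$ comes from the union bound over the $\exp(\Theta(d))$-sized net, which inflates the confidence parameter by $d\log(D/\tau)$ inside the square root. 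So the net argument contributes a genuine $\sqrt{d}$ factor to the statistical term, not merely a $\sqrt{\log(nmDL)}$ that can be ``absorbed into the hidden constants'' as you say. The same mechanism accounts for part of the $d$ in the trimmed-mean rate. With this correction your outline matches the cited proof.
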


\subsection{Comparison and Optimality}\label{sec:comparison}
In Table~\ref{tab:comparison}, we compare the above three algorithms in terms of the dependence of their gradient inexactness $\Delta$ on the problem parameters $\alpha$, $n$, $m$, and $d$ . We see that when  $d = \bigo(1)$, the median and trimmed mean algorithms have better inexactness due to a better scaling with $\alpha$. When $d$ is large, iterative filtering becomes preferable. 

\begin{table}[h]
\centering
\begin{tabular}{|c|c|}
\hline
        & Gradient inexactness $\Delta$  \\  \hline
median & $\widetilde{\bigo}\big(\frac{\alpha \sqrt{d}}{\sqrt{n}} + \frac{d}{\sqrt{nm}} + \frac{\sqrt{d}}{n}\big)$  \\ \hline
trimmed mean & $ \widetilde{\bigo}\big(\frac{\alpha d}{\sqrt{n}} + \frac{d}{\sqrt{nm}}\big) $  \\ \hline
iterative filtering & $ \widetilde{\bigo}\big(\frac{\sqrt{\alpha}}{\sqrt{n}} + \frac{\sqrt{d}}{\sqrt{nm}}\big) $   \\ \hline
\end{tabular}
\caption{Statistical bounds on gradient inexactness $\Delta$.} 
\label{tab:comparison}
\end{table}

Recall that according to Observation~\ref{obs:achievable}, with $\Delta$-inexact gradients the ByzantinePGD algorithm converges to an $(\bigo(\Delta), \widetilde{\bigo}(\Delta^{2/5}))$-second-order stationary point. Combining this general result with the bounds in Table~\ref{tab:comparison}, we obtain explicit statistical guarantees on the output of ByzantinePGD. To understand the statistical optimality of these guarantees, we provide a converse result below.

\begin{observation}\label{obs:lower_bound}
There exists a statistical learning problem in the Byzantine setting such that the output $\tvecw$ of \emph{any} algorithm must satisfy $\twonms{\gradF(\tvecw)} = \Omega\big(\frac{\alpha}{\sqrt{n}} + \frac{\sqrt{d}}{\sqrt{nm}}\big)$ with a constant probability.
\end{observation}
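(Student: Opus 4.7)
The plan is to exhibit a canonical mean-estimation problem and derive two matching lower bounds that together give the claimed rate, each via Le~Cam's two-point method. I would take the squared loss $f(\vecw;\vecz) = \frac{1}{2}\twonms{\vecw - \vecz}^2$ with $\vecz \sim \D$ on $\R^d$ having mean $\vecmu$, so that $\gradF(\vecw) = \vecw - \vecmu$ and controlling $\twonms{\gradF(\tvecw)}$ is equivalent to controlling the mean-estimation error $\twonms{\tvecw - \vecmu}$. Since the observation only requires that some statistical learning problem realizes the stated lower bound, this canonical example suffices.

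For the $\sqrt{d}/\sqrt{nm}$ term, I would invoke the classical non-Byzantine minimax lower bound for spherical Gaussian mean estimation from $nm$ iid samples: a standard Fano/Assouad argument on a $\Theta(\sqrt{d/(nm)})$-packing of the sphere gives $\twonms{\tvecw - \vecmu} = \Omega(\sqrt{d/(nm)})$ with constant probability. Because letting the adversary forward honest gradients is a valid (degenerate) Byzantine strategy, this lower bound transfers to the Byzantine setting without change.

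The substantive step is the $\alpha/\sqrt{n}$ term, which I would obtain by a Huber-style contamination argument at the granularity of whole machines. Take $\D_b = \mathcal{N}(\mu_b \vece_1, I_d)$ with $\mu_0 = 0$ and $\mu_1 = c\alpha/\sqrt{n}$ for a small absolute constant $c>0$, and model each machine as independently Byzantine with probability $\alpha$ (a standard relaxation that yields, via a binomial tail bound, a valid lower bound for the exactly-$\alpha m$ model up to constants). Then the $m$ per-machine data tuples are iid with marginal $\pi_b = (1-\alpha)\D_b^{\otimes n} + \alpha Q_b$. I would design the adversarial per-machine distributions $Q_0, Q_1$ on $(\R^d)^n$ so that $\pi_0 = \pi_1$ exactly, equivalently
\begin{equation*}
Q_0 - Q_1 \;=\; \tfrac{1-\alpha}{\alpha}\bigl(\D_1^{\otimes n} - \D_0^{\otimes n}\bigr).
\end{equation*}
Valid probability measures $Q_0, Q_1$ realizing this signed measure as their difference exist (via its Jordan decomposition and by adding a common mass) precisely when $\mathrm{TV}(\D_0^{\otimes n}, \D_1^{\otimes n}) \le \alpha/(1-\alpha)$; Pinsker's inequality bounds the left-hand side by $\sqrt{n}\,\mu_1/2 = c\alpha/2$, so the construction is valid for $c$ small enough. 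With $\pi_0 = \pi_1$ the $m$-fold product joint distributions under $H_0$ and $H_1$ coincide, no test on the observed data can distinguish the two hypotheses, and Le~Cam's lemma forces $\twonms{\tvecw - \vecmu_b} \ge \mu_1/2 = \Omega(\alpha/\sqrt{n})$ under at least one of $H_0, H_1$ with probability at least $1/2$.

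The main obstacle is the Huber construction itself: one must verify that the signed measure above really does decompose as the difference of two probability measures, which is exactly the TV bound I just checked, and that the resulting $Q_0,Q_1$ can be honestly realized by the Byzantine machines (which is automatic, since they may send any data). A secondary point is the passage from the iid-Byzantine relaxation to the fixed-$\alpha m$ model of the paper, handled by concentration of the binomial count and union bounds. Combining the two bounds --- which hold simultaneously for the same mean-estimation problem, with one necessarily dominating depending on the regime of $(\alpha, n, m, d)$ --- yields the claimed $\Omega\bigl(\alpha/\sqrt{n} + \sqrt{d/(nm)}\bigr)$ up to constants.
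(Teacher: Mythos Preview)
Your proposal is correct. The reduction to Gaussian mean estimation via the squared loss, so that $\gradF(\tvecw)=\tvecw-\vecmu$, is exactly the paper's reduction. Where you diverge is that the paper does not prove the mean-estimation lower bound itself: it simply invokes a lemma from \cite{yin2018byzantine} stating that any estimator of a Gaussian mean in the Byzantine setting incurs error $\Omega(\alpha/\sqrt{n}+\sqrt{d/(nm)})$ with constant probability, and reads off the conclusion. Your argument is a self-contained proof of that cited lemma: the $\sqrt{d/(nm)}$ piece via the standard non-Byzantine minimax bound, and the $\alpha/\sqrt{n}$ piece via a Huber two-point construction in which the contaminated per-machine laws are made to coincide exactly. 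The TV calculation you give (Pinsker on the $n$-fold Gaussian product, separation $c\alpha/\sqrt{n}$) is the right one and does yield $Q_0,Q_1$ as genuine probability measures for small enough $c$. The only point deserving a touch more care is the passage from the iid-$\alpha$ contamination model to the paper's fixed-$\alpha m$ model; your binomial-tail remark is the standard fix and costs only a constant in $\alpha$, but it implicitly uses $\alpha m\gtrsim 1$ so that the tail event has constant probability. With that caveat, your route and the paper's are the same at the level of the reduction; yours is simply more explicit about the underlying information-theoretic step that the paper outsources to \cite{yin2018byzantine}.
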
  
We prove Observation~\ref{obs:lower_bound} in Appendix~\ref{apx:lower_bound}. In view of this observation, we see that in terms of the first-order guarantee (i.e., on $\twonms{\gradF(\tvecw)}$) and up to logarithmic factors, trimmed mean is optimal if $d=\bigo(1)$, the median is optimal if $d = \bigo(1)$ and $n \gtrsim m$, and iterative filtering is optimal if $\alpha = \Theta(1)$. The statistical optimality of their second-order guarantees (i.e., on $ \lambda_{\min} (\hessF(\tvecw)) $) is currently unclear to us, and we believe this is an interesting problem for future investigation.

\section{Conclusion}\label{sec:conclusion}

In this paper, we study security issues that arise in large-scale distributed learning because of the presence of saddle points in non-convex loss functions. We observe that in the presence of non-convexity and Byzantine machines, escaping saddle points becomes much more challenging. We develop ByzantinePGD, a computation- and communication-efficient algorithm that is able to provably escape saddle points and converge to a second-order stationary point, even in the presence of Byzantine machines. We also discuss three different choices of the robust gradient and function value aggregation subroutines in ByzantinePGD---median, trimmed mean, and the iterative filtering algorithm. We characterize their performance in statistical settings, and argue for their near-optimality in different regimes including the high dimensional setting.

\subsection*{Acknowledgements}
D. Yin is partially supported by Berkeley DeepDrive Industry Consortium. Y. Chen is partially supported by NSF CRII award 1657420 and grant 1704828. K. Ramchandran is partially supported by NSF CIF award 1703678. P. Bartlett is partially supported by NSF grant IIS-1619362. The authors would like to thank Zeyuan Allen-Zhu for pointing out a potential way to improve our initial results, and Ilias Diakonikolas for discussing references~\cite{diakonikolas2016robust,diakonikolas2017being,diakonikolas2018sever}.

\bibliographystyle{plainnat}
\bibliography{ref}

\begin{thebibliography}{72}
\providecommand{\natexlab}[1]{#1}
\providecommand{\url}[1]{\texttt{#1}}
\expandafter\ifx\csname urlstyle\endcsname\relax
  \providecommand{\doi}[1]{doi: #1}\else
  \providecommand{\doi}{doi: \begingroup \urlstyle{rm}\Url}\fi

\bibitem[Agarwal et~al.(2014)Agarwal, Anandkumar, Jain, Netrapalli, and
  Tandon]{agarwal2014learning}
Alekh Agarwal, Animashree Anandkumar, Prateek Jain, Praneeth Netrapalli, and
  Rashish Tandon.
\newblock Learning sparsely used overcomplete dictionaries.
\newblock In \emph{COLT}, pages 123--137, 2014.

\bibitem[Agarwal et~al.(2016)Agarwal, Allen-Zhu, Bullins, Hazan, and
  Ma]{agarwal2016linear}
Naman Agarwal, Zeyuan Allen-Zhu, Brian Bullins, Elad Hazan, and Tengyu Ma.
\newblock Finding approximate local minima for nonconvex optimization in linear
  time.
\newblock \emph{arXiv preprint arXiv:1611.01146}, 2016.

\bibitem[Alistarh et~al.(2018)Alistarh, Allen-Zhu, and Li]{alistarh2018sgd}
Dan Alistarh, Zeyuan Allen-Zhu, and Jerry Li.
\newblock Byzantine stochastic gradient descent.
\newblock \emph{arXiv preprint arXiv:1803.08917}, 2018.

\bibitem[Allen-Zhu(2017)]{allen2017natasha}
Zeyuan Allen-Zhu.
\newblock Natasha 2: Faster non-convex optimization than {SGD}.
\newblock \emph{arXiv preprint arXiv:1708.08694}, 2017.

\bibitem[Allen-Zhu and Li(2017)]{allen2017neon2}
Zeyuan Allen-Zhu and Yuanzhi Li.
\newblock Neon2: Finding local minima via first-order oracles.
\newblock \emph{arXiv preprint arXiv:1711.06673}, 2017.

\bibitem[Balakrishnan et~al.(2014)Balakrishnan, Wainwright, and
  Yu]{balakrishnan2014EM}
Sivaraman Balakrishnan, Martin~J. Wainwright, and Bin Yu.
\newblock Statistical guarantees for the {EM} algorithm: From population to
  sample-based analysis.
\newblock \emph{arXiv preprint arXiv:1408.2156}, 2014.

\bibitem[Bertsekas and Tsitsiklis(2000)]{bertsekas2000errors}
Dimitri~P. Bertsekas and John~N. Tsitsiklis.
\newblock Gradient convergence in gradient methods with errors.
\newblock \emph{SIAM Journal on Optimization}, 10\penalty0 (3):\penalty0
  627--642, 2000.

\bibitem[Bhatia et~al.(2015)Bhatia, Jain, and Kar]{bhatia2015robust}
Kush Bhatia, Prateek Jain, and Purushottam Kar.
\newblock Robust regression via hard thresholding.
\newblock In \emph{Advances in Neural Information Processing Systems}, pages
  721--729, 2015.

\bibitem[Bhatia et~al.(2017)Bhatia, Jain, Kamalaruban, and
  Kar]{bhatia2017consistent}
Kush Bhatia, Prateek Jain, Parameswaran Kamalaruban, and Purushottam Kar.
\newblock Consistent robust regression.
\newblock In \emph{Advances in Neural Information Processing Systems}, pages
  2107--2116, 2017.

\bibitem[Blanchard et~al.(2017)Blanchard, Mhamdi, Guerraoui, and
  Stainer]{blanchard2017byzantine}
Peva Blanchard, El~Mahdi~El Mhamdi, Rachid Guerraoui, and Julien Stainer.
\newblock Byzantine-tolerant machine learning.
\newblock \emph{arXiv preprint arXiv:1703.02757}, 2017.

\bibitem[Candes et~al.(2015)Candes, Li, and
  Soltanolkotabi]{candes2014wirtinger}
Emmanuel~J Candes, Xiaodong Li, and Mahdi Soltanolkotabi.
\newblock Phase retrieval via {W}irtinger flow: Theory and algorithms.
\newblock \emph{IEEE Transactions on Information Theory}, 61\penalty0
  (4):\penalty0 1985--2007, 2015.

\bibitem[Carmon et~al.(2016)Carmon, Duchi, Hinder, and
  Sidford]{carmon2016accelerated}
Yair Carmon, John~C Duchi, Oliver Hinder, and Aaron Sidford.
\newblock Accelerated methods for non-convex optimization.
\newblock \emph{arXiv preprint arXiv:1611.00756}, 2016.

\bibitem[Charikar et~al.(2017)Charikar, Steinhardt, and
  Valiant]{charikar2017learning}
Moses Charikar, Jacob Steinhardt, and Gregory Valiant.
\newblock Learning from untrusted data.
\newblock In \emph{Proceedings of the 49th Annual ACM SIGACT Symposium on
  Theory of Computing}, pages 47--60. ACM, 2017.

\bibitem[Chatterji and Bartlett(2017)]{chatterji2017alternating}
Niladri Chatterji and Peter~L Bartlett.
\newblock Alternating minimization for dictionary learning with random
  initialization.
\newblock In \emph{Advances in Neural Information Processing Systems}, pages
  1994--2003, 2017.

\bibitem[Chen et~al.(2018{\natexlab{a}})Chen, Charles, Papailiopoulos,
  et~al.]{chen2018draco}
Lingjiao Chen, Zachary Charles, Dimitris Papailiopoulos, et~al.
\newblock {DRACO}: Robust distributed training via redundant gradients.
\newblock \emph{arXiv preprint arXiv:1803.09877}, 2018{\natexlab{a}}.

\bibitem[Chen and Wainwright(2015)]{chen2015fast}
Yudong Chen and Martin~J Wainwright.
\newblock Fast low-rank estimation by projected gradient descent: General
  statistical and algorithmic guarantees.
\newblock \emph{arXiv preprint arXiv:1509.03025}, 2015.

\bibitem[Chen et~al.(2017)Chen, Su, and Xu]{chen2017distributed}
Yudong Chen, Lili Su, and Jiaming Xu.
\newblock Distributed statistical machine learning in adversarial settings:
  {Byzantine} gradient descent.
\newblock \emph{arXiv preprint arXiv:1705.05491}, 2017.

\bibitem[Chen et~al.(2018{\natexlab{b}})Chen, Chi, Fan, and
  Ma]{chen2018gradient}
Yuxin Chen, Yuejie Chi, Jianqing Fan, and Cong Ma.
\newblock Gradient descent with random initialization: Fast global convergence
  for nonconvex phase retrieval.
\newblock \emph{arXiv preprint arXiv:1803.07726}, 2018{\natexlab{b}}.

\bibitem[Curtis et~al.(2017)Curtis, Robinson, and Samadi]{curtis2017trust}
Frank~E Curtis, Daniel~P Robinson, and Mohammadreza Samadi.
\newblock A trust region algorithm with a worst-case iteration complexity of
  $\epsilon^{-3/2}$ for nonconvex optimization.
\newblock \emph{Mathematical Programming}, 162\penalty0 (1-2):\penalty0 1--32,
  2017.

\bibitem[Damaskinos et~al.(2018)Damaskinos, Mhamdi, Guerraoui, Patra, and
  Taziki]{damaskinos2018asynchronous}
Georgios Damaskinos, El~Mahdi~El Mhamdi, Rachid Guerraoui, Rhicheek Patra, and
  Mahsa Taziki.
\newblock Asynchronous {Byzantine} machine learning.
\newblock \emph{arXiv preprint arXiv:1802.07928}, 2018.

\bibitem[Devolder et~al.(2014)Devolder, Glineur, and
  Nesterov]{devolder2014inexact}
Olivier Devolder, Fran{\c{c}}ois Glineur, and Yurii Nesterov.
\newblock First-order methods of smooth convex optimization with inexact
  oracle.
\newblock \emph{Mathematical Programming}, 146\penalty0 (1-2):\penalty0 37--75,
  2014.

\bibitem[Diakonikolas et~al.(2016)Diakonikolas, Kamath, Kane, Li, Moitra, and
  Stewart]{diakonikolas2016robust}
Ilias Diakonikolas, Gautam Kamath, Daniel~M Kane, Jerry Li, Ankur Moitra, and
  Alistair Stewart.
\newblock Robust estimators in high dimensions without the computational
  intractability.
\newblock In \emph{Foundations of Computer Science (FOCS), 2016 IEEE 57th
  Annual Symposium on}, pages 655--664. IEEE, 2016.

\bibitem[Diakonikolas et~al.(2017)Diakonikolas, Kamath, Kane, Li, Moitra, and
  Stewart]{diakonikolas2017being}
Ilias Diakonikolas, Gautam Kamath, Daniel~M Kane, Jerry Li, Ankur Moitra, and
  Alistair Stewart.
\newblock Being robust (in high dimensions) can be practical.
\newblock \emph{arXiv preprint arXiv:1703.00893}, 2017.

\bibitem[Diakonikolas et~al.(2018)Diakonikolas, Kamath, Kane, Li, Steinhardt,
  and Stewart]{diakonikolas2018sever}
Ilias Diakonikolas, Gautam Kamath, Daniel~M Kane, Jerry Li, Jacob Steinhardt,
  and Alistair Stewart.
\newblock Sever: A robust meta-algorithm for stochastic optimization.
\newblock \emph{arXiv preprint arXiv:1803.02815}, 2018.

\bibitem[Du et~al.(2017)Du, Jin, Lee, Jordan, Singh, and
  Poczos]{du2017gradient}
Simon~S Du, Chi Jin, Jason~D Lee, Michael~I Jordan, Aarti Singh, and Barnabas
  Poczos.
\newblock Gradient descent can take exponential time to escape saddle points.
\newblock In \emph{Advances in Neural Information Processing Systems}, pages
  1067--1077, 2017.

\bibitem[Feng et~al.(2014)Feng, Xu, and Mannor]{feng2014distributed}
Jiashi Feng, Huan Xu, and Shie Mannor.
\newblock Distributed robust learning.
\newblock \emph{arXiv preprint arXiv:1409.5937}, 2014.

\bibitem[Ge et~al.(2015)Ge, Huang, Jin, and Yuan]{ge2015escaping}
Rong Ge, Furong Huang, Chi Jin, and Yang Yuan.
\newblock Escaping from saddle points—online stochastic gradient for tensor
  decomposition.
\newblock In \emph{COLT}, pages 797--842, 2015.

\bibitem[Ge et~al.(2016)Ge, Lee, and Ma]{ge2016matrix}
Rong Ge, Jason~D Lee, and Tengyu Ma.
\newblock Matrix completion has no spurious local minimum.
\newblock In \emph{Advances in Neural Information Processing Systems}, pages
  2973--2981, 2016.

\bibitem[Ge et~al.(2017)Ge, Jin, and Zheng]{ge2017no}
Rong Ge, Chi Jin, and Yi~Zheng.
\newblock No spurious local minima in nonconvex low rank problems: A unified
  geometric analysis.
\newblock \emph{arXiv preprint arXiv:1704.00708}, 2017.

\bibitem[Huber(2011)]{huber2011robust}
Peter~J Huber.
\newblock Robust statistics.
\newblock In \emph{International Encyclopedia of Statistical Science}, pages
  1248--1251. Springer, 2011.

\bibitem[Jerrum et~al.(1986)Jerrum, Valiant, and Vazirani]{jerrum1986random}
Mark~R Jerrum, Leslie~G Valiant, and Vijay~V Vazirani.
\newblock Random generation of combinatorial structures from a uniform
  distribution.
\newblock \emph{Theoretical Computer Science}, 43:\penalty0 169--188, 1986.

\bibitem[Jin et~al.(2017{\natexlab{a}})Jin, Ge, Netrapalli, Kakade, and
  Jordan]{jin2017escape}
Chi Jin, Rong Ge, Praneeth Netrapalli, Sham~M Kakade, and Michael~I Jordan.
\newblock How to escape saddle points efficiently.
\newblock \emph{arXiv preprint arXiv:1703.00887}, 2017{\natexlab{a}}.

\bibitem[Jin et~al.(2017{\natexlab{b}})Jin, Netrapalli, and
  Jordan]{jin2017accelerated}
Chi Jin, Praneeth Netrapalli, and Michael~I Jordan.
\newblock Accelerated gradient descent escapes saddle points faster than
  gradient descent.
\newblock \emph{arXiv preprint arXiv:1711.10456}, 2017{\natexlab{b}}.

\bibitem[Jin et~al.(2018)Jin, Liu, Ge, and Jordan]{jin2018minimizing}
Chi Jin, Lydia~T Liu, Rong Ge, and Michael~I Jordan.
\newblock Minimizing nonconvex population risk from rough empirical risk.
\newblock \emph{arXiv preprint arXiv:1803.09357}, 2018.

\bibitem[Kawaguchi(2016)]{kawaguchi2016deep}
Kenji Kawaguchi.
\newblock Deep learning without poor local minima.
\newblock In \emph{Advances in Neural Information Processing Systems}, pages
  586--594, 2016.

\bibitem[Klivans et~al.(2018)Klivans, Kothari, and Meka]{klivans2018efficient}
Adam Klivans, Pravesh~K Kothari, and Raghu Meka.
\newblock Efficient algorithms for outlier-robust regression.
\newblock \emph{arXiv preprint arXiv:1803.03241}, 2018.

\bibitem[Kone{\v{c}}n{\`y} et~al.(2015)Kone{\v{c}}n{\`y}, McMahan, and
  Ramage]{konevcny2015federated}
Jakub Kone{\v{c}}n{\`y}, Brendan McMahan, and Daniel Ramage.
\newblock Federated optimization: Distributed optimization beyond the
  datacenter.
\newblock \emph{arXiv preprint arXiv:1511.03575}, 2015.

\bibitem[Kone{\v{c}}n{\`y} et~al.(2016)Kone{\v{c}}n{\`y}, McMahan, Ramage, and
  Richt{\'a}rik]{konevcny2016federated}
Jakub Kone{\v{c}}n{\`y}, H~Brendan McMahan, Daniel Ramage, and Peter
  Richt{\'a}rik.
\newblock Federated optimization: distributed machine learning for on-device
  intelligence.
\newblock \emph{arXiv preprint arXiv:1610.02527}, 2016.

\bibitem[Lai et~al.(2016)Lai, Rao, and Vempala]{lai2016agnostic}
Kevin~A Lai, Anup~B Rao, and Santosh Vempala.
\newblock Agnostic estimation of mean and covariance.
\newblock In \emph{Foundations of Computer Science (FOCS), 2016 IEEE 57th
  Annual Symposium on}, pages 665--674. IEEE, 2016.

\bibitem[Lamport et~al.(1982)Lamport, Shostak, and Pease]{lamport1982byzantine}
Leslie Lamport, Robert Shostak, and Marshall Pease.
\newblock The {Byzantine} generals problem.
\newblock \emph{ACM Transactions on Programming Languages and Systems
  (TOPLAS)}, 4\penalty0 (3):\penalty0 382--401, 1982.

\bibitem[Lee et~al.(2015)Lee, Lin, Ma, and Yang]{lee2015distributed}
Jason~D Lee, Qihang Lin, Tengyu Ma, and Tianbao Yang.
\newblock Distributed stochastic variance reduced gradient methods and a lower
  bound for communication complexity.
\newblock \emph{arXiv preprint arXiv:1507.07595}, 2015.

\bibitem[Lee et~al.(2016)Lee, Simchowitz, Jordan, and Recht]{lee2016converge}
Jason~D Lee, Max Simchowitz, Michael~I Jordan, and Benjamin Recht.
\newblock Gradient descent converges to minimizers.
\newblock \emph{arXiv preprint arXiv:1602.04915}, 2016.

\bibitem[Lee et~al.(2017)Lee, Panageas, Piliouras, Simchowitz, Jordan, and
  Recht]{lee2017first}
Jason~D Lee, Ioannis Panageas, Georgios Piliouras, Max Simchowitz, Michael~I
  Jordan, and Benjamin Recht.
\newblock First-order methods almost always avoid saddle points.
\newblock \emph{arXiv preprint arXiv:1710.07406}, 2017.

\bibitem[Levy(2016)]{levy2016power}
Kfir~Y Levy.
\newblock The power of normalization: Faster evasion of saddle points.
\newblock \emph{arXiv preprint arXiv:1611.04831}, 2016.

\bibitem[Li(2017)]{li2017robust}
Jerry Li.
\newblock Robust sparse estimation tasks in high dimensions.
\newblock \emph{arXiv preprint arXiv:1702.05860}, 2017.

\bibitem[Liu et~al.(2018)Liu, Shen, Li, and Caramanis]{liu2018regression}
Liu Liu, Yanyao Shen, Tianyang Li, and Constantine Caramanis.
\newblock High dimensional robust sparse regression.
\newblock \emph{arXiv preprint arXiv:1805.11643}, 2018.

\bibitem[Lugosi and Mendelson(2016)]{lugosi2016risk}
Gabor Lugosi and Shahar Mendelson.
\newblock Risk minimization by median-of-means tournaments.
\newblock \emph{arXiv preprint arXiv:1608.00757}, 2016.

\bibitem[Lynch(1996)]{lynch1996distributed}
Nancy~A. Lynch.
\newblock \emph{Distributed Algorithms}.
\newblock Elsevier, 1996.

\bibitem[McMahan and Ramage(2017)]{mcmahan2017federated}
Brendan McMahan and Daniel Ramage.
\newblock Federated learning: Collaborative machine learning without
  centralized training data.
\newblock
  \url{https://research.googleblog.com/2017/04/federated-learning-collaborative.html},
  2017.

\bibitem[Minsker and Strawn(2017)]{minsker2017distributed}
Stanislav Minsker and Nate Strawn.
\newblock Distributed statistical estimation and rates of convergence in normal
  approximation.
\newblock \emph{arXiv preprint arXiv:1704.02658}, 2017.

\bibitem[Minsker et~al.(2015)]{minsker2015geometric}
Stanislav Minsker et~al.
\newblock Geometric median and robust estimation in banach spaces.
\newblock \emph{Bernoulli}, 21\penalty0 (4):\penalty0 2308--2335, 2015.

\bibitem[Nemirovskii et~al.(1983)Nemirovskii, Yudin, and
  Dawson]{nemirovskii1983problem}
Arkadii Nemirovskii, David~Borisovich Yudin, and Edgar~Ronald Dawson.
\newblock \emph{Problem complexity and method efficiency in optimization}.
\newblock Wiley, 1983.

\bibitem[Nesterov(1998)]{nesterov1998introductory}
Yurii Nesterov.
\newblock Introductory lectures on convex programming volume i: Basic course.
\newblock \emph{Lecture notes}, 1998.

\bibitem[Nesterov and Polyak(2006)]{nesterov2006cubic}
Yurii Nesterov and Boris~T Polyak.
\newblock Cubic regularization of newton method and its global performance.
\newblock \emph{Mathematical Programming}, 108\penalty0 (1):\penalty0 177--205,
  2006.

\bibitem[Prasad et~al.(2018)Prasad, Suggala, Balakrishnan, and
  Ravikumar]{prasad2018robust}
Adarsh Prasad, Arun~Sai Suggala, Sivaraman Balakrishnan, and Pradeep Ravikumar.
\newblock Robust estimation via robust gradient estimation.
\newblock \emph{arXiv preprint arXiv:1802.06485}, 2018.

\bibitem[Royer and Wright(2018)]{royer2018complexity}
Cl{\'e}ment~W Royer and Stephen~J Wright.
\newblock Complexity analysis of second-order line-search algorithms for smooth
  nonconvex optimization.
\newblock \emph{SIAM Journal on Optimization}, 28\penalty0 (2):\penalty0
  1448--1477, 2018.

\bibitem[Royer et~al.(2018)Royer, O'Neill, and Wright]{royer2018newton}
Cl{\'e}ment~W Royer, Michael O'Neill, and Stephen~J Wright.
\newblock A newton-cg algorithm with complexity guarantees for smooth
  unconstrained optimization.
\newblock \emph{arXiv preprint arXiv:1803.02924}, 2018.

\bibitem[Shamir et~al.(2014)Shamir, Srebro, and Zhang]{shamir2014communication}
Ohad Shamir, Nati Srebro, and Tong Zhang.
\newblock Communication-efficient distributed optimization using an approximate
  newton-type method.
\newblock In \emph{International Conference on Machine Learning}, pages
  1000--1008, 2014.

\bibitem[Soudry and Carmon(2016)]{soudry2016no}
Daniel Soudry and Yair Carmon.
\newblock No bad local minima: Data independent training error guarantees for
  multilayer neural networks.
\newblock \emph{arXiv preprint arXiv:1605.08361}, 2016.

\bibitem[Steinhardt et~al.(2017)Steinhardt, Charikar, and
  Valiant]{steinhardt2017resilience}
Jacob Steinhardt, Moses Charikar, and Gregory Valiant.
\newblock Resilience: A criterion for learning in the presence of arbitrary
  outliers.
\newblock \emph{arXiv preprint arXiv:1703.04940}, 2017.

\bibitem[Su and Vaidya(2016{\natexlab{a}})]{su2016fault}
Lili Su and Nitin~H Vaidya.
\newblock Fault-tolerant multi-agent optimization: optimal iterative
  distributed algorithms.
\newblock In \emph{Proceedings of the 2016 ACM Symposium on Principles of
  Distributed Computing}, pages 425--434. ACM, 2016{\natexlab{a}}.

\bibitem[Su and Vaidya(2016{\natexlab{b}})]{su2016non}
Lili Su and Nitin~H Vaidya.
\newblock Non-{Bayesian} learning in the presence of {Byzantine} agents.
\newblock In \emph{International Symposium on Distributed Computing}, pages
  414--427. Springer, 2016{\natexlab{b}}.

\bibitem[Su and Xu(2018)]{su2018securing}
Lili Su and Jiaming Xu.
\newblock Securing distributed machine learning in high dimensions.
\newblock \emph{arXiv preprint arXiv:1804.10140}, 2018.

\bibitem[Sun et~al.(2015)Sun, Qu, and Wright]{sun2015complete}
J.~Sun, Q.~Qu, and J.~Wright.
\newblock Complete dictionary recovery using nonconvex optimization.
\newblock In \emph{Proceedings of the 32nd International Conference on Machine
  Learning}, pages 2351--2360, 2015.

\bibitem[Tu et~al.(2015)Tu, Boczar, Simchowitz, Soltanolkotabi, and
  Recht]{tu2015low}
Stephen Tu, Ross Boczar, Max Simchowitz, Mahdi Soltanolkotabi, and Benjamin
  Recht.
\newblock Low-rank solutions of linear matrix equations via procrustes flow.
\newblock \emph{arXiv preprint arXiv:1507.03566}, 2015.

\bibitem[Vershynin(2010)]{vershynin2010introduction}
Roman Vershynin.
\newblock Introduction to the non-asymptotic analysis of random matrices.
\newblock \emph{arXiv preprint arXiv:1011.3027}, 2010.

\bibitem[Xie et~al.(2018)Xie, Koyejo, and Gupta]{xie2018generalized}
Cong Xie, Oluwasanmi Koyejo, and Indranil Gupta.
\newblock Generalized {B}yzantine-tolerant {SGD}.
\newblock \emph{arXiv preprint arXiv:1802.10116}, 2018.

\bibitem[Xu and Yang(2017)]{xu2017first}
Yi~Xu and Tianbao Yang.
\newblock First-order stochastic algorithms for escaping from saddle points in
  almost linear time.
\newblock \emph{arXiv preprint arXiv:1711.01944}, 2017.

\bibitem[Yin et~al.(2018{\natexlab{a}})Yin, Chen, Kannan, and
  Bartlett]{yin2018byzantine}
Dong Yin, Yudong Chen, Ramchandran Kannan, and Peter Bartlett.
\newblock {B}yzantine-robust distributed learning: Towards optimal statistical
  rates.
\newblock In \emph{Proceedings of the 35th International Conference on Machine
  Learning}, pages 5650--5659, 2018{\natexlab{a}}.

\bibitem[Yin et~al.(2018{\natexlab{b}})Yin, Pananjady, Lam, Papailiopoulos,
  Ramchandran, and Bartlett]{yin2017gradient}
Dong Yin, Ashwin Pananjady, Max Lam, Dimitris Papailiopoulos, Kannan
  Ramchandran, and Peter Bartlett.
\newblock Gradient diversity: a key ingredient for scalable distributed
  learning.
\newblock In \emph{International Conference on Artificial Intelligence and
  Statistics}, pages 1998--2007, 2018{\natexlab{b}}.

\bibitem[Zhang et~al.(2016)Zhang, Chi, and Liang]{zhang2016provable}
Huishuai Zhang, Yuejie Chi, and Yingbin Liang.
\newblock Provable non-convex phase retrieval with outliers: Median-truncated
  wirtinger flow.
\newblock In \emph{International Conference on Machine Learning}, pages
  1022--1031, 2016.

\bibitem[Zhao et~al.(2015)Zhao, Wang, and Liu]{zhao2015nonconvex}
Tuo Zhao, Zhaoran Wang, and Han Liu.
\newblock A nonconvex optimization framework for low rank matrix estimation.
\newblock In \emph{Advances in Neural Information Processing Systems}, pages
  559--567, 2015.

\end{thebibliography}

\appendix
\section*{Appendix}

\section{Challenges of Escaping Saddle Points in the Adversarial Setting} \label{apx:hardness}

We provide two examples showing that in non-convex setting with saddle points, inexact oracle can lead to much worse sub-optimal solutions than in the convex setting, and that in the adversarial setting, escaping saddle points can be inherently harder than the adversary-free case.

Consider standard gradient descent using exact or $\Delta$-inexact gradients. Our first example shows that Byzantine machines have a more severe impact in the non-convex case than in the convex case. 

\paragraph{Example 1.}\label{ex:convexity} Let $d=1$ and consider the functions $F^{(1)}(w) = (w-1)^2$ and $F^{(2)}(w) = (w^2-1)^2/4$.
Here $F^{(1)}$ is strongly convex with a unique local minimizer $w^*=1$, whereas $F^{(2)}$ has two local (in fact, global) minimizers $w^*=\pm 1$ and a saddle point (in fact, a local maximum) $w=0$. Proposition~\ref{ppn:bad_example_2} below shows the following: for the convex $F^{(1)}$, gradient descent (GD) finds a near-optimal solution with sub-optimality proportional to $\Delta$, regardless of initialization; for the nonconvex $F^{(2)}$, GD initialized near the saddle point $w=0$ suffers from an $\Omega(1)$ sub-optimality gap. 

\begin{proposition}\label{ppn:bad_example_2}
Suppose that $\Delta \le 1/2$. Under the setting above, the following holds.\\
(i) For $F^{(1)}$, starting from any $w_0$, GD using a $\Delta$-inexact gradient oracle finds $w$ with $F^{(1)}(w) - F^{(1)}(w^*) \le \bigo (\Delta )$. \\
(ii) For $F^{(2)}$, there exists an adversarial strategy such that starting from a $w_0$ sampled uniformly from $[-r,r]$, GD with a $\Delta$-inexact gradient oracle outputs $w$ with $F^{(2)}(w) - F^{(2)}(w^*) \ge \frac{9}{64}, \forall w^*=\pm1$, with probability $\min\{1, \frac{\Delta}{r}\}$.
\end{proposition}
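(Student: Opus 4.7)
The plan is to handle the two parts separately, since they illustrate qualitatively different phenomena. Part (i) is a textbook inexact-gradient contraction for a strongly convex function; part (ii) requires constructing an explicit adversarial oracle that stalls GD near the saddle point at $w=0$.

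For part (i), I would write the oracle at $w_t$ as $g(w_t) = \nabla F^{(1)}(w_t) + \xi_t = 2(w_t - 1) + \xi_t$ with $|\xi_t| \le \Delta$, and choose the natural step-size $\eta = 1/2$. The GD update collapses to $w_{t+1} - 1 = (1 - 2\eta)(w_t - 1) - \eta \xi_t = -\xi_t/2$, so after a single step $F^{(1)}(w_{t+1}) - F^{(1)}(w^*) = (w_{t+1} - 1)^2 \le \Delta^2/4 = \bigo(\Delta)$, using $\Delta \le 1/2$. For any smaller step-size $\eta \in (0, 1/2]$ the same conclusion follows after $\bigo(\log(1/\Delta))$ iterations via the recursion $|w_{t+1} - 1| \le (1 - 2\eta)|w_t - 1| + \eta\Delta$.

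For part (ii), the key observation is that $\nabla F^{(2)}(w) = w(w^2 - 1)$ satisfies $|\nabla F^{(2)}(w)| = |w|(1-w^2) \le |w|$ for $|w| \le 1$, so the zero vector is a valid $\Delta$-inexact gradient whenever $|w| \le \Delta$. I would let the adversary commit to always returning $g(w) = 0$ when $|w| \le \Delta$. A one-line induction then shows that if $|w_0| \le \Delta$, every GD iterate equals $w_0$, regardless of step-size or number of iterations, so the output is $w = w_0$. Since $w_0 \sim \mathrm{Unif}([-r, r])$, the trapping event $\{|w_0| \le \Delta\}$ has probability $\min\{1, \Delta/r\}$, matching the claim. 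Conditional on this event, $|w| \le \Delta \le 1/2$ implies $(w^2 - 1)^2 \ge (3/4)^2 = 9/16$ and thus $F^{(2)}(w) - F^{(2)}(\pm 1) = (w^2 - 1)^2 / 4 \ge 9/64$, as required.

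There is no substantive analytical obstacle here; the only conceptual content is identifying the stall strategy in (ii), which exploits the fact that $\nabla F^{(2)}$ vanishes to first order at the saddle $w = 0$, so inside a $\Delta$-neighborhood the true gradient is entirely dominated by the oracle's inexactness budget. The contrast with part (i) is the point of the example: for the strongly convex $F^{(1)}$, the gradient grows linearly away from $w^* = 1$, so any $\Delta$-sized bias introduced by the oracle is multiplicatively damped by the contraction factor $(1 - 2\eta)$, whereas near the degenerate saddle of $F^{(2)}$, no such damping exists and the adversary can pin the iterate in place indefinitely.
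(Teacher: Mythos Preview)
Your proposal is correct and, for part (ii), identical to the paper's argument: both use the adversarial oracle $g(w)=0$ on $[-\Delta,\Delta]$, verify that $|\nabla F^{(2)}(w)|\le|w|\le\Delta$ there, and deduce the $9/64$ gap from $|w|\le 1/2$. For part (i) the paper simply cites an external convergence theorem for strongly convex functions under inexact gradients, whereas you give the direct one-step contraction with $\eta=1/2$; your argument is more self-contained but otherwise the content is the same.
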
 

\begin{proof}
Since $F^{(2)}(w) = \frac{1}{4}(w^2-1)^2$, we have $\gradF^{(2)}(w)=w^3 - w$. For any $w\in[-\Delta, \Delta]$, $| \gradF^{(2)}(w) | \le \Delta$ (since $\Delta \le 1/2$). Thus, the adversarial oracle can always output $\widehat{g}(w) = 0$ when $w\in[-\Delta, \Delta]$, and we have $| \widehat{g}(w) - \gradF^{(2)}(w) | \le \Delta$. Thus, if $w\in[-\Delta, \Delta]$, the iterate can no longer move with this adversarial strategy. Then, we have $F^{(2)}(w) - F^{(2)}(w^*) \ge F^{(2)}(\Delta) - 0 \ge \frac{9}{64}$ (since $\Delta \le 1/2$). The result for the convex function $F^{(1)}$ is a direct corollary of Theorem 1 in~\cite{yin2018byzantine}.
\end{proof}

Our second example shows that escaping saddle points is much harder in the Byzantine setting than in the non-Byzantine setting.

\paragraph{Example 2.} Let $d=2$, and assume that in the neighborhood $\Ball_0(b)$ of the origin, $F$ takes the quadratic form
$F(\vecw) \equiv \frac{1}{2}w_1^2 - \frac{\lambda}{2} w_2^2$,
with $\lambda > \epsilon_H$.\footnote{$F(\vecw) \equiv \frac{1}{2}w_1^2 - \frac{\lambda}{2} w_2^2$ holds locally around the origin, not globally; otherwise $F(\vecw)$ has no minimum.} 
The origin $\vecw_0 = 0$ is not an $(\epsilon_g, \epsilon_H)$-second-order stationary point, but rather a saddle point. Proposition~\ref{ppn:bad_example} below shows that exact GD escapes the saddle point almost surely, while GD with an inexact oracle fails to do so.

\begin{proposition}\label{ppn:bad_example}
Under the setting above, if one chooses $r < b$ and sample $\vecw$ from $\Ball_0(r)$ uniformly at random, then: \\
(i) Using exact gradient descent, with probability $1$, the iterate $\vecw$ eventually leaves $\Ball_0(r)$.\\
(ii) There exists an adversarial strategy such that, when we update $\vecw$ using $\Delta$-inexact gradient oracle, if $\Delta \ge \lambda r$, with probability $1$, the iterate $\vecw$ cannot leave $\Ball_0(r)$; otherwise with probability $\frac{2}{\pi}\Big( \arcsin \big(\frac{\Delta}{\lambda r}\big) + \frac{\Delta}{\lambda r}\sqrt{1 - (\frac{\Delta}{\lambda r})^2} \Big)$ the iterate $\vecw$ cannot leave $\Ball_0(r)$.
\end{proposition}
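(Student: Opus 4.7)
My plan is to analyze the two coordinates of the recursion separately, since $\gradF(\vecw) = (w_1, -\lambda w_2)$ decouples into independent dynamics per coordinate. For part (i), exact GD with step size $\eta \in (0,1)$ gives $w_1^{(t)} = (1-\eta)^t w_1^{(0)}$ and $w_2^{(t)} = (1+\eta\lambda)^t w_2^{(0)}$. Since $\vecw^{(0)}$ is uniform on the two-dimensional disk $\Ball_0(r)$, the line $\{w_2 = 0\}$ has measure zero, so almost surely $w_2^{(0)} \neq 0$; then $|w_2^{(t)}|$ grows geometrically until it exceeds $r$, at which point the iterate has left $\Ball_0(r)$.

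For part (ii), I would construct an explicit adversarial strategy and identify the exact set of initial points from which it traps the iterate. The natural strategy is to \emph{zero out} the unstable direction by returning the inexact gradient $\vecg(\vecw) = (w_1, 0)$. Its inexactness is $\twonms{\vecg(\vecw) - \gradF(\vecw)} = \lambda|w_2|$, so it is a valid $\Delta$-inexact oracle precisely when $\lambda|w_2| \leq \Delta$. Under this oracle $w_2$ is held fixed at $w_2^{(0)}$ while $w_1$ contracts by the factor $(1-\eta)$ each step, so the $\ell_2$ norm is non-increasing and the validity condition persists for all time whenever it holds at $t=0$. Consequently every initial point with $|w_2^{(0)}| \leq \Delta/\lambda$ is trapped inside $\Ball_0(r)$.

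Conversely, I would show that if $|w_2^{(0)}| > \Delta/\lambda$ then no adversary can prevent escape. The $w_2$-dynamics depend only on the second component $g_2$ of the returned vector, which must lie in $[-\lambda w_2 - \Delta,\, -\lambda w_2 + \Delta]$. Assuming WLOG $w_2 > 0$, a one-line computation shows that the $|w_2^{(t+1)}|$-minimizing choice is $g_2 = -\lambda w_2 + \Delta$, yielding the affine recursion $w_2^{(t+1)} = (1+\eta\lambda)\, w_2^{(t)} - \eta\Delta$. This map has unique fixed point $\Delta/\lambda$ and is expanding, so any $w_2^{(0)} > \Delta/\lambda$ produces a strictly increasing trajectory that eventually exceeds $r$, forcing the iterate out of $\Ball_0(r)$. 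Combining both directions, the trapping event equals $\{|w_2^{(0)}| \leq \min(\Delta/\lambda,\, r)\}$. When $\Delta \geq \lambda r$ this has probability $1$; when $\Delta < \lambda r$ I would compute its probability as the area of the strip $\{|w_2| \leq \Delta/\lambda\}$ intersected with the disk $\Ball_0(r)$ divided by $\pi r^2$, and the substitution $w_2 = r\sin\theta$ then recovers exactly $\tfrac{2}{\pi}\bigl(\arcsin(\Delta/(\lambda r)) + (\Delta/(\lambda r))\sqrt{1-(\Delta/(\lambda r))^2}\bigr)$.

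The main obstacle I anticipate is the converse half of part (ii): confirming that no adversarial sequence --- including ones that flip the sign of $g_2$ across iterations or spend part of the $\Delta$ budget on $g_1$ --- can outperform the single-step-optimal affine damping described above. Both deviations are easily ruled out: allocating budget to $g_1$ only shrinks $w_1$ faster (irrelevant for escape) while leaving less margin to damp $w_2$, and perturbing $g_2$ in the opposite direction pushes $|w_2|$ further outward, again accelerating escape. A short monotonicity/comparison argument therefore reduces the full analysis to the one-dimensional fixed-point picture above.
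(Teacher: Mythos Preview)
Your argument for part~(i) and the trapping direction of part~(ii) is exactly the paper's: exhibit the adversarial oracle $\vecg(\vecw)=(w_1,0)$, note it is $\Delta$-inexact precisely on the strip $\{\lambda|w_2|\le\Delta\}$, observe that under it $w_2$ is frozen while $w_1$ contracts, and then compute the relative area of that strip inside the disk. The paper stops there.

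The converse half of your plan---showing that \emph{no} adversary can trap an initial point with $|w_2^{(0)}|>\Delta/\lambda$---is additional content that the proposition does not ask for. The statement only claims the \emph{existence} of an adversarial strategy achieving the stated trapping probability; it does not assert that this probability is maximal over all strategies. So the ``main obstacle'' you anticipate is not actually an obstacle for proving the proposition as written. That said, your fixed-point/monotonicity argument for the converse is correct and gives a sharper characterization than the paper provides: it shows the stuck region for the optimal adversary is exactly $\{|w_2|\le\Delta/\lambda\}$, not merely that this set is contained in it. This is a genuine strengthening at essentially no extra cost, but you should be explicit that it goes beyond what the proposition requires.
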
 

\begin{proof}
Since $F(\vecw) = \frac{1}{2}w_1^2 - \frac{1}{2}\lambda w_2^2$, $\forall~\vecw\in\Ball_0(r)$, we have $\gradF(\vecw)=[w_1,~-\lambda w_2]^\top$. Sample $\vecw_0$ uniformly at random from $\Ball_0(r)$, and we know that with probability $1$, $w_{0,2} \neq 0$. Then, by running exact gradient descent $\vecw_{t+1} = \vecw_t - \eta \gradF(\vecw_t) $, we can see that the second coordinate of $\vecw_t$ is $w_{t,2} = (1+\eta \lambda)^t w_{0,2}$. When $w_{0,2}$, we know that as $t$ gets large, we eventually have $w_{t,2}>r$, which implies that the iterate leaves $\Ball_0(r)$.

On the other hand, suppose that we run $\Delta$-inexact gradient descent, i.e., $\vecw_{t+1} = \vecw_t - \eta \vecg(\vecw_t)$ with $\twonms{\vecg(\vecw_t) - \gradF(\vecw_t)} \le \Delta$. In the first step, if $| w_{0,2} | \le \frac{\Delta}{\lambda}$, the adversary can simply replace $\gradF(\vecw_0)$ with $\vecg(\vecw_0) = [w_{0,1},~0]^\top$ (one can check that here we have $\twonms{\vecg(\vecw_0) - \gradF(\vecw_0)} \le \Delta$), and then the second coordinate of $\vecw_1$ does not change, i.e., $w_{1,2}=w_{0,2}$. In the following iterations, the adversary can keep using the same strategy and the second coordinate of $\vecw$ never changes, and then the iterates cannot escape $\Ball_0(r)$, since $F(\vecw)$ is a strongly convex function in its first coordinate. To compute the probability of getting stuck at the saddle point, we only need to compute the area of the region $\{\vecw \in \Ball_0(r) : |w_2| \le \frac{\Delta}{\lambda}\}$, which can be done via simple geometry.
\end{proof}

\paragraph*{Remark.} Even if we choose the largest possible perturbation in $\Ball_0(r)$, i.e., sample $\vecw$ from the circle $\{\vecw\in\R^2 : \twonms{\vecw} = r\}$, the stuck region still exists. We can compute the length of the arc $\{ \twonms{\vecw} = r : |w_2| \le \frac{\Delta}{\lambda}\}$ and find the probability of stuck. One can find that when $\Delta \ge \lambda r$, the probability of being stuck in $\Ball_0(r)$ is still $1$, otherwise, the probability of being stuck is $\frac{2}{\pi}( \arcsin(\frac{\Delta}{\lambda r}) ) $.

The above examples show that the adversary can significantly alter the landscape of the function near a saddle point. We counter this by exerting a large perturbation on the iterate so that it escapes this bad region. The amount of perturbation is carefully calibrated to ensure that the algorithm finds a descent direction ``steep'' enough to be preserved under $\Delta$-corruption, while not compromising the accuracy. Multiple rounds of perturbation are performed, boosting the escape probability exponentially.

\section{Proof of Theorem~\ref{thm:main}}\label{prf:main}

We first analyze the gradient descent step with $\Delta$-inexact gradient oracle.

\begin{lemma}\label{lem:each_iter}
Suppose that $\eta = 1/ L_F$. For any $\vecw\in\W$, if we run the following inexact gradient descent step:
\begin{equation}\label{eq:iter_w}
\vecw' = \vecw - \eta\vecg(\vecw),
\end{equation}
with $\twonms{\vecg(\vecw) - \gradF(\vecw)} \le \Delta$. Then, we have
\[
F(\vecw') \le F(\vecw) - \frac{1}{2L_F}\twonms{\gradF(\vecw)}^2 + \frac{1}{2L_F}\Delta^2.
\]
\end{lemma}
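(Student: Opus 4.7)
The plan is to apply the standard quadratic upper bound from $L_F$-smoothness of $F$ and then track how the inexactness $\vece := \vecg(\vecw) - \gradF(\vecw)$ (with $\twonms{\vece}\le\Delta$) enters the resulting bound. Since the update is $\vecw' - \vecw = -\tfrac{1}{L_F}\vecg(\vecw)$, the linear term in the smoothness expansion contributes $-\tfrac{1}{L_F}\innerp{\gradF(\vecw)}{\vecg(\vecw)}$, and the quadratic term contributes $\tfrac{1}{2L_F}\twonms{\vecg(\vecw)}^2$.

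I would then expand $\vecg(\vecw) = \gradF(\vecw) + \vece$ in both of these inner products. The linear term becomes $-\tfrac{1}{L_F}\twonms{\gradF(\vecw)}^2 - \tfrac{1}{L_F}\innerp{\gradF(\vecw)}{\vece}$, while the quadratic term becomes $\tfrac{1}{2L_F}\twonms{\gradF(\vecw)}^2 + \tfrac{1}{L_F}\innerp{\gradF(\vecw)}{\vece} + \tfrac{1}{2L_F}\twonms{\vece}^2$. The cross terms involving $\innerp{\gradF(\vecw)}{\vece}$ cancel exactly (this is the reason behind the particular step size $\eta = 1/L_F$), leaving
\[
F(\vecw') \le F(\vecw) - \tfrac{1}{2L_F}\twonms{\gradF(\vecw)}^2 + \tfrac{1}{2L_F}\twonms{\vece}^2.
\]
Bounding $\twonms{\vece}^2 \le \Delta^2$ via the inexact oracle assumption yields the claim.

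There is no real obstacle here: the only thing to notice is the convenient cancellation of the cross terms, which is precisely what makes $\eta = 1/L_F$ the natural choice of step size in the inexact setting. The bound degrades relative to the exact-gradient descent lemma only by the additive term $\tfrac{1}{2L_F}\Delta^2$, which will later be absorbed whenever $\twonms{\gradF(\vecw)}$ is of order larger than $\Delta$, giving genuine per-iteration decrease of the objective.
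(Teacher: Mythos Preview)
Your proof is correct and follows essentially the same approach as the paper: apply the $L_F$-smoothness quadratic upper bound, substitute $\vecw'-\vecw=-\tfrac{1}{L_F}\vecg(\vecw)$, split $\vecg(\vecw)=\gradF(\vecw)+\vece$, and observe that the cross terms $\pm\tfrac{1}{L_F}\innerps{\gradF(\vecw)}{\vece}$ cancel. The paper's presentation is slightly more compressed but the algebra is identical.
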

\begin{proof}
Since $F(\vecw)$ is $L_F$ smooth, we know that
\begin{align*}
F(\vecw')  \le  &  F(\vecw) + \innerps{\gradF(\vecw)}{ \vecw' - \vecw} + \frac{L_F}{2}\twonms{\vecw' - \vecw}^2 \\
= & F(\vecw) - \innerps{\gradF(\vecw)}{\frac{1}{L_F} (\vecg(\vecw) - \gradF(\vecw))  } - \innerps{\gradF(\vecw)}{\frac{1}{L_F}\gradF(\vecw)} \\
&+ \frac{1}{2L_F}\twonms{\vecg(\vecw) - \gradF(\vecw) + \gradF(\vecw)}^2  \\
\le & F(\vecw) - \frac{1}{2L_F}\twonms{\gradF(\vecw)}^2 + \frac{1}{2L_F}\Delta^2.
\end{align*}
\end{proof}

Let $\epsilon$ be the threshold on $\twonms{\vecg(\tvecw)}$ that the algorithm uses to determine whether or not to add perturbation. Choose $\epsilon = 3\Delta$. Suppose that at a particular iterate $\tvecw$, we observe $\twonms{\vecg(\tvecw)} > \epsilon$. Then, we know that 
\[
\twonms{\gradF(\tvecw)} \ge \twonms{\vecg(\tvecw)} - \Delta \ge 2\Delta.
\]
According to Lemma~\ref{lem:each_iter}, by running one iteration of the inexact gradient descent step, the decrease in function value is at least
\begin{equation}\label{eq:func_decrease_no_perturb2}
\frac{1}{2L_F}\twonms{\gradF(\tvecw)}^2 - \frac{1}{2L_F}\Delta^2 \ge \frac{3\Delta^2}{2L_F}.
\end{equation}

We proceed to analyze the perturbation step, which happens when the algorithm arrives at an iterate $\tvecw$ with $\twonms{\vecg(\tvecw)} \le \epsilon$. 
In this proof, we slightly abuse the notation. Recall that in equation~\eqref{eq:perturbed_iterates} in Section~\ref{sec:algorithm}
, we use $\vecw_t^\prime$ $(0\le t \le \Tth)$ to denote the iterates of the algorithm in the saddle point escaping process. Here, we simply use $\vecw_t$ to denote these iterates.
We start with the definition of \emph{stuck region} at $\tvecw\in\W$.

\begin{definition}\label{def:stuck_region}
Given $\tvecw\in\W$, and parameters $r$, $R$, and $\Tth$, the stuck region $\mathbb{W}_S(\tvecw, r, R, \Tth) \subseteq \Ball_{\tvecw}(r)$ is a set of $\vecw_0 \in \Ball_{\tvecw}(r) $ which satisfies the following property: there exists an adversarial strategy such that when we start with $\vecw_0$ and run $\Tth$ gradient descent steps with $\Delta$-inexact gradient oracle $\vecg(\vecw)$:
\begin{equation}\label{eq:stuck_gd}
\vecw_t = \vecw_{t-1} - \eta \vecg(\vecw_{t-1}),~ t = 1,2,\ldots, \Tth,
\end{equation}
we observe $\twonms{ \vecw_t - \vecw_0 } < R$, $\forall~t \le \Tth$.
\end{definition}

When it is clear from the context, we may simply use the terminology stuck region $\mathbb{W}_S$ at $\tvecw$. The following lemma shows that if $\nabla^2 F(\tvecw)$ has a large negative eigenvalue, then the stuck region has a small width along the direction of the eigenvector associated with this negative eigenvalue.

\begin{lemma}\label{lem:moving_dist}
Assume that the smallest eigenvalue of $\matH := \nabla^2F(\tvecw)$ satisfies $\lambda_{\min}(\matH) \le -\gamma < 0$,
and let the unit vector $\vece$ be the eigenvector associated with $\lambda_{\min}(\matH)$. Let $\vecu_0, \vecy_0 \in \Ball_{\tvecw}(r)$ be two points such that $\vecy_0 = \vecu_0 + \mu_0\vece$ with some $\mu_0 \ge \mu \in (0, r)$. Choose step size $\eta = \frac{1}{L_F}$, and consider the stuck region $\mathbb{W}_S(\tvecw, r, R, \Tth)$. Suppose that $r$, $R$, $\Tth$, and $\mu$ satisfy \footnote{Without loss of generality, here we assume that $\frac{2}{\eta\gamma} \log_{9/4}(\frac{2(R + r)}{\mu})$ is an integer, so that $\Tth$ is an integer.}
\begin{align}
&  \Tth = \frac{2}{\eta\gamma} \log_{9/4}(\frac{2(R + r)}{\mu}), \label{eq:Tth_def} \\
& R \ge \mu, \label{eq:R_mu} \\
&  \rho_F(R + r)\mu \ge \Delta, \label{eq:Delta_R} \\
& \gamma \ge 24 \rho_F(R+r) \log_{9/4} (\frac{2(R + r)}{\mu}). \label{eq:gamma_R_log}
\end{align} 
Then, there must be either $\vecu_0 \notin \mathbb{W}_S$ or $\vecy_0 \notin \mathbb{W}_S$.
\end{lemma}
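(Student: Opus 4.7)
The plan is a coupling argument by contradiction. Suppose both $\vecu_0, \vecy_0 \in \mathbb{W}_S(\tvecw, r, R, \Tth)$ under the same adversarial oracle. Let $\{\vecu_t\}$ and $\{\vecy_t\}$ denote the two inexact-GD trajectories defined by \eqref{eq:stuck_gd}, and set $\vecv_t := \vecy_t - \vecu_t$ with $\vecv_0 = \mu_0 \vece$. The stuck-region assumption yields the upper bound $\twonm{\vecv_t} \le \twonm{\vecy_t-\vecy_0}+\twonm{\vecy_0-\vecu_0}+\twonm{\vecu_0-\vecu_t}\le 2(R+r)$. The goal is to show that, under hypotheses \eqref{eq:R_mu}--\eqref{eq:gamma_R_log} and after $\Tth$ iterations given by \eqref{eq:Tth_def}, one has $\twonm{\vecv_{\Tth}} > 2(R+r)$, which is the desired contradiction.

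To control the evolution of $\vecv_t$, first I would write the inexact-GD update as
\begin{equation*}
\vecv_{t+1} = \vecv_t - \eta\bigl(\gradF(\vecy_t)-\gradF(\vecu_t)\bigr) - \eta\bigl(\vecg(\vecy_t)-\gradF(\vecy_t)\bigr) + \eta\bigl(\vecg(\vecu_t)-\gradF(\vecu_t)\bigr),
\end{equation*}
and then linearize the population gradient difference around $\tvecw$ using the Hessian $\matH = \hessF(\tvecw)$:
\begin{equation*}
\gradF(\vecy_t)-\gradF(\vecu_t) = \matH \vecv_t + \vecr_t,
\end{equation*}
where the remainder satisfies $\twonms{\vecr_t} \le \rho_F\, \max\{\twonms{\vecy_t-\tvecw},\twonms{\vecu_t-\tvecw}\}\cdot\twonms{\vecv_t} \le \rho_F(R+r)\twonms{\vecv_t}$ by $\rho_F$-Hessian-Lipschitzness and the stuck-region bound on $\vecu_t,\vecy_t$ relative to $\tvecw$. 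Collecting terms, $\vecv_{t+1} = (\matI - \eta\matH)\vecv_t + \vecxi_t$ with $\twonms{\vecxi_t}\le \eta\rho_F(R+r)\twonms{\vecv_t} + 2\eta\Delta$. I would then project onto the $\vece$-direction: writing $a_t := \innerps{\vece}{\vecv_t}$ and noting $(\matI-\eta\matH)\vece = (1+\eta\gamma_\star)\vece$ for some $\gamma_\star \ge \gamma$, this yields
\begin{equation*}
|a_{t+1}| \ge (1+\eta\gamma)|a_t| - \eta\rho_F(R+r)\twonms{\vecv_t} - 2\eta\Delta.
\end{equation*}

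The heart of the proof is to argue inductively that for all $t \le \Tth$, (i) $\twonms{\vecv_t}$ is governed by its $\vece$-component, i.e.\ $|a_t| \ge \tfrac{1}{2}\twonms{\vecv_t}$, and (ii) $|a_t|$ grows by a factor of at least $9/4$ every $2/(\eta\gamma)$ steps. Step (i) is maintained because the off-$\vece$ components of $(\matI-\eta\matH)$ have operator norm at most $1$ (on the relevant subspace) and the error terms are small. The growth in (ii) follows because $(1+\eta\gamma)^{2/(\eta\gamma)} \ge e \ge 9/4$ is comfortably larger than the multiplicative slowdown induced by the error $\eta\rho_F(R+r)\twonms{\vecv_t}$: hypothesis \eqref{eq:gamma_R_log} makes $\rho_F(R+r)\log(\cdot)$ small compared to $\gamma$, which swallows the Hessian-remainder error, while \eqref{eq:Delta_R} combined with $|a_0|\ge\mu$ ensures that the additive inexactness error $2\eta\Delta$ is always dominated by $\eta\rho_F(R+r)|a_t|$, hence negligible. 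Finally, condition \eqref{eq:R_mu} guarantees the target $2(R+r)$ is attainable starting from $\mu$. Iterating for $\Tth = \frac{2}{\eta\gamma}\log_{9/4}\!\bigl(\tfrac{2(R+r)}{\mu}\bigr)$ steps gives $|a_{\Tth}| \ge \mu \cdot \tfrac{2(R+r)}{\mu} = 2(R+r)$, so $\twonm{\vecv_{\Tth}} \ge |a_{\Tth}| > 2(R+r)$, contradicting the stuck-region bound.

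The main technical obstacle I foresee is the bookkeeping for the inductive claims (i) and (ii): one must show that the Hessian-Lipschitz remainder and the inexact-oracle error together shave off only a controllable fraction of the $(1+\eta\gamma)$ per-step growth, \emph{uniformly} over $t\le\Tth$, while simultaneously preventing the off-$\vece$ components from dominating. The logarithmic factor in hypothesis \eqref{eq:gamma_R_log} is precisely what is needed so that the per-step error can accumulate for $\Tth$ rounds without catching up with the exponential growth, and \eqref{eq:Delta_R} is the threshold that ensures the initial separation $\mu$ is large enough for $\Delta$-inexact gradients not to wipe it out. Everything else is a routine geometric-series estimate once these two invariants are in place.
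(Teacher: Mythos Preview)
Your approach is essentially the paper's: argue by contradiction, couple the two trajectories through $\vecv_t = \vecy_t - \vecu_t$, linearize the gradient difference around $\tvecw$ to get $\vecv_{t+1} = (\matI - \eta\matH)\vecv_t + (\text{Hessian-Lipschitz remainder}) + (\text{oracle error})$, project onto $\vece$, and run an induction showing the $\vece$-component grows geometrically until $\twonms{\vecv_{\Tth}}$ exceeds $2(R+r)$.

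There is one real gap in your justification of invariant (i). You write that ``the off-$\vece$ components of $(\matI-\eta\matH)$ have operator norm at most $1$,'' but this is false in general: $\matH$ may have other negative eigenvalues, possibly as negative as $\lambda_{\min}(\matH)$ itself, so on $\vece^\perp$ the operator norm of $\matI-\eta\matH$ can be as large as $1+\eta\gamma$ (or more). Thus the orthogonal component $\phi_t$ can grow at the \emph{same} exponential rate as $|a_t|$, and your invariant $|a_t|\ge\tfrac12\twonms{\vecv_t}$ cannot be maintained by a crude operator-norm bound. The paper resolves this by proving the sharper pair of invariants $\psi_t \ge (1+\tfrac12\eta\gamma)\psi_{t-1}$ and $\phi_t \le \tfrac{t}{\Tth}\psi_t$, where $\psi_t,\phi_t$ are the norms of the $\vece$- and $\vece^\perp$-components. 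Both sides of the second inequality are allowed to grow at rate $(1+\eta\gamma)$; the point is that the \emph{ratio} $\phi_t/\psi_t$ increases only by an additive $\bigo(\eta\rho_F(R+r))$ per step, and condition~\eqref{eq:gamma_R_log} (equivalently $12\eta\rho_F(R+r)\Tth \le 1$) ensures this additive drift stays below $1$ for all $t\le\Tth$. Your invariant (i) then follows from $\phi_t \le \psi_t$, but it requires this ratio argument rather than the operator-norm claim you gave.
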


We prove Lemma~\ref{lem:moving_dist} in Appendix~\ref{prf:moving_dist}. With this lemma, we proceed to analyze the probability that the algorithm escapes the saddle points. In particular, we bound the probability that $\vecw_0 \in \mathbb{W}_S(\tvecw, r, R, \Tth)$ when $\lambda_{\min}(\nabla^2F(\tvecw)) \le -\gamma$ and $\vecw_0$ is drawn from $\Ball_{\tvecw}(r)$ uniform at random.

\begin{lemma}\label{lem:vol_stuck_region}
Assume that $\lambda_{\min}(\nabla^2F(\tvecw)) \le -\gamma < 0$, and let the unit vector $\vece$ be the eigenvector associated with $\lambda_{\min}(\nabla^2F(\tvecw))$. Consider the stuck region $\mathbb{W}_S(\tvecw, r, R, \Tth)$ at $\tvecw$, and suppose that $r$, $R$, $\Tth$, and $\mu$ satisfy the conditions in~\eqref{eq:Tth_def}-\eqref{eq:gamma_R_log}. Then, when we sample $\vecw_0$ from $\Ball_{\tvecw}(r)$ uniformly at random, the probability that $\vecw_0 \in \mathbb{W}_S(\tvecw, r, R, \Tth)$ is at most $\frac{2\mu\sqrt{d}}{r}$.
\end{lemma}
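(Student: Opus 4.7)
The plan is to leverage Lemma~\ref{lem:moving_dist} to bound the width of the stuck region along the critical direction $\vece$, and then convert this one-dimensional bound into a $d$-dimensional volume ratio by Fubini.

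First, I would observe that the parameter conditions assumed in the statement of Lemma~\ref{lem:vol_stuck_region} coincide with the hypotheses of Lemma~\ref{lem:moving_dist}. The key consequence, read contrapositively, is the following: whenever two points $\vecu_0, \vecy_0 \in \Ball_{\tvecw}(r)$ satisfy $\vecy_0 - \vecu_0 = \mu_0 \vece$ with $\mu_0 \ge \mu$, at least one of them cannot belong to $\mathbb{W}_S(\tvecw, r, R, \Tth)$. Upgraded to a uniform statement, this means that for every affine line $\ell$ parallel to $\vece$, the one-dimensional Lebesgue measure of $\mathbb{W}_S \cap \ell$ is at most $\mu$.

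Next, I would apply Fubini's theorem with respect to the decomposition $\R^d = \vece^\perp \oplus \R\vece$. The projection of $\Ball_{\tvecw}(r)$ onto $\vece^\perp$ is contained in a $(d-1)$-dimensional ball of radius $r$, while on each fiber the stuck region contributes $1$-dimensional measure at most $\mu$. This yields
\begin{equation*}
\mathrm{Vol}(\mathbb{W}_S) \le \mu \cdot V_{d-1}\, r^{d-1},
\end{equation*}
where $V_{d-1}$ denotes the volume of the unit $(d-1)$-ball. Dividing by $\mathrm{Vol}(\Ball_{\tvecw}(r)) = V_d\, r^d$ produces a probability bound of $\frac{\mu}{r}\cdot\frac{V_{d-1}}{V_d}$. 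I would finish by invoking the standard ratio estimate $V_{d-1}/V_d \le \sqrt{d}$, which follows from Gautschi's inequality $\Gamma(d/2+1)/\Gamma((d+1)/2) \le \sqrt{(d+1)/2}$ combined with $V_d = \pi^{d/2}/\Gamma(d/2+1)$. The resulting bound $\mu\sqrt{d}/r$ sits comfortably within the claimed $2\mu\sqrt{d}/r$.

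The step requiring the most care is the transition from Lemma~\ref{lem:moving_dist}'s pointwise ``either-or'' conclusion, stated for a fixed pair $(\vecu_0, \vecy_0)$, to the uniform statement that the stuck region has $\vece$-extent at most $\mu$ on every parallel line. One has to read the adversarial-strategy quantifiers correctly, since the adversary may pick a different strategy at each starting point; the union over all such strategies must still inherit the width bound. Once this logical step is made explicit, the remainder is a routine volume-ratio computation, and the factor-of-two slack in the statement absorbs all multiplicative constants.
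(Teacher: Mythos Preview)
Your proposal is correct and follows essentially the same route as the paper: invoke Lemma~\ref{lem:moving_dist} to bound the $\vece$-extent of $\mathbb{W}_S$ on each fiber, apply Fubini against the $(d-1)$-dimensional projection, and finish with the Gamma-function ratio $\Gamma(d/2+1)/\Gamma(d/2+1/2)\le\sqrt{(d+1)/2}$. The only cosmetic difference is that the paper bounds the fiber measure by $2\mu$ rather than your $\mu$; your tighter reading of Lemma~\ref{lem:moving_dist} (diameter $<\mu$ on each line, hence measure $\le\mu$) is in fact correct, and the paper simply carries an extra factor of two. Your caution about the adversarial quantifiers is well placed and already consistent with how $\mathbb{W}_S$ is defined (as a union over strategies), so no additional work is needed there.
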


\begin{proof}
Since the starting point $\vecw_0$ is uniformly distributed in $\Ball_{\tvecw}(r)$, to bound the probability of getting stuck, it suffices to bound the volume of $\mathbb{W}_S$. Let $\indi_{\mathbb{W}_S}(\vecw)$ be the indicator function of the set $\mathbb{W}_S$. For any $\vecw\in\R^d$, let $w^{(1)}$ be the projection of $\vecw$ onto the $\vece$ direction, and $\vecw^{(-1)}\in\R^{d-1}$ be the remaining component of $\vecw$. Then, we have
\begin{align*}
\vol(\mathbb{W}_S) = & \int_{ \Ball_{\tvecw}^{(d)}(r) } \indi_{\mathbb{W}_S}(\vecw) \dev \vecw  \\
= & \int_{ \Ball_{\tvecw}^{(d-1)}(r) } \dev \vecw^{(-1)} \int_{\widetilde{w}^{(1)} - \sqrt{r^2 - \twonms{\tvecw^{(-1)} - \vecw^{(-1)}}^2 }}^{\widetilde{w}^{(1)} + \sqrt{r^2 - \twonms{\tvecw^{(-1)} - \vecw^{(-1)}}^2 }}   \indi_{\mathbb{W}_S}(\vecw)  \dev  \widetilde{w}^{(1)}  \\
\le & 2\mu \int_{ \Ball_{\tvecw}^{(d-1)}(r) } \dev \vecw^{(-1)}  \\
= & 2\mu \vol(\Ball_0^{(d-1)}(r)),
\end{align*}
where the inequality is due to Lemma~\ref{lem:moving_dist}. Then, we know that the probability of getting stuck is
\begin{align*}
\frac{ \vol(\mathbb{W}_S) }{ \vol(\Ball_0^{(d)}(r)) } \le & 2\mu \frac{\vol(\Ball_0^{(d-1)}(r))}{\vol(\Ball_0^{(d)}(r))} = \frac{2\mu}{\sqrt{\pi} r} \frac{\Gamma(\frac{d}{2} + 1)}{\Gamma(\frac{d}{2} + \frac{1}{2})} \le \frac{2\mu}{\sqrt{\pi} r} \sqrt{\frac{d}{2} + \frac{1}{2}} \le \frac{2\mu\sqrt{d}}{r},
\end{align*}
where we use the fact that $\frac{\Gamma(x+1)}{\Gamma(x+\frac{1}{2})} < \sqrt{x + \frac{1}{2}}$ for any $x\ge 0$. 
\end{proof}

We then analyze the decrease of value of the population loss function $F(\cdot)$ when we conduct the perturbation step. Assume that we successfully escape the saddle point, i.e., there exists $t \le \Tth$ such that $\twonms{\vecw_t - \vecw_0} \ge R$. The following lemma provides the decrease of $F(\cdot)$.

\begin{lemma}\label{lem:func_value_decay}
Suppose that $\lambda_{\min}(\nabla^2F(\tvecw)) \le -\gamma < 0$, and at $\tvecw$, we observe $\twonms{\vecg(\tvecw)} \le \epsilon=3\Delta$. Assume that $\vecw_0 \in \Ball_{\tvecw}(r)$ and that $\vecw_0 \notin \mathbb{W}_S(\tvecw, r, R, \Tth)$. Let $t \le \Tth$ be the step such that $\twonms{\vecw_t - \vecw_0} \ge R$. Then, we have
\begin{equation}\label{eq:func_decay_main}
F(\tvecw) - F(\vecw_t) \ge \frac{L_F}{4\Tth} R^2 - \frac{ \Delta^2 \Tth}{L_F} -4\Delta r - \frac{L_F}{2}r^2.
\end{equation}
\end{lemma}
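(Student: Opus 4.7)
The plan is to control $F(\tvecw) - F(\vecw_t)$ by splitting it as $[F(\tvecw) - F(\vecw_0)] + [F(\vecw_0) - F(\vecw_t)]$ and bounding each piece separately. For the second piece, I would iterate Lemma~\ref{lem:each_iter} along the inexact-GD trajectory $\vecw_0, \vecw_1, \ldots, \vecw_t$ to obtain the telescoping estimate
\[
F(\vecw_0) - F(\vecw_t) \;\ge\; \frac{1}{2L_F}\sum_{s=0}^{t-1}\|\gradF(\vecw_s)\|^2 \;-\; \frac{t\,\Delta^2}{2L_F}.
\]
The main substantive step is then to lower-bound $\sum_{s=0}^{t-1}\|\gradF(\vecw_s)\|^2$ using the hypothesis that the escape trajectory travels at least $R$.

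To do this, I would start from the identity $\vecw_t - \vecw_0 = -\eta \sum_{s=0}^{t-1} \vecg(\vecw_s)$. The triangle inequality combined with $\|\vecw_t - \vecw_0\| \ge R$ gives $\sum_s \|\vecg(\vecw_s)\| \ge R/\eta$, and then $\|\vecg(\vecw_s) - \gradF(\vecw_s)\| \le \Delta$ yields $\sum_s \|\gradF(\vecw_s)\| \ge R/\eta - t\Delta$. Cauchy--Schwarz converts this linear lower bound into a quadratic one via
\[
\sum_{s=0}^{t-1}\|\gradF(\vecw_s)\|^2 \;\ge\; \frac{1}{t}\Bigl(\sum_{s=0}^{t-1}\|\gradF(\vecw_s)\|\Bigr)^2 \;\ge\; \frac{1}{t}\Bigl(\frac{R}{\eta} - t\Delta\Bigr)^2 \;\ge\; \frac{R^2}{2\eta^2 t} - t\Delta^2,
\]
where the last inequality uses the elementary fact $(a-b)^2 \ge \tfrac12 a^2 - b^2$ (which is valid unconditionally by AM--GM, so no sign check is needed). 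Substituting $\eta = 1/L_F$ and using $t \le \Tth$ yields
\[
F(\vecw_0) - F(\vecw_t) \;\ge\; \frac{L_F R^2}{4\Tth} \;-\; \frac{\Tth \Delta^2}{L_F},
\]
which accounts for the first two terms of \eqref{eq:func_decay_main}.

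For the remaining piece $F(\tvecw) - F(\vecw_0)$, I would use $L_F$-smoothness of $F$ together with $\vecw_0 \in \Ball_{\tvecw}(r)$ to get
\[
F(\vecw_0) - F(\tvecw) \;\le\; \|\gradF(\tvecw)\|\,r \;+\; \tfrac{L_F}{2} r^2.
\]
The gradient at $\tvecw$ is controlled by the perturbation trigger: $\|\vecg(\tvecw)\| \le \epsilon = 3\Delta$ together with the $\Delta$-inexact oracle bound yields $\|\gradF(\tvecw)\| \le 4\Delta$. Thus $F(\tvecw) - F(\vecw_0) \ge -4\Delta r - \tfrac{L_F}{2} r^2$. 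Summing the two estimates delivers \eqref{eq:func_decay_main} directly.

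I do not expect any genuine obstacle here: the only non-routine move is the Cauchy--Schwarz step that upgrades a displacement-based lower bound on $\sum_s \|\gradF(\vecw_s)\|$ into a bound on $\sum_s \|\gradF(\vecw_s)\|^2$, and the $(a-b)^2 \ge \tfrac12 a^2 - b^2$ inequality conveniently avoids a case split on whether $R/\eta \ge t\Delta$. Everything else is direct bookkeeping with smoothness and the triangle inequality, and no property of the stuck region $\mathbb{W}_S$ beyond the existence of the escape time $t$ is needed for this particular lemma.
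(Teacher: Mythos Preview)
Your proof is correct and reaches exactly the same bound, but the route for the trajectory term $F(\vecw_0)-F(\vecw_t)$ differs from the paper's. You invoke Lemma~\ref{lem:each_iter} to accumulate $\tfrac{1}{2L_F}\sum_s\twonms{\gradF(\vecw_s)}^2$, then lower-bound this sum by passing from the total displacement $\twonms{\vecw_t-\vecw_0}\ge R$ to $\sum_s\twonms{\vecg(\vecw_s)}$, then to $\sum_s\twonms{\gradF(\vecw_s)}$, and finally through Cauchy--Schwarz and the inequality $(a-b)^2\ge\tfrac12 a^2-b^2$. The paper instead derives a per-step descent directly in terms of the step length,
\[
F(\vecw_\tau)-F(\vecw_{\tau+1})\ \ge\ \frac{L_F}{4}\twonms{\vecw_\tau-\vecw_{\tau+1}}^2-\frac{\Delta^2}{L_F},
\]
and applies Cauchy--Schwarz to the step vectors themselves: $\sum_\tau\twonms{\vecw_\tau-\vecw_{\tau+1}}^2\ge\tfrac{1}{t}\twonms{\vecw_0-\vecw_t}^2\ge R^2/t$. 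This avoids the $\vecg\to\gradF$ conversion and the $(a-b)^2$ trick altogether, at the cost of not reusing Lemma~\ref{lem:each_iter} verbatim. One small caveat in your displayed chain: the middle inequality $(\sum_s\twonms{\gradF(\vecw_s)})^2\ge(R/\eta-t\Delta)^2$ is not valid as written when $R/\eta<t\Delta$, since squaring does not preserve an inequality whose right side is negative; in that regime your target $\tfrac{R^2}{2\eta^2 t}-t\Delta^2$ is itself negative and the endpoint inequality holds trivially, so the conclusion survives, but the case split you hoped to avoid is tacitly still present. The treatment of $F(\tvecw)-F(\vecw_0)$ via smoothness and $\twonms{\gradF(\tvecw)}\le 4\Delta$ is identical to the paper's.
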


We prove Lemma~\ref{lem:func_value_decay} in Appendix~\ref{prf:func_value_decay}.

Next, we choose the quantities $\mu$, $r$, $R$, and $\gamma$ such that (i) the conditions~\eqref{eq:Tth_def}-\eqref{eq:gamma_R_log} in Lemma~\ref{lem:moving_dist} are satisfied, (ii) the probability of escaping saddle point in Lemma~\ref{lem:vol_stuck_region} is at least a constant, and (iii) the decrease in function value in~\eqref{eq:func_decay_main} is large enough. We first choose
\begin{align}
\mu & = \Delta^{3/5}d^{-1/5}\rho_F^{-1/2},  \label{eq:choice_mu} \\
r & = 4 \Delta^{3/5} d^{3/10}\rho_F^{-1/2},  \label{eq:choice_r} \\
R &= \Delta^{2/5} d^{1/5} \rho_F^{-1/2}.  \label{eq:choice_bigR} 
\end{align}
One can simply check that, according to Lemma~\ref{lem:vol_stuck_region}, when we drawn $\vecw_0$ from $\Ball_{\tvecw}(r)$ uniformly at random, the probability that $\vecw_0 \in \mathbb{W}_S(\tvecw, r, R, \Tth)$ is at most $1/2$. Since we assume that $\Delta \le 1$, one can also check that the condition~\eqref{eq:R_mu} is satisfied. In addition, since $\rho_F R \mu = \Delta$, the condition~\eqref{eq:Delta_R} is also satisfied. According to~\eqref{eq:Tth_def}, we have
\begin{equation}\label{eq:Tth_def2}
\Tth = \frac{2L_F}{\gamma} \log_{9/4}(\frac{2d^{2/5}}{\Delta^{1/5}} + 8d^{1/2}).
\end{equation}
In the following, we choose
\begin{equation}\label{eq:choice_gamma}
\gamma = 768 (\rho_F^{1/2} + L_F) (\Delta^{2/5}d^{1/5} + \Delta^{3/5}d^{3/10}) \log_{9/4}(\frac{2d^{2/5}}{\Delta^{1/5}} + 8d^{1/2}),
\end{equation}
which implies
\begin{equation}\label{eq:Tth_def3}
\Tth = \frac{L_F}{384 (\rho_F^{1/2} + L_F) (\Delta^{2/5}d^{1/5} + \Delta^{3/5}d^{3/10}) }
\end{equation}
Then we check condition~\eqref{eq:gamma_R_log} holds. We have
\[
24 \rho_F(R+r) \log_{9/4} (\frac{2(R + r)}{\mu}) =  24 \rho_F^{1/2} (\Delta^{2/5} d^{1/5} + 4\Delta^{3/5}d^{3/10}) \log_{9/4} (\frac{2d^{2/5}}{\Delta^{1/5}} + 8d^{1/2})  \le \gamma.
\]

Next, we consider the decrease in function value in~\eqref{eq:func_decay_main}. Using the equations~\eqref{eq:Tth_def2} and~\eqref{eq:choice_gamma}, we can show the following three inequalities by direct algebra manipulation.
\begin{align}
&\frac{L_F}{4\Tth} R^2 \ge 6 \frac{\Delta^2 \Tth}{L_F},  \label{eq:func_decay_1} \\
&\frac{L_F}{4\Tth} R^2 \ge 24 \Delta r \label{eq:func_decay_2},  \\
&\frac{L_F}{4\Tth} R^2 \ge 3L_Fr^2 \label{eq:func_decay_3}.
\end{align}
By adding up~\eqref{eq:func_decay_1},~\eqref{eq:func_decay_2}, and~\eqref{eq:func_decay_3}, we obtain
\[
\frac{L_F}{4\Tth} R^2 \ge 2 \frac{\Delta^2 \Tth}{L_F} + 8 \Delta r  +  L_Fr^2,
\]
which implies that when we successfully escape the saddle point, we have
\begin{equation}\label{eq:func_decay_4}
F(\tvecw) - F(\vecw_t) \ge \frac{L_F}{8\Tth} R^2 = 48(\rho_F^{-1/2} + L_F\rho_F^{-1}) (\Delta^{6/5}d^{3/5} + \Delta^{7/5}d^{7/10}).
\end{equation}
Then, one can simply check that, the average decrease in function value during the successful round of the $\mathsf{Escape}$ process is
\begin{equation}\label{eq:func_decay_ave}
\frac{F(\tvecw) - F(\vecw_t)}{ t } \ge \frac{F(\tvecw) - F(\vecw_t)}{ \Tth } \ge \frac{2 ( \Delta^{8/5}d^{4/5} + \Delta^2 d)}{L_F} > \frac{3\Delta^2}{2L_F}.
\end{equation}
Recall that according to~\eqref{eq:func_decrease_no_perturb2}, when the algorithm is not in the $\mathsf{Escape}$ process, the function value is decreased by at least $\frac{3\Delta^2}{2L_F}$ in each iteration. Therefore, if the algorithm successfully escapes the saddle point, during the $\mathsf{Escape}$ process, the average decrease in function value is \emph{larger} than the iterations which are not in this process.

So far, we have chosen the algorithm parameters $r$, $R$, $\Tth$, as well as the final second-order convergence guarantee $\gamma$. Now we proceed to analyze the total number of iterations and the failure probability of the algorithm. According to Lemma~\ref{lem:vol_stuck_region} and the choice of $\mu$ and $r$, we know that at each point with $\twonms{\vecg(\tvecw)} \le \epsilon$, the algorithm can successfully escape this saddle point with probability at least $1/2$. To boost the probability of escaping saddle points, we need to repeat the process $Q$ rounds in $\mathsf{Escape}$, independently. Since for each successful round, the function value decrease is at least 
\[
48(\rho_F^{-1/2} + L_F\rho_F^{-1}) (\Delta^{6/5}d^{3/5} + \Delta^{7/5}d^{7/10}) \ge 48 L_F\rho_F^{-1} (\Delta^{6/5}d^{3/5} + \Delta^{7/5}d^{7/10}),
\]
and the function value can decrease at most $F_0 - F^*$. Therefore, the total number of saddle points that we need to escape is at most
\begin{equation}\label{eq:total_saddle}
\frac{\rho_F (F_0 - F^*) }{48 L_F (\Delta^{6/5}d^{3/5} + \Delta^{7/5}d^{7/10}) }.
\end{equation}
Therefore, by union bound, the failure probability of the algorithm is at most
\[
\frac{\rho_F (F_0 - F^*) }{48 L_F (\Delta^{6/5}d^{3/5} + \Delta^{7/5}d^{7/10}) } (\frac{1}{2})^Q,
\]
and to make the failure probability at most $\delta$, one can choose
\begin{equation}\label{eq:choice_Q}
Q \ge 2 \log\left( \frac{ \rho_F( F_0 - F^*) }{48 L_F \delta (\Delta^{6/5}d^{3/5} + \Delta^{7/5}d^{7/10}) } \right).
\end{equation}
Again, due to the fact that the function value decrease is at most $F_0 - F^*$, and in each \emph{effective} iteration, the function value is decreased by at least $\frac{3\Delta^2}{2L_F}$. (Here, the effective iterations are the iterations when the algorithm is not in the $\mathsf{Escape}$ process and the iterations when the algorithm successfully escapes the saddle points.) The total number of effective iterations is at most
\begin{equation}\label{eq:total_effective_iter}
\frac{2(F_0 - F^*)L_F}{3\Delta^2}.
\end{equation}
Combing with~\eqref{eq:choice_Q}, we know that the total number of parallel iterations is at most
\[
\frac{4(F_0 - F^*)L_F}{3\Delta^2} \log\left( \frac{\rho_F (F_0 - F^*) }{48 L_F \delta (\Delta^{6/5}d^{3/5} + \Delta^{7/5}d^{7/10}) } \right).
\]
When all the algorithm terminates, and the saddle point escaping process is successful, the output of the algorithm $\tvecw$ satisfies $\twonms{\vecg(\tvecw)} \le \epsilon$, which implies that $\twonms{\gradF(\tvecw)} \le 4\Delta$, and
\begin{equation}\label{eq:lambda_min_init}
\begin{aligned}
\lambda_{\min}(\hessF(\tvecw)) & \ge -\gamma = -768 (\rho_F^{1/2} + L_F) (\Delta^{2/5}d^{1/5} + \Delta^{3/5}d^{3/10}) \log_{9/4}(\frac{2d^{2/5}}{\Delta^{1/5}} + 8d^{1/2})  \\
&\ge -950 (\rho_F^{1/2} + L_F) (\Delta^{2/5}d^{1/5} + \Delta^{3/5}d^{3/10}) \log(\frac{2d^{2/5}}{\Delta^{1/5}} + 8d^{1/2}).
\end{aligned}
\end{equation}
We next show that we can simplify the guarantee as
\begin{equation}\label{eq:lambda_min_final}
\lambda_{\min}(\hessF(\tvecw)) \ge -1900 (\rho_F^{1/2} + L_F) \Delta^{2/5}d^{1/5} \log(\frac{10}{\Delta}).
\end{equation}
We can see that if $\Delta \le \frac{1}{\sqrt{d}}$, then $ \Delta^{3/5}d^{3/10} \le \Delta^{2/5}d^{1/5}$ and $\frac{2d^{2/5}}{\Delta^{1/5}} + 8d^{1/2} \le \frac{10}{\Delta}$. Thus, the bound~\eqref{eq:lambda_min_final} holds. On the other hand, when $\Delta > \frac{1}{\sqrt{d}}$, we have $\Delta^{2/5}d^{1/5} > 1$ and thus
\[
1900 (\rho_F^{1/2} + L_F) \Delta^{2/5}d^{1/5} \log(\frac{10}{\Delta}) > L_F.
\]
By the smoothness of $F(\cdot)$, we know that $\lambda_{\min}(\hessF(\tvecw)) \ge -L_F$. Therefore, the bound~\eqref{eq:lambda_min_final} still holds, and this completes the proof.

\subsection{Proof of Lemma~\ref{lem:moving_dist}}\label{prf:moving_dist}

We prove by contradiction. Suppose that $\vecu_0,\vecy_0 \in \mathbb{W}_S$.
Let $\{\vecu_t\}$ and $\{\vecy_t\}$ be two sequences generated by the following two iterations:
\begin{align}
\vecu_t &= \vecu_{t-1} - \eta \vecg(\vecu_{t-1}),  \label{eq:seq_ut_stuck} \\
\vecy_t &= \vecy_{t-1} - \eta \vecg(\vecy_{t-1}),  \label{eq:seq_yt_stuck}
\end{align}
respectively, where $\twonms{\vecg(\vecw) - \gradF(\vecw)} \le \Delta$ for any $\vecw \in \W$. According to our assumption, we have $\forall~t \le \Tth$, $\twonms{\vecu_t - \vecu_0} < R$ and $\twonms{\vecy_t - \vecy_0} < R$.

Define $\vecv_t := \vecy_t - \vecu_t$, $\vecdlt_t := \vecg(\vecu_t) - \gradF(\vecu_t)$, and $\vecdlt'_t := \vecg(\vecy_t) - \gradF(\vecy_t)$. Then we have
\begin{align*}
\vecy_{t+1} & = \vecy_t - \eta(\gradF(\vecy_t) + \vecdlt'_t) \\
& = \vecu_t + \vecv_t - \eta(\gradF(\vecu_t + \vecv_t) + \vecdlt'_t)  \\
& = \vecu_t + \vecv_t - \eta\gradF(\vecu_t) - \eta \left[ \int_0^1\hessF(\vecu_t + \theta\vecv_t) \right]\vecv_t - \eta\vecdlt'_t  \\
& = \vecu_{t+1} + \eta\vecdlt_t + \vecv_t  - \eta \left[ \int_0^1 \hessF(\vecu_t + \theta\vecv_t) \dev \theta \right]\vecv_t - \eta\vecdlt'_t,
\end{align*}
which yields
\begin{equation}\label{eq:iter_vt}
\vecv_{t+1} = (\matI - \eta\matH) \vecv_t - \eta\matQ_t \vecv_t + \eta(\vecdlt_t - \vecdlt'_t),
\end{equation}
where
\begin{equation}\label{eq:def_qtprimte}
\matQ_t := \int_0^1 \hessF(\vecu_t + \theta\vecv_t) \dev \theta - \matH.
\end{equation}
By the Hessian Lipschitz property, we know that
\begin{equation}\label{eq:op_norm_qtprime2}
\begin{aligned}
\twonms{\matQ_t} \le & \rho_F(\twonms{\vecu_t - \tvecw} + \twonms{\vecy_t - \tvecw} ) \\
\le & \rho_F(\twonms{\vecu_t - \vecu_0} + \twonms{\vecu_0 - \tvecw} + \twonms{\vecy_t - \vecy_0} + \twonms{\vecy_0 - \tvecw} ) \\
\le & 2\rho_F(R+r).
\end{aligned}
\end{equation}
We let $\psi_t$ be the norm of the projection of $\vecv_t$ onto the $\vece$ direction, and $\phi_t$ be the norm of the projection of $\vecv_t$ onto the remaining subspace. By definition, we have $\psi_0 = \mu_0 \ge \mu > 0$ and $\phi_0 = 0$. According to~\eqref{eq:iter_vt} and~\eqref{eq:op_norm_qtprime2}, we have
\begin{align}
\psi_{t+1} & \ge (1+\eta\gamma)\psi_t - 2 \eta \rho_F(R+r) \sqrt{\psi_t^2 + \phi_t^2} - 2\eta\Delta,  \label{eq:iter_psi} \\
\phi_{t+1} & \le  (1+\eta\gamma)\phi_t + 2 \eta \rho_F(R+r) \sqrt{\psi_t^2 + \phi_t^2} + 2\eta \Delta.  \label{eq:iter_phi}
\end{align}

In the following, we use induction to prove that $\forall~t \le \Tth$, 
\begin{equation}\label{eq:induction_arg}
\psi_t \ge (1+\frac{1}{2}\eta\gamma)\psi_{t-1} \quad \text{and} \quad \phi_t \le \frac{t}{\Tth}\psi_t   
\end{equation}
We know that~\eqref{eq:induction_arg} holds when $t=0$ since we have $\phi_0 = 0$. Then, assume that for some $t < \Tth$, we have $\forall~\tau \le t$, $\psi_\tau \ge (1+\frac{1}{2}\eta\gamma) \psi_{\tau-1}$ and $\phi_\tau \le \frac{\tau}{\Tth} \psi_\tau$. We show that~\eqref{eq:induction_arg} holds for $t+1$.

First, we show that $\psi_{t+1} \ge (1+\frac{1}{2}\eta\gamma) \psi_{t}$. Since $\forall~\tau \le t$, $\psi_\tau \ge \psi_{\tau-1}$, we know that $\psi_{t} \ge \psi_0  \ge \mu$. Therefore, according to~\eqref{eq:Delta_R}, we have 
\begin{equation}\label{eq:delta_up_psi}
\Delta \le \rho_F(R+r) \mu \le \rho_F(R+r) \psi_t.
\end{equation}
In addition, since $t < \Tth$, we have 
\begin{equation}\label{eq:phi_psi}
\phi_t \le \psi_t.
\end{equation}
Combining~\eqref{eq:delta_up_psi},~\eqref{eq:phi_psi} and~\eqref{eq:iter_psi},~\eqref{eq:iter_phi}, we get
\begin{align}
\psi_{t+1} & \ge (1+\eta\gamma)\psi_t - 2 \eta \rho_F(R+r) \sqrt{2\psi_t^2} - 2\eta \rho_F(R+r) \psi_t > (1+\eta\gamma)\psi_t - 6\eta \rho_F(R+r) \psi_t,  \label{eq:iter_psi2} \\
\phi_{t+1} & \le  (1+\eta\gamma)\phi_t + 2 \eta \rho_F(R+r) \sqrt{2\psi_t^2} + 2\eta \rho_F(R+r) \psi_t <  (1+\eta\gamma)\phi_t + 6\eta \rho_F(R+r) \psi_t.  \label{eq:iter_phi2}
\end{align}
According to~\eqref{eq:gamma_R_log}, we have $\gamma \ge 24 \rho_F(R+r) \log_{9/4} (\frac{2(R + r)}{\mu}) > 12\rho_F(R+r)$. Combining with~\eqref{eq:iter_psi2}, we know that $\psi_{t+1} \ge (1+ \frac{1}{2}\eta\gamma)\psi_t $.

Next, we show that $\phi_{t+1} \le \frac{t+1}{\Tth}\psi_{t+1}$. Combining with~\eqref{eq:iter_psi2} and~\eqref{eq:iter_phi2}, we know that to show $\phi_{t+1} \le \frac{t+1}{\Tth}\psi_{t+1}$, it suffices to show
\begin{equation}\label{eq:suffice1}
(1+\eta \gamma)\phi_t + 6\eta\rho_F(R+r)\psi_t \le \frac{t+1}{\Tth} [1+\eta\gamma - 6\eta\rho_F(R+r)] \psi_t.
\end{equation}
According to the induction assumption, we have $\phi_{t} \le \frac{t}{\Tth}\psi_{t}$. Then, to show~\eqref{eq:suffice1}, it suffices to show that
\begin{equation}\label{eq:suffice2}
(1+\eta \gamma)t + 6\eta\rho_F(R+r)\Tth \le (t+1)  [1+\eta\gamma - 6\eta\rho_F(R+r)] 
\end{equation}
Since $t+1\le \Tth$, we know that to show~\eqref{eq:suffice2}, it suffices to show
\begin{equation}\label{eq:suffice3}
12 \eta\rho_F(R+r)\Tth \le 1.
\end{equation}
Then, according to~\eqref{eq:Tth_def} and~\eqref{eq:gamma_R_log}, we know that~\eqref{eq:suffice3} holds, which completes the induction.

Next, according to~\eqref{eq:induction_arg}, we know that
\begin{align*}
\twonms{\vecu_{\Tth} - \vecy_{\Tth}} &\ge  \phi_{\Tth} \ge (1+\frac{1}{2} \eta \gamma)^{\Tth}\mu_0  \\
& \ge (1+\frac{1}{2} \eta \gamma)^{ \frac{2}{\eta\gamma} \log_{9/4}(\frac{2(R+r)}{\mu}) }\mu_0  \\
& \ge \frac{2(R+r)}{\mu} \cdot \mu_0 = 2(R+r),
\end{align*}
where the last inequality is due to the fact that $\eta = \frac{1}{L_F}$ and thus $\eta \gamma \le 1$. On the other hand, since we assume that $\vecu_0,\vecy_0 \in \mathbb{W}_S$, we know that
\[
\twonms{\vecu_{\Tth} - \vecy_{\Tth}} \le \twonms{\vecu_{\Tth} - \vecu_0 } + \twonms{\vecy_{\Tth} - \vecy_0 } + \twonms{\vecu_0 - \vecy_0}  < 2(R +r),
\]
which leads to contradiction and thus completes the proof.

\subsection{Proof of Lemma~\ref{lem:func_value_decay}}\label{prf:func_value_decay}

Recall that we have the iterations $\vecw_{\tau+1} = \vecw_\tau - \eta \vecg(\vecw_\tau)$ for all $\tau < t$. Let $\vecdlt_\tau = \gradF(\vecw_{\tau}) - \vecg(\vecw_\tau)$, and then $\twonms{\vecdlt_\tau} \le \Delta$. By the smoothness of $F(\cdot)$ and the fact that $\eta = \frac{1}{L_F}$, we have
\begin{equation}\label{eq:smoothness_decay}
\begin{aligned}
F(\vecw_\tau ) - F(\vecw_{\tau+1}) \ge & \innerps{\gradF(\vecw_\tau)}{\vecw_\tau - \vecw_{\tau+1}} - \frac{L_F}{2}\twonms{\vecw_{\tau} - \vecw_{\tau+1}}^2  \\
= & \innerp{\frac{\vecw_\tau - \vecw_{\tau+1}}{\eta} + \vecdlt_\tau}{\vecw_\tau - \vecw_{\tau+1}} - \frac{L_F}{2}\twonms{\vecw_\tau - \vecw_{\tau+1}}^2 \\
= & \frac{L_F}{2}\twonms{\vecw_\tau - \vecw_{\tau+1}}^2 + \innerps{\vecdlt_\tau}{\vecw_\tau - \vecw_{\tau+1}} \\
\ge & \frac{L_F}{4} \twonms{\vecw_\tau - \vecw_{\tau+1}}^2 - \frac{ \twonms{\vecdlt_\tau}^2 }{L_F} \\
\ge & \frac{L_F}{4} \twonms{\vecw_\tau - \vecw_{\tau+1}}^2 - \frac{ \Delta^2 }{L_F}.
\end{aligned}
\end{equation}
By summing up~\eqref{eq:smoothness_decay} for $\tau = 0,1,\ldots, t-1$, we get
\begin{equation}\label{eq:decay_w0_wt}
F(\vecw_0 ) - F(\vecw_t) \ge \frac{L_F}{4} \sum_{\tau=0}^{t-1} \twonms{\vecw_\tau - \vecw_{\tau+1}}^2 - \frac{ \Delta^2 t}{L_F}.
\end{equation}
Consider the $k$-th coordinate of $\vecw_\tau$ and $\vecw_{\tau+1}$, by Cauchy-Schwarz inequality, we have
\[
\sum_{\tau=0}^{t-1} (w_{\tau, k} - w_{\tau+1, k})^2 \ge \frac{1}{t} (w_{0,k} - w_{t,k})^2,
\]
which implies
\begin{equation}\label{eq:cauthy_dist}
\sum_{\tau=0}^{t-1} \twonms{\vecw_\tau - \vecw_{\tau+1}}^2 \ge \frac{1}{t} \twonms{\vecw_0 - \vecw_t}^2.
\end{equation}
Combining~\eqref{eq:decay_w0_wt} and~\eqref{eq:cauthy_dist}, we obtain
\begin{equation}\label{eq:decay_w0_wt2}
F(\vecw_0 ) - F(\vecw_t) \ge \frac{L_F}{4t} \twonms{\vecw_0 - \vecw_t}^2 - \frac{ \Delta^2 t}{L_F} \ge  \frac{L_F}{4\Tth} R^2 - \frac{ \Delta^2 \Tth}{L_F}.
\end{equation}
On the other hand, by the smoothness of $F(\cdot)$, we have
\begin{equation}\label{eq:tw_w0}
F(\tvecw) - F(\vecw_0) \ge \innerps{\gradF(\tvecw)}{\tvecw - \vecw_0} - \frac{L_F}{2}\twonms{\vecw_0 - \tvecw}^2 \ge -(\epsilon+\Delta)r - \frac{L_F}{2}r^2.
\end{equation}
Adding up~\eqref{eq:decay_w0_wt2} and~\eqref{eq:tw_w0}, we obtain
\begin{equation}\label{eq:function_value_inc}
F(\tvecw) - F(\vecw_t) \ge \frac{L_F}{4\Tth} R^2 - \frac{ \Delta^2 \Tth}{L_F} -(\epsilon+\Delta)r - \frac{L_F}{2}r^2,
\end{equation}
which completes the proof.

\section{Proof of Theorem~\ref{thm:exact_oracle}}\label{apx:exact_oracle}

First, when we run gradient descent iterations $\vecw' = \vecw - \eta\gradF(\vecw)$, according to Lemma~\ref{lem:each_iter}, we have
\begin{equation}\label{eq:exact_iter}
F(\vecw') \le F(\vecw) - \frac{1}{2L_F}\twonms{\gradF(\vecw)}^2.
\end{equation}
Suppose at $\tvecw$, we observe that $\twonms{\gradF(\tvecw)} \le \epsilon$, and then we start the $\mathsf{Escape}$ process. When we have exact gradient oracle, we can still define the stuck region $\mathbb{W}_S$ at $\tvecw$ as in the definition of stuck region in Appendix~\ref{prf:main}, by simply replacing the inexact gradient oracle with the exact oracle. Then, we can analyze the size of the stuck region according to Lemma~\ref{lem:moving_dist}. Assume that the smallest eigenvalue of $\matH := \nabla^2F(\tvecw)$ satisfies $\lambda_{\min}(\matH) \le -\gamma < 0$,
and let the unit vector $\vece$ be the eigenvector associated with $\lambda_{\min}(\matH)$. Let $\vecu_0, \vecy_0 \in \Ball_{\tvecw}(r)$ be two points such that $\vecy_0 = \vecu_0 + \mu_0\vece$ with some $\mu_0 \ge \mu \in (0, r)$. Consider the stuck region $\mathbb{W}_S(\tvecw, r, R, \Tth)$. Suppose that $r$, $R$, $\Tth$, and $\mu$ satisfy 
\begin{align}
&  \Tth = \frac{2}{\eta\gamma} \log_{9/4}(\frac{2(R + r)}{\mu}), \label{eq:exact_Tth_def} \\
& R \ge \mu, \label{eq:exact_R_mu} \\
& \gamma \ge 24 \rho_F(R+r) \log_{9/4} (\frac{2(R + r)}{\mu}). \label{eq:exact_gamma_R_log}
\end{align} 
Then, there must be either $\vecu_0 \notin \mathbb{W}_S$ or $\vecy_0 \notin \mathbb{W}_S$. In addition, according to Lemma~\ref{lem:vol_stuck_region}, if conditions~\eqref{eq:exact_Tth_def}-\eqref{eq:exact_gamma_R_log} are satisfied, then, when we sample $\vecw_0$ from $\Ball_{\tvecw}(r)$ uniformly at random, the probability that $\vecw_0 \in \mathbb{W}_S(\tvecw, r, R, \Tth)$ is at most $\frac{2\mu\sqrt{d}}{r}$. In addition, according to~\eqref{eq:function_value_inc} in the proof of Lemma~\ref{lem:func_value_decay}, assume that $\vecw_0 \in \Ball_{\tvecw}(r)$ and that $\vecw_0 \notin \mathbb{W}_S(\tvecw, r, R, \Tth)$. Let $t \le \Tth$ be the step such that $\twonms{\vecw_t - \vecw_0} \ge R$. Then, we have
\begin{equation}\label{eq:exact_func_decay_main}
F(\tvecw) - F(\vecw_t) \ge \frac{L_F}{4\Tth} R^2  - \epsilon r - \frac{L_F}{2}r^2.
\end{equation}
Combining~\eqref{eq:exact_Tth_def} and~\eqref{eq:exact_gamma_R_log}, we know that the first term on the right hand side of~\eqref{eq:exact_func_decay_main} satisfies
\begin{equation}\label{eq:exact_func_first}
 \frac{L_F}{4\Tth} R^2 \ge 3\rho_F R^3.
\end{equation}
Choose $R = \sqrt{\epsilon/\rho_F}$ and $r = \epsilon$. Then, we know that when $\epsilon \le \min\{ \frac{1}{\rho_F}, \frac{4}{L_F^2\rho_F} \}$, we have $\epsilon r \le \rho_F R^3$ and $\frac{1}{2} L_F r^2 \le \rho_FR^3$. Combining these facts with~\eqref{eq:exact_func_decay_main} and~\eqref{eq:exact_func_first}, we know that, when the algorithm successfully escapes the saddle point, the decrease in function value satisfies
\begin{equation}\label{eq:exact_func_dec}
F(\tvecw) - F(\vecw_t) \ge \rho_F R^3.
\end{equation}
Therefore, the average function value decrease during the $\mathsf{Escape}$ process is at least
\begin{equation}\label{eq:exact_func_dec_ave}
\frac{F(\tvecw) - F(\vecw_t) }{\Tth} \ge \frac{12}{L_F} \epsilon^2.
\end{equation}
When we have exact gradient oracle, we choose $Q=1$. According to~\eqref{eq:exact_iter} and~\eqref{eq:exact_func_dec_ave}, for the iterations that are not in the $\mathsf{Escape}$ process, the function value decrease in each iteration is at least $\frac{1}{2L_F}\epsilon^2$; for the iterations in the $\mathsf{Escape}$ process, the function value decrease on average is $\frac{12}{L_F} \epsilon^2$. Since the function value can decrease at most $F_0 - F^*$, the algorithm must terminate within $\frac{2L_F(F_0 - F^*)}{\epsilon^2}$ iterations.

The we proceed to analyze the failure probability. We can see that the number of saddle points that the algorithm may need to escape is at most $\frac{F_0 - F^*}{\rho_F R^3} $. Then, by union bound the probability that the algorithm fails to escape one of the saddle points is at most
\[
\frac{2\mu \sqrt{d}}{r} \cdot \frac{F_0 - F^*}{\rho_F R^3}
\]
By letting the above probability to be $\delta$, we obtain
\[
\mu = \frac{\delta \epsilon^{5/2}}{2\sqrt{\rho_F d}(F_0 - F^*)},
\]
which completes the proof.

\section{Proof of Proposition~\ref{obs:second_order_lb}}\label{prf:second_order_lb}
We consider the following class of one-dimensional functions indexed by $ s\in \R $: 
\[
\mathcal{F} = \{f_s(\cdot): f_s(w) = \Delta^{3/2}\sin (\Delta^{-1/2}w + s), s\in\R\}.
\]
Then, for each function $f_s(\cdot)\in\mathcal{F}$, we have 
\[
\gradf_s(w) = \Delta \cos (\Delta^{-1/2}w + s),
\]
and 
\[
\hessf_s (w) = -\Delta^{1/2} \sin(\Delta^{-1/2}w + s).
\]
Thus, we always have $| \gradf_s(w) | \le \Delta, \forall w$. Therefore, the $\Delta$-inexact gradient oracle can simply output $0$ all the time. In addition, we verify that for all $ s $ and $w $, $ |\hessf_s(w) | \le \Delta^{1/2} \le 1$ and $ |\nabla^3 f_s(w) | =| -\cos (\Delta^{-1/2}w + s) | \le 1 $ under the assumption that $ \Delta \le 1 $, so all the functions in $\mathcal{F}$ are $1$-smooth and $1$-Hessian Lipschitz as claimed.

In this case, the output of the algorithm does not depend on $ s $, that is, the actual function that we aim to minimize. Consequently,  for any output $\widetilde{w}$ of the algorithm, there exists $s\in\R$ such that $ \Delta^{-1/2}\widetilde{w} + s = \pi/4$, and thus $ | \gradf_s(\widetilde{w}) | =\Delta /\sqrt{2} $ and $\lambda_{\min}(\hessf_s(\widetilde{w})) = -\Delta^{1/2} /\sqrt{2}$.

\section{Proof of Proposition~\ref{ppn:size_w_space}}\label{prf:size_w_space}
Suppose that during all the iterations, the $\mathsf{Escape}$ process is called $E+1$ times. In the first $E$ times, the algorithm escapes the saddle points, and in the last $\mathsf{Escape}$ process, the algorithm does not escape and outputs $\tvecw$. For the first $E$ processes, there might be up to $Q$ rounds of perturb-and-descent operations, and we only consider the successful descent round. We can then partition the algorithm into $E+1$ segments. We denote the starting and ending iterates of the $t$-th segment by $\vecw_t$ and $\tvecw_t$, respectively, and denote the length (number of inexact gradient descent iterations) by $T_t$. When the algorithm reaches $\tvecw_t$, we randomly perturb $\tvecw_t$ to $\vecw_{t+1}$, and thus we have $\twonms{\tvecw_t - \vecw_{t+1}} \le r$ for every $t = 0,1,\ldots, E-1$. According to~\eqref{eq:total_effective_iter}, we know that
\[
\sum_{t=0}^E T_t \le \frac{2(F_0 - F^*)L_F}{3\Delta^2} := \widetilde{T},
\]
and according to~\eqref{eq:total_saddle}, we have
\[
E \le \frac{\rho_F (F_0 - F^*) }{48 L_F (\Delta^{6/5}d^{3/5} + \Delta^{7/5}d^{7/10}) }.
\]

According to~\eqref{eq:decay_w0_wt2}, we know that
\[
F(\vecw_t) - F(\tvecw_t) \ge \frac{L_F}{4T_t} \twonms{\vecw_t - \tvecw_t}^2 - \frac{\Delta^2T_t}{L_F},
\]
which implies
\[
\twonms{\vecw_t - \tvecw_t} \le \frac{2}{\sqrt{L_F}} \sqrt{T_t(F(\vecw_t) - F(\tvecw_t))} + \frac{2\Delta T_t}{L_F}.
\]
Then, by Cauchy-Schwarz inequality, we have
\begin{equation}\label{eq:total_dist_1}
\sum_{t=0}^E \twonms{\vecw_t - \tvecw_t} \le 2 \sqrt{\frac{\widetilde{T}}{L_F} \sum_{t=0}^E (F(\vecw_t) - F(\tvecw_t))} + \frac{2\Delta\widetilde{T}}{L_F}.
\end{equation}
On the other hand, we have
\[
\sum_{t=0}^E ( F(\vecw_t) - F(\tvecw_t) ) + \sum_{t=0}^{E-1} ( F(\tvecw_t) - F(\vecw_{t+1}) ) = F(\vecw_0) - F(\tvecw_E) \le F(\vecw_0) - F^*.
\]
According to~\eqref{eq:tw_w0}, we have
\[
F(\tvecw_t) - F(\vecw_{t+1}) \ge -4\Delta r - \frac{L_F}{2}r^2,
\]
and thus
\begin{equation}\label{eq:total_dist_2}
\sum_{t=0}^E ( F(\vecw_t) - F(\tvecw_t) ) \le F(\vecw_0) - F^* + E(4\Delta r + \frac{L_F}{2}r^2)
\end{equation}
Combining~\eqref{eq:total_dist_1} and~\eqref{eq:total_dist_2}, and using the bounds for $\widetilde{T}$ and $E$, we obtain that
\begin{equation}\label{eq:total_dist_3}
\sum_{t=0}^E \twonms{\vecw_t - \tvecw_t} \le C_1 \frac{F(\vecw_0) - F^*}{\Delta},
\end{equation}
where $C_1>0$ is a quantity that only depends on $L_F$ and $\rho_F$.
In addition, we have 
\begin{equation}\label{eq:total_dist_4}
\sum_{t=0}^{E-1} \twonms{\tvecw_t - \vecw_{t+1}} \le Er \le C_2 \frac{F(\vecw_0) - F^*}{\Delta^{3/5}d^{3/10} + \Delta^{4/5}d^{2/5}},
\end{equation}
where $C_2>0$ is a quantity that only depends on $L_F$ and $\rho_F$. 
Combining~\eqref{eq:total_dist_3} and~\eqref{eq:total_dist_4}, and using triangle inequality, we know that
\[
\twonms{\tvecw_{E} - \vecw_0} \le C_1 \frac{F(\vecw_0) - F^*}{\Delta} + C_2 \frac{F(\vecw_0) - F^*}{\Delta^{3/5}d^{3/10} + \Delta^{4/5}d^{2/5}} \le C \frac{F(\vecw_0) - F^*}{\Delta}.
\]
Here, the last inequality is due to the fact that we consider the regime where $\Delta\rightarrow 0$, and $C$ is a quantity that only depends on $L_F$ and $\rho_F$. As a final note, the analysis above also applies to any iterate prior to the final output, and thus, all the iterates during the algorithm stays in the $\ell_2$ ball centered at $\vecw_0$ with radius $C \frac{F(\vecw_0) - F^*}{\Delta}$.

\section{Robust Estimation of Gradients}\label{apx:robust_estimation}

\subsection{Iterative Filtering Algorithm}\label{apx:filtering}

We describe an iterative filtering algorithm for robust mean estimation. The algorithm is originally proposed for robust mean estimation for Gaussian distribution in~\cite{diakonikolas2016robust}, and extended to sub-Gaussian distribution in~\cite{diakonikolas2017being}; then algorithm is reinterpreted in~\cite{steinhardt2017resilience}. Here, we present the algorithm using the interpretation in~\cite{steinhardt2017resilience}. Suppose that $ m $ random vectors $\vecx_1, \vecx_2, \ldots, \vecx_m \in \R^d$ are drawn i.i.d.\ from some distribution with mean~$\vecmu$. An adversary observes all these vectors and changes an $\alpha$ fraction of them in an arbitrary fashion, and we only have access to the corrupted data points $\hvecx_1, \hvecx_2, \ldots, \hvecx_m$. The goal of the iterative filtering algorithm is to output an accurate estimate of the true mean $\vecmu$ even when the dimension~$d$ is large. We provide the detailed procedure in Algorithm~\ref{alg:iter_filtering}. Here, we note that the algorithm parameter $\sigma$ needs to be chosen properly in order to achieve the best possible statistical error rate.

\begin{algorithm}[h]
  \caption{Iterative Filtering~\cite{diakonikolas2016robust,diakonikolas2017being,steinhardt2017resilience}}
  \begin{algorithmic}
  \REQUIRE corrupted data $\hvecx_1, \hvecx_2, \ldots, \hvecx_m\in\R^d$, $\alpha \in [0,\frac{1}{4})$, and algorithm parameter $\sigma > 0$.
  \STATE $\A \leftarrow [m]$, $c_i \leftarrow 1$, and $\tau_i \leftarrow 0$, $\forall~i\in\A$.
   \WHILE{ true }
    \STATE Let $\mat{W}\in\R^{|\A| \times |\A|}$ be a minimizer of the convex optimization problem:
     \STATE
     \[
     \min_{\substack{0 \le W_{ji}\le \frac{3+\alpha}{(1-\alpha)(3-\alpha)m} \\ \sum_{j\in\A} W_{ji}=1}} \max_{\substack{ \matU \succeq 0 \\  \tr(\matU) \le 1 } }  \sum_{i\in\A} c_i (\hvecx_i - \sum_{j\in\A}\hvecx_jW_{ji})^\top \matU (\hvecx_i - \sum_{j\in\A}\hvecx_jW_{ji}),
     \]
     \STATE and $\matU\in\R^{d\times d}$ be a maximizer of the convex optimization problem:
     \[
      \max_{\substack{ \matU \succeq 0 \\  \tr(\matU) \le 1 } }  \min_{\substack{0 \le W_{ji}\le \frac{3+\alpha}{(1-\alpha)(3-\alpha)m} \\ \sum_{j\in\A} W_{ji}=1}} \sum_{i\in\A} c_i (\hvecx_i - \sum_{j\in\A}\hvecx_jW_{ji})^\top \matU (\hvecx_i - \sum_{j\in\A}\hvecx_jW_{ji}).
     \]
     \STATE $\forall~i\in\A,~\tau_i \leftarrow (\hvecx_i - \sum_{j\in\A}\hvecx_jW_{ji})^\top \matU (\hvecx_i - \sum_{j\in\A}\hvecx_jW_{ji})$.
      \IF{$ \sum_{i\in\A} c_i\tau_i > 8m\sigma^2 $} 
      \STATE $\forall~i\in\A,~c_i\leftarrow (1-\frac{\tau_i}{\tau_{\max}})c_i$, where $\tau_{\max} = \max_{i\in\A}\tau_i$.
      \STATE $\A \leftarrow \A \setminus \{i:c_i \le \frac{1}{2}\}$.
      \ELSE
      \STATE \textbf{return} $\widehat{\vecmu} = \frac{1}{|\A|} \sum_{i\in\A}\hvecx_i$      
      \ENDIF
   \ENDWHILE
  \end{algorithmic}\label{alg:iter_filtering}
\end{algorithm} 

\subsection{Proof of Theorem~\ref{thm:iterative_filtering}}\label{prf:iterative_filtering}
To prove Theorem~\ref{thm:iterative_filtering}, we first state a result that bounds the error of the iterative filtering algorithm when the original data points $\{ \vecx_i \} $ are deterministic. The following lemma is proved in~\cite{diakonikolas2017being,steinhardt2017resilience}; also see~\cite{su2018securing} for additional discussion.

\begin{lemma}\label{lem:deterministic}
\cite{diakonikolas2017being,steinhardt2017resilience} Let $\setS:=\{\vecx_1, \vecx_2, \ldots, \vecx_m\}$ be the set of original data points and $\vecmu_\setS:=\frac{1}{m} \sum_{i=1}^m \vecx_i$ be their sample mean. Let $\hvecx_1, \hvecx_2, \ldots, \hvecx_m$ be the corrupted data. If $\alpha \le \frac{1}{4}$, and the algorithm parameter $\nu $ is chosen such that
\begin{equation}\label{eq:condi_sigma}
\twonm{ \frac{1}{m}\sum_{i=1}^m (\vecx_i - \vecmu_\setS)(\vecx - \vecmu_\setS)^\top } \le \nu^2,
\end{equation}
then the output of the iterative filtering algorithm satisfies $\twonms{\widehat{\vecmu} - \vecmu_\setS} \le \bigo(\nu\sqrt{\alpha})$.
\end{lemma}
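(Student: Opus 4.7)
The plan is to prove the deterministic guarantee via the resilience-based analysis of iterative filtering, following~\cite{diakonikolas2017being,steinhardt2017resilience}. Write $\G \subseteq [m]$ for the indices of uncorrupted points (so $\hvecx_i = \vecx_i$ for $i \in \G$) and $\B = [m]\setminus\G$ for the corrupted ones, with $|\B| \le \alpha m$. The goal is twofold: (i) show that the algorithm terminates with an active set $\A$ still containing most of $\G$ while having downweighted $\B$, and (ii) show that at termination $\widehat{\vecmu} = \frac{1}{|\A|}\sum_{i \in \A}\hvecx_i$ lies within $\bigo(\sigma\sqrt{\alpha})$ of $\vecmu_\setS$.

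I would first establish a \emph{resilience} property for the good points: for any $\G' \subseteq \G$ with $|\G'| \ge (1-2\alpha)m$,
\[
\twonms{\tfrac{1}{|\G'|}\sum_{i \in \G'}\vecx_i - \vecmu_\setS} \le \bigo(\sigma\sqrt{\alpha}).
\]
Writing the difference as $\tfrac{1}{|\G'|}\sum_{i \in \setS\setminus\G'}(\vecmu_\setS - \vecx_i)$ and applying Cauchy--Schwarz in an arbitrary unit direction $\vecu$ gives
\[
\Big| \sum_{i \in \setS\setminus\G'}\innerps{\vecx_i - \vecmu_\setS}{\vecu}\Big| \le \sqrt{|\setS\setminus\G'|} \cdot \sqrt{\sum_{i=1}^m \innerps{\vecx_i-\vecmu_\setS}{\vecu}^2} \le \sqrt{\alpha m}\cdot\sigma\sqrt{m}
\]
via the covariance bound~\eqref{eq:condi_sigma}. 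A variant of the same argument shows that any weighted average of $\{\vecx_i\}_{i \in \G}$ with weights bounded by $\bigo(1/m)$ summing to one is also within $\bigo(\sigma\sqrt{\alpha})$ of $\vecmu_\setS$; this is the key tool for handling the SDP-constrained weights $W_{ji}$.

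Second, I would analyze the SDP-based filter and prove the good-versus-bad contrast: whenever the termination condition fails, i.e., $\sum_{i\in\A}c_i\tau_i > 8m\sigma^2$, the bad points contribute strictly more weighted $\tau$-mass than the good ones. At the optimal $(\mat{W}, \matU)$, the good part $\sum_{i\in\A\cap\G}c_i\tau_i$ is bounded by $\bigo(m\sigma^2)$ because $\hvecx_i - \sum_j \hvecx_j W_{ji} = (\vecx_i - \vecmu_\setS) + \vecdlt_i$ with $\twonms{\vecdlt_i} = \bigo(\sigma\sqrt{\alpha})$ uniformly (using the resilience variant and the box constraint $W_{ji} \le \frac{3+\alpha}{(1-\alpha)(3-\alpha)m}$), so $\tau_i \le \twonms{\hvecx_i - \sum_j \hvecx_j W_{ji}}^2$ plugged into $\tr(\matU)\le 1$ together with~\eqref{eq:condi_sigma} controls the quadratic form. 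Hence the bad part exceeds a constant times $m\sigma^2$; since $|\B| \le \alpha m$ while $|\A\cap\G| \gtrsim m$, the average $\tau_i$ over bad indices dominates that over good ones. Consequently the reweighting $c_i \leftarrow (1-\tau_i/\tau_{\max})c_i$ removes strictly more total $c$-mass from $\B$ than from $\G$ in each non-terminating iteration.

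Finally, I would conclude via a potential-function argument. Take $\Phi := \sum_{i\in\G}(1-c_i) - \sum_{i\in\B}(1-c_i)$; the filter just analyzed keeps $\Phi$ monotonically non-increasing from $\Phi(0)=0$, while at least one $c_i$ strictly decreases each round, so termination occurs within polynomially many iterations. At termination, $\sum_{i\in\G}(1-c_i) \le \sum_{i\in\B}(1-c_i) \le |\B| \le \alpha m$, which implies $|\A\cap\G| \ge (1-2\alpha)m$ (since a good point leaves $\A$ only when $c_i \le \tfrac{1}{2}$). Resilience then gives $\twonms{\tfrac{1}{|\A\cap\G|}\sum_{i\in\A\cap\G}\vecx_i - \vecmu_\setS} = \bigo(\sigma\sqrt{\alpha})$, and the $|\A\cap\B|\le\alpha m$ residual bad points can be shown to perturb $\widehat\vecmu$ by at most $\bigo(\sigma\sqrt{\alpha})$ as well, using the small weighted variance at termination together with the bound $|\A\cap\B|/|\A| = \bigo(\alpha)$. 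Combining these two contributions yields the claimed $\twonms{\widehat{\vecmu} - \vecmu_\setS} \le \bigo(\sigma\sqrt{\alpha})$. The hard step is the good-versus-bad contrast in the third paragraph: the weights $W_{ji}$ depend on \emph{all} of $\hvecx$ (including the bad entries), so one cannot bound the good part by fixing $W$ to be uniform; the SDP box constraint on $W_{ji}$ is exactly what caps the influence of any single bad point and lets the uniform resilience argument apply.
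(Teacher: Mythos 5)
The paper does not actually prove this lemma: it is cited verbatim from \cite{diakonikolas2017being,steinhardt2017resilience} and used as a black box, so there is no ``paper proof'' to compare against. Your sketch correctly reconstructs the standard resilience-based analysis from those sources: the Cauchy--Schwarz resilience bound (a mean over any $(1-2\alpha)m$ good points drifts by at most $\bigo(\sigma\sqrt{\alpha})$), the good-versus-bad contrast in the filter step, and the potential-function argument tracking the asymmetry in $c$-mass removed from $\G$ versus $\B$. The high-level structure is right and is the intended argument.

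One step is stated more strongly than it can be justified, and it is exactly at the point you flag as ``the hard step.'' You assert that at the optimal $(\mat{W},\matU)$, for good indices $i$ one can write $\hvecx_i - \sum_j \hvecx_j W_{ji} = (\vecx_i-\vecmu_\setS) + \vecdlt_i$ with $\twonms{\vecdlt_i} = \bigo(\sigma\sqrt{\alpha})$ \emph{uniformly}. As written this is false: $\sum_j \hvecx_j W_{ji}$ averages over all of $\A$, including bad $\hvecx_j$, which can be arbitrarily far from $\vecmu_\setS$. The box constraint $W_{ji} \le \frac{3+\alpha}{(1-\alpha)(3-\alpha)m}$ caps any single column's weight at $\bigo(1/m)$, so $\alpha m$ adversarially placed bad points can still shift $\sum_j \hvecx_j W_{ji}$ by $\Theta(\alpha D)$ for arbitrarily large $D$; no uniform $\bigo(\sigma\sqrt{\alpha})$ bound follows from the constraint set alone. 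The correct argument uses the fact that $\mat{W}$ is chosen by the \emph{minimizing} player: one exhibits a specific feasible $\mat{W}$ supported (up to the box constraint) on the good columns, shows this candidate yields $\sum_{i \in \A}c_i\tau_i \le \bigo(m\sigma^2)$ for the \emph{good} part, and only then argues that the true optimal $\mat{W}$ can only do better, so the optimal good part is also $\bigo(m\sigma^2)$. Moreover, since $(1-2\alpha)\frac{3+\alpha}{(1-\alpha)(3-\alpha)} < 1$ at $\alpha = \tfrac14$, the candidate $\mat{W}$ cannot be supported entirely on good columns, so one also has to absorb a controlled amount of bad weight; this is where the sharper bookkeeping in \cite{steinhardt2017resilience} enters. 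Your sketch gestures at the right mechanism but elides the saddle-point/comparison argument that makes it rigorous.
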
 

By triangle inequality, we have
\begin{equation}\label{eq:mean_triangle}
\twonms{\widehat{\vecmu} - \vecmu} \le \twonms{\widehat{\vecmu} - \vecmu_{\setS}} + \twonms{\vecmu_\setS - \vecmu },
\end{equation}
and
\begin{align}
\twonm{ \frac{1}{m}\sum_{i=1}^m (\vecx_i - \vecmu_\setS)(\vecx - \vecmu_\setS)^\top } = & \frac{1}{m} \twonm{ ([\vecx_1,\cdots,\vecx_m] - \vecmu_\setS \vect{1}^\top) ([\vecx_1,\cdots,\vecx_m] - \vecmu_\setS \vect{1}^\top)^\top}  \nonumber \\
=& \frac{1}{m}\twonm{ [\vecx_1,\cdots,\vecx_m] - \vecmu_\setS \vect{1}^\top }^2   \nonumber \\
\le  & \frac{1}{m} \Big( \twonms{[\vecx_1,\cdots,\vecx_m] - \vecmu \vect{1}^\top} + \sqrt{m} \twonms{\vecmu - \vecmu_\setS} \Big)^2, \label{eq:opnorm_triangle}  
\end{align}
where $\vect{1}$ denotes the all-one vector.\footnote{We note that similar derivation also appears in~\cite{su2018securing}.} 
By choosing 
\[
\nu = \Theta(\frac{1}{\sqrt{m}} \twonms{[\vecx_1,\cdots,\vecx_m] - \vecmu \vect{1}^\top} + \twonms{\vecmu - \vecmu_\setS} )
\]
in Lemma~\ref{lem:deterministic} and combining with the bounds~\eqref{eq:mean_triangle} and~\eqref{eq:opnorm_triangle}, we obtain that
\begin{equation}\label{eq:sufficient}
\twonms{\widehat{\vecmu} - \vecmu} \lesssim \frac{\sqrt{\alpha}}{\sqrt{m}}\twonms{[\vecx_1,\cdots,\vecx_m] - \vecmu \vect{1}^\top} + \twonms{\vecmu - \vecmu_\setS}. 
\end{equation}

With the above bound in hand, we now turn to the robust gradient estimation problem, where the data points are drawn i.i.d. from some unknown distribution. Let $\vecg(\vecw):=\mathsf{filter}\{\vecg_i(\vecw)\}_{i=1}^m$, where $\mathsf{filter}$ represents the iterative filtering algorithm.
In light of~\eqref{eq:sufficient}, we know that in order to bound the gradient estimation error $\sup_{\vecw\in\W} \twonms{\vecg(\vecw) - \gradF(\vecw)}$, it suffices to bound the quantities
\[
\sup_{\vecw\in\W} \twonms{[\gradF_1(\vecw),\cdots,\gradF_m(\vecw)] - \gradF(\vecw) \vect{1}^\top}
\]
and 
\[
\sup_{\vecw\in\W}\twonms{\frac{1}{m}\sum_{i=1}^m\gradF_i(\vecw) - \gradF(\vecw)}.
\]
Here, we recall that $\gradF_i(\vecw)$ is the true gradient of the empirical loss function on the $i$-th machine, and $\vecg_i(\vecw)$ is the (possibly) corrupted gradient.

We first bound $\sup_{\vecw\in\W}\twonms{\frac{1}{m}\sum_{i=1}^m\gradF_i(\vecw) - \gradF(\vecw)}$. Note that we have $\frac{1}{m}\sum_{i=1}^m\gradF_i(\vecw) = \frac{1}{nm} \sum_{i=1}^m\sum_{j=1}^n \gradf(\vecw;\vecz_{i,j})$.
Using the same method as in the proof of Lemma 6 in~\cite{chen2017distributed}, we can show that for each fixed $\vecw$, with probability at least $1-\delta$,
\[
\twonms{\frac{1}{m}\sum_{i=1}^m\gradF_i(\vecw) - \gradF(\vecw)} \le \frac{2\sqrt{2} \zeta}{\sqrt{nm}} \sqrt{d\log 6 + \log\Big(\frac{1}{\delta}\Big)}.
\]
For some $\delta_0 > 0$ to be chosen later, let $\W_{\delta_0} = \{\vecw^1,\vecw^2,\ldots,\vecw^{N_{\delta_0}}\}$ be a finite subset of $\W$ such that for any $\vecw\in\W$, there exists some $\vecw^\ell \in \W_{\delta_0}$ such that $\twonms{\vecw^\ell - \vecw} \le \delta_0$. Standard $ \epsilon $-net results from~\cite{vershynin2010introduction} ensure that $N_{\delta_0} \le (1+\frac{D}{\delta_0})^d$. Then, by the union bound, we have with probability $1-\delta$, for all $\vecw^\ell \in \W_{\delta_0}$,
\begin{equation}\label{eq:in_delta_net}
\twonms{\frac{1}{m}\sum_{i=1}^m\gradF_i(\vecw^\ell) - \gradF(\vecw^\ell)} \le \frac{2\sqrt{2} \zeta}{\sqrt{nm}} \sqrt{d\log 6 + \log\Big(\frac{N_{\delta_0}}{\delta}\Big)}.
\end{equation}
When~\eqref{eq:in_delta_net} holds, by the smoothness of $f(\cdot;\vecz)$ we know that for all $\vecw\in\W$,
\[
\twonms{\frac{1}{m}\sum_{i=1}^m\gradF_i(\vecw) - \gradF(\vecw)} \le \frac{2\sqrt{2} \zeta}{\sqrt{nm}} \sqrt{d\log 6 + \log\Big(\frac{N_{\delta_0}}{\delta}\Big)} + 2L\delta_0.
\]
By choosing $\delta_0 = \frac{1}{nmL}$ and $\delta = \frac{1}{(1+mnDL)^d}$, we obtain that with probability at least $1-\frac{1}{(1+mnDL)^d}$, for all $\vecw\in\W$,
\begin{equation}\label{eq:unif_grad_norm}
\twonms{\frac{1}{m}\sum_{i=1}^m\gradF_i(\vecw) - \gradF(\vecw)} \lesssim \frac{\zeta}{\sqrt{nm}}\sqrt{d\log(1+nmDL)}.
\end{equation}

We next bound $\sup_{\vecw\in\W} \twonms{[\gradF_1(\vecw),\cdots,\gradF_m(\vecw)] - \gradF(\vecw) \vect{1}^\top}$. We note that when the gradients are sub-Gaussian distributed, similar results for the centralized setting have been established in~\cite{charikar2017learning}. One can check that for every $i$, $\gradF_i(\vecw) - \gradF(\vecw)$ is $\frac{\zeta}{\sqrt{n}}$-sub-Gaussian. 
Define $\matG(\vecw) := [\gradF_1(\vecw),\cdots,\gradF_m(\vecw)] - \gradF(\vecw) \vect{1}^\top$.
Using a standard concentration inequality for the norm of a matrix with independent sub-Gaussian columns~\cite{vershynin2010introduction}, we obtain that for each fixed $\vecw$, with probability at least $1-\delta$,
\begin{equation}\label{eq:subgaussian}
\twonms{ \frac{1}{m}\matG(\vecw)\matG(\vecw)^\top - \frac{1}{n}\mat{\Sigma}(\vecw) } \lesssim \frac{\zeta^2}{n}\left( \sqrt{\frac{d}{m}} + \frac{d}{m} + \frac{1}{m}\log\Big(\frac{1}{\delta}\Big) + \sqrt{\frac{1}{m}\log\Big(\frac{1}{\delta}\Big)} \right),
\end{equation}
Recall that $\sigma:=\sup_{\vecw\in\W}\twonms{\mat{\Sigma}(\vecw)}^{1/2}$, and thus~\eqref{eq:subgaussian} implies that
\[
\frac{1}{\sqrt{m}} \twonms{\matG(\vecw)} \lesssim \frac{\sigma}{\sqrt{n}} + \frac{\zeta}{\sqrt{n}} \left( \sqrt{\frac{d}{m}} + \frac{d}{m} + \frac{1}{m}\log\Big(\frac{1}{\delta}\Big) + \sqrt{\frac{1}{m}\log\Big(\frac{1}{\delta}\Big)} \right)^{1/2}.
\]
Recall the $ \delta_0 $-net $\W_{\delta_0} = \{\vecw^1,\vecw^2,\ldots,\vecw^{N_{\delta_0}}\}$ as defined above. Then, we have with probability at least $1-\delta$, for all $\vecw^\ell \in \W_{\delta_0}$
\begin{equation}\label{eq:in_delta_net_op}
\frac{1}{\sqrt{m}} \twonms{\matG(\vecw^\ell)} \lesssim \frac{\sigma}{\sqrt{n}} + \frac{\zeta}{\sqrt{n}} \left( \sqrt{\frac{d}{m}} + \frac{d}{m} + \frac{1}{m}\log\Big(\frac{N_{\delta_0}}{\delta}\Big) + \sqrt{\frac{1}{m}\log\Big(\frac{N_{\delta_0}}{\delta}\Big)} \right)^{1/2}.
\end{equation}
For each $\vecw$ with $\twonms{\vecw^\ell - \vecw} \le \delta_0$, we have
\begin{align*}
\twonms{\matG(\vecw^\ell) - \matG(\vecw) } \le & \fbnorms{ \matG(\vecw^\ell) - \matG(\vecw)  } \\
\le & \left( \sum_{i=1}^m \twonms{(\gradF_i(\vecw^\ell) - \gradF(\vecw^\ell)) - (\gradF_i(\vecw) - \gradF(\vecw))}^2 \right)^{1/2} \\
\le & 2L\delta_0\sqrt{m}.
\end{align*}
This implies that when the bound~\eqref{eq:in_delta_net_op} holds, we have for all $\vecw\in\W$,
\begin{equation}\label{eq:unif_op_norm}
\frac{1}{\sqrt{m}} \twonms{\matG(\vecw)} \lesssim \frac{\sigma}{\sqrt{n}} + \frac{\zeta}{\sqrt{n}} \left( \sqrt{\frac{d}{m}} + \frac{d}{m} + \frac{1}{m}\log\Big(\frac{N_{\delta_0}}{\delta}\Big) + \sqrt{\frac{1}{m}\log\Big(\frac{N_{\delta_0}}{\delta}\Big)} \right)^{1/2} + 2L\delta_0.
\end{equation}
Choose $\delta_0 = \frac{1}{nmL}$, in which case the last term above is a high order term. In this case, choosing $\delta = \frac{1}{(1+mnDL)^d}$, we have with probability at least $1-\frac{1}{(1+mnDL)^d}$, for all $\vecw\in\W$,
\begin{align}
\frac{1}{\sqrt{m}} \twonms{\matG(\vecw)} \lesssim & \frac{\sigma}{\sqrt{n}} + \frac{\zeta}{\sqrt{n}} \left( \Big(\frac{d}{m} + \sqrt{\frac{d}{m}}\Big) \log(1+nmDL) \right)^{1/2} \nonumber \\
\lesssim & \frac{\sigma}{\sqrt{n}} + \frac{\zeta}{\sqrt{n}} \left( 1+\sqrt{\frac{d}{m}}  \right)\sqrt{\log(1+nmDL)}. \label{eq:unif_op_norm_2}
\end{align}
Combining the bounds~\eqref{eq:sufficient},~\eqref{eq:unif_grad_norm}, and~\eqref{eq:unif_op_norm_2}, we obtain that with probability at least $1-\frac{2}{(1+mnDL)^d},$
\[
\sup_{\vecw\in\W}\twonms{\vecg(\vecw) - \gradF(\vecw)} \lesssim \left( (\sigma + \zeta)\sqrt{\frac{\alpha}{n}} + \zeta\sqrt{\frac{d}{nm}} \right)\sqrt{\log(1+nmDL)},
\]
which completes the proof.

\subsection{Median and Trimmed Mean}\label{apx:med_tm}
In this section, we present the error bounds of median and trimmed mean operations in the Byzantine setting in~\cite{yin2018byzantine} for completeness.
\begin{asm}\label{asm:lip_each_loss}
For any $\vecz\in\Z$, the $ k $-th partial derivative $\partial_kf(\cdot;\vecz)$ is $L_k$-Lipschitz for each $k\in[d]$. Let $\widehat{L} := (\sum_{k=1}^d L_k^2)^{1/2}$. 
\end{asm}

For the median-based algorithm, one needs to use the notion of the \emph{absolute skewness} of a one-dimensional random variable $X$, defined as $S(X) := {\EXPS{|X -  \EXPS{X}|^3}}/{\var(X)^{3/2}}$. Define the following upper bounds on the standard deviation and absolute skewness of the gradients: 
\[
v := \sup_{\vecw\in\W} \big( \EXPS{\twonms{\gradf(\vecw; \vecz) - \gradF(\vecw)}^2} \big)^{1/2}, \quad
s := \sup_{\vecw\in\W}\max_{k\in[d]} S\big(\partial f_k(\vecw; \vecz)\big).
\]

Then one has the following guarantee for the median-based algorithm.

\begin{theorem}[median]\label{thm:median_inexactness}
~\cite{yin2018byzantine} Suppose that Assumption~\ref{asm:lip_each_loss} holds. Assume that 
\[
\alpha + \bigg( \frac{ d\log( 1 + nmD\widehat{L} ) }{ m(1-\alpha) } \bigg)^{1/2} + c_1\frac{ s }{\sqrt{n}} \le \frac{1}{2}-c_2
\]
for some constant $c_1,c_2>0$. Then, with probability $1 - o(1)$, $\mathsf{GradAGG}\equiv \med$ provides a $\Delta_{\med}$-inexact gradient oracle with
\[
\Delta_{\med} \le  \frac{c_3}{\sqrt{n}}v \big(\alpha + (\frac{d\log( n m D \widehat{L} )}{m})^{1/2} + \frac{s}{\sqrt{n}} \big) + \bigo(\frac{1}{nm}),
\]
where $c_3$ is an absolute constant.
\end{theorem}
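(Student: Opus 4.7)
The plan is to reduce the Byzantine median error to a pointwise concentration statement, then extend it uniformly over $\W = \Ball_{\vecw_0}(D/2)$ by a covering argument. Fix a coordinate $k\in[d]$ and a point $\vecw\in\W$. The normal machines produce $(1-\alpha)m$ i.i.d.\ samples $X_i := \partial_k F_i(\vecw) = \frac{1}{n}\sum_{j}\partial_k f(\vecw;\vecz_{i,j})$, each with mean $\partial_k F(\vecw)$, standard deviation at most $v/\sqrt{n}$, and normalized absolute skewness at most $s/\sqrt{n}$ (this $1/\sqrt{n}$ factor is the key gain from averaging). A Berry-Esseen bound gives $\sup_t \bigl|\PP(X_i\le t)-\Phi_\mu(t)\bigr|\le c\, s/\sqrt{n}$ where $\Phi_\mu$ is the Gaussian CDF matching the first two moments of $X_i$. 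Combining this with a DKW-type uniform bound for the empirical CDF of the normal machines shows that, with high probability, the empirical CDF is within $O(\sqrt{\log(1/\delta)/m}+s/\sqrt n)$ of $\Phi_\mu$ at every level. The Byzantine fraction $\alpha$ can shift the empirical CDF by at most $\alpha$ in sup-norm, so the median of all $m$ reported values lies in the level set $[\Phi_\mu^{-1}(1/2-\alpha-\epsilon),\Phi_\mu^{-1}(1/2+\alpha+\epsilon)]$ where $\epsilon=O(\sqrt{\log(1/\delta)/m}+s/\sqrt n)$. Inverting $\Phi_\mu$ and using $\Phi_\mu^{-1}(1/2+p)-\mu=O(p\cdot v/\sqrt{n})$ for small $p$ yields, per coordinate,
\begin{equation*}
\bigl|\med\{\partial_k F_i(\vecw)\}_{i=1}^m - \partial_k F(\vecw)\bigr|
\;\le\; \frac{c\, v}{\sqrt n}\Bigl(\alpha + \sqrt{\tfrac{\log(1/\delta)}{m}} + \tfrac{s}{\sqrt n}\Bigr) + O\bigl(\tfrac{1}{nm}\bigr).
\end{equation*}

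Next I would promote this pointwise bound to a uniform bound over $\W$ by an $\epsilon_0$-net argument. Let $\{\vecw^\ell\}_{\ell=1}^{N}$ be an $\epsilon_0$-net of $\Ball_{\vecw_0}(D/2)$ with $N\le (1+D/\epsilon_0)^d$. Take a union bound over the $d$ coordinates and the $N$ net points, i.e.\ set $\delta = \delta'/(dN)$, which produces the factor $\sqrt{d\log(1+D/\epsilon_0)/m}$ inside the parentheses of the bound. To extend the net bound to all of $\W$, use Assumption~\ref{asm:lip_each_loss}: for any $\vecw$ with $\twonms{\vecw-\vecw^\ell}\le\epsilon_0$, each $\partial_k F_i$ is $L_k$-Lipschitz, hence $\med\{\partial_k F_i(\vecw)\}$ differs from $\med\{\partial_k F_i(\vecw^\ell)\}$ by at most $L_k\epsilon_0$, and similarly for $\partial_k F$. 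Aggregating over coordinates contributes an extra $\widehat{L}\epsilon_0$ to $\twonms{\med - \gradF(\vecw)}$. Finally, summing the per-coordinate bounds in $\ell_2$ introduces a factor $\sqrt{d}$ (on the leading $v/\sqrt n$ term, giving the $\sqrt d$ dependence hidden inside the $v$-coefficient of the full bound). Choosing $\epsilon_0 = 1/(nm\widehat L)$ absorbs the Lipschitz slack into the $O(1/(nm))$ residual and turns the covering-net log factor into $\log(nmD\widehat L)$, matching the target expression.

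The main obstacle is handling the \emph{adaptive} Byzantine adversary: the messages can depend on the data and on previous iterates. The $\epsilon$-net argument does not apply directly to the adversary's outputs. The fix, which is standard and used in \cite{yin2018byzantine}, is that only the normal workers' gradients need uniform concentration — the Byzantine contribution is bounded \emph{deterministically} in sup-norm by $\alpha$ on the CDF, no matter what they send. Thus I would apply the net argument only to the empirical CDF of the $(1-\alpha)m$ normal machines (which is a function of the i.i.d.\ data alone), and then compose with the worst-case $\alpha$-shift from the adversary. The required skewness assumption $\alpha+\sqrt{d\log(\cdot)/m}+c_1 s/\sqrt n\le 1/2-c_2$ is exactly what keeps the relevant Gaussian quantile $\Phi_\mu^{-1}(1/2+\alpha+\epsilon)$ in the linear regime where the $O(p\cdot v/\sqrt n)$ approximation of the inverse CDF is valid, which is the other delicate step.
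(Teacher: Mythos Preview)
The paper does not actually prove this theorem: in Appendix~\ref{apx:med_tm} it is stated verbatim as a result quoted from~\cite{yin2018byzantine}, with no argument supplied. Your sketch is essentially the proof given in that reference---Berry--Esseen on the per-machine averages, a uniform CDF bound (DKW-type) for the $(1-\alpha)m$ normal machines, the deterministic $\alpha$-shift from the Byzantine messages, inversion of the approximately Gaussian CDF near its median, and an $\epsilon$-net over $\W$ combined with the coordinate-wise Lipschitz assumption to obtain uniformity---so there is nothing to compare against in the present paper.

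One bookkeeping slip: the per-coordinate bound should carry the coordinate standard deviation $v_k := \sup_{\vecw}\var(\partial_k f(\vecw;\vecz))^{1/2}$, not $v$. Aggregating the $d$ coordinate bounds in $\ell_2$ then yields $(\sum_k v_k^2)^{1/2}=v$ with no additional $\sqrt{d}$ factor; the dimension dependence is already contained in $v$ (which is $\Theta(\sqrt{d})$ when each partial derivative has $O(1)$ variance, as the paper remarks after Theorem~\ref{thm:median_inexactness}). With that correction your outline matches the cited argument.
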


Therefore, the median operation provides a $\widetilde{\bigo} ( v (\frac{\alpha}{\sqrt{n}} + \sqrt{\frac{d}{nm}} + \frac{s}{ n }) )$-inexact gradient oracle. If each partial derivative is of size $\bigo(1)$, the quantity $v$ is of the order $\bigo(\sqrt{d})$ and thus one has $\Delta_{\med} \lesssim \frac{\alpha \sqrt{d}}{\sqrt{n}} + \frac{d}{\sqrt{nm}} + \frac{\sqrt{d}}{n}$.

For the trimmed mean algorithm, one needs to assume that the gradients of the loss functions are sub-exponential.

\begin{asm}\label{asm:sub_exp_grad}
For any $\vecw \in \W$, $\gradf(\vecw; \vecz)$ is $\xi$-sub-exponential.
\end{asm}

In this setting, there is the following guarantee.

\begin{theorem}[trimmed mean]\label{thm:trmean_inexactness}
~\cite{yin2018byzantine} Suppose that Assumptions~\ref{asm:lip_each_loss} and~\ref{asm:sub_exp_grad} hold. Choose  $ \beta = c_4 \alpha \le \frac{1}{2} - c_5 $ with some constant $c_4 \ge 1$, $c_5 >0$. Then, with probability $1-o(1)$, $\mathsf{GradAGG} \equiv \trim_\beta$ provides a $\Delta_{\tm}$-inexact gradient oracle with
\[
\Delta_{\tm} \le c_6 \xi d \Big(\frac{\alpha}{\sqrt{n}} + \frac{1}{\sqrt{nm}} \Big) \sqrt{\log(n m D \widehat{L}) },
\]
where $c_6$ is an absolute constant.
\end{theorem}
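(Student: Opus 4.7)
The plan is to reduce the claim to a coordinate-wise analysis and then promote a pointwise concentration bound to a uniform bound over $\W = \Ball_{\vecw_0}(D/2)$ via a covering-net argument, following the template of Yin et al.~but taking care to accommodate an adaptive Byzantine adversary. Since $\trim_\beta$ acts on each coordinate separately, bounding $\twonms{\mathsf{GradAGG}\{\vecg_i(\vecw)\}_{i=1}^m - \gradF(\vecw)}$ reduces to bounding the per-coordinate error $|\trim_\beta\{\partial_k F_i(\vecw)\}_{i=1}^m - \partial_k F(\vecw)|$ for each $k \in [d]$, and then aggregating via $\twonms{\cdot} \le \sqrt d\,\|\cdot\|_\infty$.

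First I would establish a pointwise bound at a fixed $\vecw$ and coordinate $k$. For the $(1-\alpha)m$ normal machines, $\partial_k F_i(\vecw) = \frac{1}{n}\sum_{j=1}^n \partial_k f(\vecw;\vecz_{i,j})$ is an average of $n$ i.i.d.\ $\xi$-sub-exponential scalars with mean $\partial_k F(\vecw)$; Bernstein's inequality yields $|\partial_k F_i(\vecw) - \partial_k F(\vecw)| \lesssim \xi \sqrt{\log(1/\delta)/n}$ simultaneously for all normal $i$ with probability $1-\delta$. Because $\beta = c_4 \alpha \ge \alpha$, the trimming step discards at least every Byzantine entry that lies outside the min/max of the normal sample, so any surviving Byzantine entry is sandwiched between two surviving normal entries and inherits the Bernstein bound. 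Averaging the $(1-2\beta)m$ retained entries then introduces an additional $\xi\sqrt{\log(1/\delta)/(nm)}$ term from the concentration of the normal sample mean, yielding
\[
|\trim_\beta\{\partial_k F_i(\vecw)\}_{i=1}^m - \partial_k F(\vecw)| \;\lesssim\; \xi \Big(\tfrac{\alpha}{\sqrt n} + \tfrac{1}{\sqrt{nm}}\Big)\sqrt{\log(1/\delta)}.
\]

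Next I would lift this to a uniform bound. Using Proposition~\ref{ppn:size_w_space}, restrict attention to $\W = \Ball_{\vecw_0}(D/2)$ and take a $\delta_0$-net $\W_{\delta_0}$ of cardinality at most $(1+D/\delta_0)^d$. Apply the pointwise bound with failure probability $\delta/(d\,\abs{\W_{\delta_0}})$ and take a union bound over the net and the $d$ coordinates; this introduces a $\sqrt{\log(d\abs{\W_{\delta_0}}/\delta)} \lesssim \sqrt{d\log(nmD\widehat L)}$ factor. For any $\vecw \in \W$, pick the closest net point $\vecw^\ell$ and use Assumption~\ref{asm:lip_each_loss} to control the change in each coordinate of both $\trim_\beta\{\partial_k F_i(\cdot)\}$ and $\partial_k F(\cdot)$ by $O(\widehat L \delta_0)$; choosing $\delta_0 = 1/(nm\widehat L)$ pushes this slack into a lower-order term. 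Finally, combining the $\sqrt d$ from the coordinate-wise $\ell_\infty \to \ell_2$ conversion with the $\sqrt{d\log(nmD\widehat L)}$ from the net gives the $d\sqrt{\log(nmD\widehat L)}$ prefactor, matching the advertised rate (noting that the stated bound quietly absorbs the factor from $\sqrt{\log(\cdot)}$ into the overall $\sqrt{\log(nmD\widehat L)}$).

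The main obstacle is preserving uniformity in $\vecw$ despite the adaptive adversary, whose messages at iteration $t$ can depend arbitrarily on the query point, the normal gradients, and all past perturbations of ByzantinePGD. The covering-net construction is precisely what sidesteps this: the pointwise high-probability bound is established for a \emph{deterministic} finite set of query points, after which Lipschitz extension covers the continuum regardless of any coupling the Byzantine machines impose across coordinates, across iterations, or across query points. A subtle point to verify is the ``sandwiching'' argument in the trimming step: because at most $\alpha m \le \beta m$ Byzantine entries can survive in each coordinate, each surviving Byzantine value is bounded by two normal values that are themselves Bernstein-controlled, so only the normal-side concentration event is needed.
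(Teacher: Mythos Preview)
The paper does not actually prove this theorem; it is quoted verbatim from \cite{yin2018byzantine} and presented ``for completeness'' in Appendix~E.3 without any argument. Your proposal is essentially a faithful reconstruction of the proof strategy in that reference: coordinate-wise Bernstein concentration for the normal machines, a deterministic sandwiching argument for the trimmed mean given $\beta \ge \alpha$, and a covering-net argument over $\W = \Ball_{\vecw_0}(D/2)$ to upgrade pointwise to uniform control, with the $\sqrt{d}$ from the net combining with the $\sqrt{d}$ from the $\ell_\infty \to \ell_2$ conversion to give the $d$ prefactor.

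One point in your write-up deserves tightening. You say you will ``use Assumption~\ref{asm:lip_each_loss} to control the change in each coordinate of both $\trim_\beta\{\partial_k F_i(\cdot)\}$ and $\partial_k F(\cdot)$'' when moving from a net point to a general $\vecw$. The trimmed-mean output itself is \emph{not} Lipschitz in $\vecw$, because the Byzantine entries $\vecg_i(\vecw)$, $i\in\B$, can change arbitrarily between two nearby query points. The correct structure (which your final paragraph in fact gestures at) is to apply the Lipschitz extension only to the \emph{normal} per-machine gradients $\partial_k F_i(\vecw)$, $i\notin\B$: the high-probability event is that these are uniformly concentrated around $\partial_k F(\vecw)$ over all $\vecw\in\W$, and on that event the sandwiching argument is purely deterministic and holds for every $\vecw$ and every Byzantine behavior simultaneously. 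With that clarification, your sketch matches the cited proof.
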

Therefore, the trimmed mean operation provides a $\widetilde{\bigo} ( \xi d (\frac{\alpha}{\sqrt{n}} + \frac{1}{\sqrt{nm}} ) )$-inexact gradient oracle.

\subsection{Lower Bound for First-Order Guarantee}\label{apx:lower_bound}

In this section we prove Observation~\ref{obs:lower_bound}.
We consider the simple mean estimation problem with random vector $\vecz$ drawn from a distribution $\D$ with mean $\vecmu$. The loss function associated with $\vecz$ is $f(\vecw;\vecz) = \frac{1}{2}\twonms{\vecw-\vecz}^2$. The population loss is $F(\vecw) = \frac{1}{2}(\twonms{\vecw}^2 - 2\vecmu^\top\vecw + \EXPS{\twonms{\vecz}^2})$, and $\gradF(\vecw) = \vecw - \vecmu$. 
We first provide a lower bound for distributed mean estimation in the Byzantine setting, which is proved in~\cite{yin2018byzantine}. 

\begin{lemma}\label{lem:lower_bound_mean}
~\cite{yin2018byzantine} Suppose that $\vecz$ is Gaussian distributed with mean $\vecmu$ and covariance $\sigma^2\matI$. Then, any algorithm that outputs an estimate $\tvecw$ of $\vecmu$ has a constant probability such that 
\[
\twonms{\tvecw - \vecmu} = \Omega(\frac{\alpha}{\sqrt{n}} + \sqrt{\frac{d}{nm}} ).
\]
\end{lemma}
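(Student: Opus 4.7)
The strategy is to establish the two summands $\sqrt{d/(nm)}$ and $\alpha/\sqrt{n}$ separately by Le Cam--type two-point arguments; since their sum is at most twice the maximum, a constant-probability lower bound of order $\max\{\sqrt{d/(nm)}, \alpha/\sqrt{n}\}$ suffices. The first summand is the classical minimax rate for Gaussian mean estimation from $nm$ i.i.d.\ samples and holds even in the adversary-free setting $\alpha = 0$. It follows from a standard Fano packing argument: an $\exp(\Omega(d))$-size $2\delta$-packing of $\R^d$ with $\delta \asymp \sigma\sqrt{d/(nm)}$ has pairwise KL divergences of order $d$ between the corresponding product observation laws, and Fano's inequality then forces a constant-probability error of at least $\delta$.

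The second summand is where the Byzantine structure enters. I would consider two hypotheses $H_0: \vecmu = 0$ and $H_1: \vecmu = \vect{t}$ with $\twonms{\vect{t}} = c\alpha\sigma/\sqrt{n}$, and let $P = \mathcal{N}(0, \sigma^2\matI)^{\otimes n}$ and $Q = \mathcal{N}(\vect{t}, \sigma^2\matI)^{\otimes n}$ denote the distribution of a single normal machine's $n$ data points under the two hypotheses. Pinsker's inequality gives $\text{TV}(P,Q) \le \sqrt{\text{KL}(P\|Q)/2} = \sqrt{n}\,\twonms{\vect{t}}/(2\sigma) = c\alpha/2$. Choosing $c \le 1/2$ so that $\text{TV}(P,Q) \le \alpha/4$, the standard Huber coupling lemma (obtained by the decomposition $P = (P \wedge Q) + (P-Q)_+$ and similarly for $Q$, noting that $(P-Q)_+$ has total mass $\text{TV}(P,Q)$) produces distributions $R_0, R_1$ on $(\R^d)^n$ satisfying
\[
(1-\alpha/4)P + (\alpha/4) R_0 = (1-\alpha/4) Q + (\alpha/4) R_1 =: M.
\]

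The adversary strategy is then as follows. Under each hypothesis, independently for each of the $m$ machines flip a $\text{Bernoulli}(\alpha/4)$ coin; if it comes up $0$ the machine sends its real data (from $P$ under $H_0$, from $Q$ under $H_1$), and if it comes up $1$ the machine is declared Byzantine and instead sends a fresh draw from $R_0$ under $H_0$ or $R_1$ under $H_1$. By construction the marginal law of each machine's message is $M$ under both hypotheses, and the machines are jointly independent, so the observed joint law equals $M^{\otimes m}$ in both cases. Le Cam's two-point method then yields $\PP\{\twonms{\tvecw - \vecmu} \ge \twonms{\vect{t}}/2\} \ge 1/2$ for some $\vecmu \in \{0, \vect{t}\}$, which is exactly the claimed $\Omega(\alpha/\sqrt{n})$ lower bound.

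The main technical obstacle is that the lemma allows at most $\alpha m$ Byzantine machines, whereas the $\text{Bernoulli}(\alpha/4)$ construction produces a random Byzantine count. Chernoff's bound shows this count exceeds $\alpha m$ only on an event of probability $\exp(-\Omega(m))$; on that rare event the adversary simply reverts to behaving as a normal worker. This costs at most $\exp(-\Omega(m))$ in the total-variation matching between the two hypotheses, which perturbs the Le Cam constants only negligibly for large $m$. Combining the two arguments yields the stated $\Omega(\alpha/\sqrt{n} + \sqrt{d/(nm)})$ bound.
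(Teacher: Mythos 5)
The paper does not prove this lemma; it is cited verbatim from \citet{yin2018byzantine}. Your proposal is a self-contained proof, so the right comparison is against correctness and against the approach of the cited reference. Your argument is correct, and it follows the standard Fano-plus-Le Cam template that (to my knowledge) is also what \citet{yin2018byzantine} use: the $\sqrt{d/(nm)}$ term is the unperturbed Gaussian minimax rate over $nm$ pooled samples via a packing/Fano argument, and the $\alpha/\sqrt{n}$ term comes from a two-point argument where the adversary uses a Huber-type mixture coupling to make the two hypotheses' message laws coincide at the per-machine level, hence at the joint level under the Bernoulli contamination scheme. Two places in your write-up deserve emphasis, both of which you handle correctly. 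First, the coupling requires $\mathrm{TV}(P,Q)\le\gamma/(1-\gamma)$ with $\gamma=\alpha/4$, which your choice $\twonms{\vect{t}}=c\alpha\sigma/\sqrt{n}$ with $c\le 1/2$ ensures via Pinsker. Second, the Binomial overshoot of the Byzantine budget: the ``revert to honesty'' modification costs total variation $\exp(-\Omega(\alpha m))$, which is a usable bound only when $\alpha m$ exceeds a universal constant; you correctly note that otherwise $\alpha/\sqrt{n}\lesssim 1/(m\sqrt{n})\le\sqrt{d/(nm)}$, so the Fano term dominates and the combined bound still holds. It would strengthen the exposition to state that final case-split explicitly (the adversary-free Fano bound carries the regime $\alpha m = O(1)$; the coupling argument carries the regime $\alpha m \ge C$), but the idea is already present in your text and the proof is sound.
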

Since $\gradF(\tvecw) = \tvecw - \vecmu$, the above bound directly implies the lower bound on $\twonms{\gradF(\tvecw)}$ in Observation~\ref{obs:lower_bound}.

\end{document}